\newcommand{\Hm}{\mathbb{H}}
\newcommand{\bsy}{\boldsymbol}
\newcommand{\mY}{\mathbf{Y}}
\newcommand{\mI}{\mathbf{I}}
\newcommand{\mM}{\mathbf{M}}
\newcommand{\mP}{\mathbf{P}}
\newcommand{\mR}{\mathbf{R}}
\newcommand{\mO}{\mathbf{O}}
\newcommand{\mT}{\mathbf{T}}
\newcommand{\mU}{\mathbf{U}}
\newcommand{\mV}{\mathbf{V}}
\newcommand{\mW}{\mathbf{W}}
\newcommand{\diag}{\textsf{diag}}
\renewcommand{\vec}{\textsf{vec}}
\newcommand{\msig}{\mathbf{\Sigma}}
\newcommand{\round}{\textsf{round}}
\title{Model Stealing for Any Low-Rank Language Model}
\author{Allen Liu \thanks{Email: \texttt{cliu568@mit.edu} This work was supported in part by an NSF Graduate Research Fellowship, a Hertz Fellowship, and a Citadel GQS Fellowship.} \and Ankur Moitra \thanks{Email: \texttt{moitra@mit.edu} This work was supported in part by a Microsoft Trustworthy AI Grant, an ONR grant, and a David and Lucile Packard Fellowship.}}
\date{\today}
\begin{document}

\maketitle
\thispagestyle{empty}

\begin{abstract}

Model stealing, where a learner tries to recover an unknown model via carefully chosen queries, is a critical problem in machine learning, as it threatens the security of proprietary models and the privacy of data they are trained on.  In recent years, there has been particular interest in stealing large language models (LLMs).  In this paper, we aim to build a theoretical understanding of stealing language models by studying a simple and mathematically tractable setting.  We study model stealing for Hidden Markov Models (HMMs), and more generally low-rank language models.  

We assume that the learner works in the conditional query model, introduced by Kakade, Krishnamurthy, Mahajan and Zhang \cite{kakade2024learning}. Our main result is an efficient algorithm in the conditional query model, for learning any low-rank distribution.  In other words, our algorithm succeeds at stealing any language model whose output distribution is low-rank. This improves upon the result in  \cite{kakade2024learning} which also requires the unknown distribution to have high ``fidelity" \--- a property that holds only in restricted cases.  There are two key insights behind our algorithm: First, we represent the conditional distributions at each timestep by constructing barycentric spanners among a collection of vectors of exponentially large dimension. Second, for sampling from our representation, we iteratively solve a sequence of convex optimization problems that involve projection in relative entropy to prevent compounding of errors over the length of the sequence. This is an interesting example where, at least theoretically, allowing a machine learning model to solve more complex problems at inference time can lead to drastic improvements in its performance. 


\end{abstract}

\newpage

\setcounter{page}{1}

\section{Introduction}

Proprietary machine learning models are often highly confidential. Not only are their weights not publicly released, but even their architecture and hyperparameters used in training are kept a closely guarded secret. And yet these models are often deployed as a service, allowing users to make queries to the model and receive answers. These answers can take the form of labels or completions of prompts, and sometimes a model will even report additional information such as its confidence scores. This raises a natural question:

\begin{aquestion}
    Are these black-box models actually secure, or is it possible to reverse engineer their parameters or replicate their functionality just from query access to them?
\end{aquestion}

\noindent This task is called {\em model stealing} and it threatens the security of proprietary models and the privacy of data they are trained on. Beyond nefarious reasons, it can also be used in {\em model distillation} \cite{ma2021undistillable}, where we have trained a large and highly complex model and we want to transfer its knowledge to a much smaller model. It can also be a useful tool for identifying vulnerabilities, as those are often inherited by stolen models. 

In any case, model stealing continues to be a very active area of research. The influential work in \cite{tramer2016stealing} showed that there are simple and efficient attacks on popular models like logistic regression, decision trees and deep neural networks that often work in practice. Since then, many new attacks and defenses have been formulated \cite{he2021model, he2021stealing, reith2019efficiently, wang2018stealing, juuti2019prada, oliynyk2023know, orekondy2019prediction, wang2020information}. There are also approaches based on embedding watermarks \cite{jia2021entangled, zhao2023protecting, li2022defending} that make it possible to detect when one model has been stolen from another. In recent years, there has been particular interest in stealing large language models (LLMs). Various works have shown how to steal isolated components of a language model such as the decoding algorithm \cite{naseh2023stealing}, prompts used to fine-tune the model \cite{sha2024prompt}, and even the entire weight matrix of the last layer (the embedding projection layer) \cite{carlini2024stealing}.

In this work, our main interest will be in theoretical foundations for stealing language models. As is all too familiar, proving rigorous end-to-end guarantees when working with modern machine learning models with all their bells and whistles seems to be an extremely difficult task. For example, while we can understand the training dynamics on multilayer neural networks in terms of gradient flow in the Wasserstein space of probability distributions \cite{mei2019mean, nguyen2023rigorous}, it has turned out to be quite difficult to analyze these dynamics except in simplified settings with a high degree of symmetry. Even worse, there are strong lower bounds for learning deep neural networks \cite{klivans2009cryptographic} even with respect to nice input distributions \cite{goel2020superpolynomial, diakonikolas2020algorithms, goel2020statistical}. The task of reasoning about modern language models seems no easier, as they are built on transformers \cite{attention} with many building blocks such as word embeddings, positional encodings, queries, keys, values, attention, masking, feed-forward neural networks and layer normalization. 

Nevertheless there are often simplified models that abstract important features of more complex models and give us a sandbox in which to try to find theoretical explanations of empirical phenomena. For example, analyzing the dynamics of gradient descent when training a deep neural network is notoriously difficult. But in an appropriate scaling limit, and when the network is wide enough, it can be approximated through the neural tangent kernel \cite{jacot2018neural}. For recurrent neural networks, a popular approach is to analyze gradient descent on linear dynamical systems instead \cite{hardt2018gradient}. Likewise for language models, it is natural to work with Hidden Markov Models (HMMs), which are in some sense the original language model, dating back to the work of Claude Shannon in 1951 \cite{shannon1951prediction} and were the basis of other early natural language processing systems including the IBM alignment models. More broadly, we can consider a generalization called low-rank language models (Definition~\ref{def:low-rank}). This brings us to our main questions:

\begin{aquestion}
    Is there an efficient algorithm for stealing HMMs from query access? What about more generally for low-rank language models?
\end{aquestion}

\noindent These questions were first introduced and studied in an exciting recent work of Kakade, Krishnamurthy, Mahajan and Zhang \cite{kakade2024learning}. However their motivation and framing was somewhat different, as we will explain.

\subsection{Main Results}

Formally, we view a language model as a distribution $\Hm$ over $\calO^T$ for some alphabet $\calO$ and sequence length $T$.  For simplicity, we treat the sequence length as fixed.  Following Kakade, Krishnamurthy, Mahajan and Zhang \cite{kakade2024learning}, the rank of the distribution generated by a language model is defined as follows:

\begin{definition}\label{def:low-rank}[Low Rank Distribution]
A distribution $\Hm$ over $\calO^T$ for alphabet $\calO$ of size $O$ and sequence length $T$ is rank $S$ if for all $t < T$, the $O^{T - t} \times O^t$ matrix, $\mM^{(t)}$,  with entries equal to $\Pr_{\Hm}[f|h]$ for $f \in \calO^{T - t}$ and $h \in \calO^t$ has rank at most $S$.
\end{definition}

\noindent In other words, a distribution is rank-$S$ if for any $t < T$, the information in the prefix of length $t$ can be embedded in  an $S$-dimensional space such that the distribution of the future tokens can be represented as a linear function of this embedding.  We note that low-rank distributions are expressive and encompass distributions generated by a Hidden Markov Model (HMM) with $S$ hidden states (see Fact~\ref{fact:low-rank}). 

Next, we formalize the setup for studying model stealing.  We allow the learner to make {\em conditional queries} \--- that is, the learner can specify a history of observations, and then receives a random sample from the conditional distribution on the future observations.  Formally, we have the following definition: 

\begin{definition}[Conditional Query]
The learner may make conditional queries to a distribution $\Hm$ by querying a string $h \in \calO^t$ where $0 \leq t < T$.  Upon making this query, the learner obtains a string $f$ of length $T - t$ drawn from the distribution $\Pr_{\Hm}[f|h]$.
\end{definition}

In this model, our goal is to design an algorithm that makes a total  number of queries that is polynomial in $S, O , T$ and learns an efficiently samplable distribution that is $\epsilon$-close in total variation distance to $\Hm$. This conditional query model was recently introduced by Kakade, Krishnamurthy, Mahajan and Zhang \cite{kakade2024learning}. Their motivation was two-fold: First, while learning an HMM from random samples is known to be computationally hard \cite{mossel2005learning}, in principle one can circumvent these barriers if we are allowed conditional samples. Second, a solution to their problem would generalize Angluin's classic $L^*$ algorithm which learns deterministic finite automata from membership queries \cite{angluin1987learning}. In terms of results, Kakade, Krishnamurthy, Mahajan and Zhang \cite{kakade2024learning} introduced a notion that they called {\em fidelity} and gave a polynomial time algorithm to learn any low-rank distribution (and thus any HMM) which has high fidelity through conditional queries. However this property does not always hold. Thus, their main question still remains: Is it possible to learn arbitrary low-rank distributions through conditional queries? Here we resolve this question in the affirmative. We show:

\begin{theorem}\label{thm:full-learning}
Assume we are given conditional query access to an unknown rank $S$ distribution $\Hm$ over $\calO^T$ where $|\calO| = O$.  Then given a parameter $0 < \eta < 1$, there is an algorithm that takes $\poly(S,O, T, 1/\eta)$ conditional queries and running time and with probability $1 - \eta$ outputs a description of a distribution $\Hm'$ such that $\Hm'$ is $\eta$-close in TV distance to $\Hm$. Moreover there is an algorithm that samples from $\Hm'$ in $\poly(S,O, T, \log(1/\eta))$ time.
\end{theorem}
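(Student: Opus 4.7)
The plan is to combine the two insights highlighted in the introduction: (i) build, for each timestep $t$, a compact representation of the future-conditional distributions via a barycentric spanner of size $S$ among the rows of $\mM^{(t)}$, and (ii) use these representations for sampling by solving a sequence of $T$ convex programs that project onto a feasible set in relative entropy, so that errors accumulate additively rather than multiplicatively along the length of the sequence.

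First, for each $t = 0, 1, \ldots, T-1$, I would maintain a list of anchor histories $h_1^{(t)}, \ldots, h_S^{(t)} \in \calO^t$ whose future-conditional vectors $v_i^{(t)} := \Pr_{\Hm}[\cdot \mid h_i^{(t)}]$ form an approximate barycentric spanner of $\{\Pr_{\Hm}[\cdot \mid h] : h \in \calO^t\}$. Low-rankness of $\mM^{(t)}$ guarantees $S$ anchors suffice. I would build the spanners inductively: given anchors at time $t$, the candidates at time $t+1$ are obtained by appending each $o \in \calO$, yielding $OS$ candidates, on which I run an Awerbuch--Kleinberg style routine that only needs to compare volumes of $S$-tuples and perform coefficient-swap updates. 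The crucial subroutine is expressing a candidate vector $v$ approximately as $\sum_i \alpha_i v_i^{(t+1)}$. Despite the ambient dimension being $O^{T-t-1}$, this is possible using only conditional queries: draw a polynomial-size pool of future continuations $f \in \calO^{T-t-1}$ via conditional sampling from the anchors and from $v$ itself, estimate $\Pr[f \mid h]$ for each sampled $f$ by further conditional queries, and solve the resulting $\poly(S,O,T)$-sized linear system. Matrix concentration then ensures this restricted system preserves the span and that the recovered coefficients are close to the true ones whenever the true vector lies in the span.

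Second, for the sampling algorithm that produces $\Hm'$, at any partial history $h_{1:t}$ I would compute approximate spanner coefficients $\widehat{\alpha}$ such that $\Pr[\cdot \mid h_{1:t}] \approx \sum_i \widehat{\alpha}_i v_i^{(t)}$, and then extract the one-step distribution $p_t \in \Delta(\calO)$ by marginalizing. Because $\widehat{\alpha}$ is only approximate and each $v_i^{(t)}$ is known only through samples, the resulting $p_t$ may not even be a valid probability distribution. I would therefore replace it by the distribution closest in relative entropy to a small convex polytope of distributions that are empirically consistent with conditional queries at $h_{1:t}$ (for instance, having the correct one-token marginal as estimated to high accuracy at this step). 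The Pythagorean inequality for KL, combined with Pinsker, converts step-$t$ estimation accuracy of order $(\eta/T)^2$ into TV accuracy of order $\eta/T$; the chain rule for TV then bounds the overall distance $d_{\mathrm{TV}}(\Hm',\Hm)$ by $\eta$ over all $T$ steps.

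The main obstacle, I expect, is quantitatively tying the two pieces together. The barycentric spanner property only controls the magnitude of the coefficients; but the anchors themselves are learned inductively, so errors in the spanner at time $t$ feed into the candidates used at time $t+1$, and one must argue that approximate spannerhood is preserved along the induction. In particular, I would need a perturbation lemma showing that if the true rows of $\mM^{(t)}$ are well-spanned by $S$ anchors and we only have sample estimates of those anchors, then the Awerbuch--Kleinberg update still produces an approximate spanner with bounded coefficients after extension, so that the pool of candidates at time $t+1$ does span all true future-conditional distributions up to small error. The relative-entropy projection is what ultimately saves us in the sampling phase: it guarantees that even when the learned representation is mis-specified, the produced next-token distribution is close to the \emph{true} conditional rather than merely the one encoded by the noisy spanner, which is exactly the ingredient needed to prevent error compounding across the $T$ steps.
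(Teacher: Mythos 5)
Your high-level architecture matches the paper's: build barycentric spanners of size $S$ at each timestep by dimension-reducing the exponentially long future-conditional vectors to a polynomial number of randomly sampled futures, then sample one token at a time while tracking coefficients in the learned basis and using a relative-entropy projection to prevent multiplicative error blowup. However, there is a genuine and load-bearing gap in the sampling phase, in two parts.

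First, you project the \emph{wrong object}. You propose to take the (possibly-invalid) one-step marginal $p_t \in \Delta(\calO)$ and replace it by its KL-projection onto a small polytope of distributions over $\calO$. But the Pythagorean inequality (Fact~\ref{fact:KL-basic}) is useful precisely because it controls the drift of the \emph{state} --- i.e., the ambient representation of the full future conditional $\Pr_{\wh{\Hm}}[\cdot \mid x_{1:t}]$ --- as it evolves across $T$ timesteps. Cleaning up only the one-step marginal at each step does nothing to stop the coefficient vector $\bsy{\alpha}$ from degrading, and it is exactly the coefficient vector that is re-used at the next step. In the paper, the optimization problem in Line~\ref{line:convex-opt} of Algorithm~\ref{alg:sampling-procedure} projects the full reduced representation $\bsy{\alpha}^\top \mR^{(t+1)}$ (a proxy for $\bsy{\alpha}^\top \mM^{(t+1)}[\calH^{(t+1)},:]$, the full conditional future distribution) onto the set of bounded-coefficient linear combinations of spanner vectors. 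The Pythagorean argument (Claim~\ref{claim:KL-projection}, Claim~\ref{claim:key-closeness-p1}) is then applied with the reference point being the \emph{true unknown} conditional, which approximately lies in that convex set; this is what makes the per-step error additive in Lemma~\ref{lem:key-closeness}. Projecting $p_t$ over $\calO$ gives no such contraction for the $S$-dimensional state.

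Second, your proposed polytope --- ``distributions that are empirically consistent with conditional queries at $h_{1:t}$'' --- presupposes access to conditional queries \emph{at sampling time}, for an arbitrary, potentially exponentially long list of prefixes $h_{1:t}$. The theorem requires that after learning, sampling uses no further queries to $\Hm$; Algorithm~\ref{alg:sampling-procedure} is a pure post-processing of the learned representation. A related under-specification: you say ``I would compute approximate spanner coefficients $\widehat\alpha$ such that $\Pr[\cdot \mid h_{1:t}] \approx \sum_i \widehat\alpha_i v_i^{(t)}$,'' but at sampling time you do not have $\Pr[\cdot \mid h_{1:t}]$. The paper handles this by propagating $\bsy{\alpha}$ forward from $t=0$ via an explicit linear step (renormalize by the sampled token probability, apply the stored transition vectors $\mT_o^{(t)}$) followed by the KL projection, never needing to ``re-fit'' coefficients directly to an unknown conditional. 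Finally, the paper also needs a technical workaround you don't mention --- the \emph{truncated} KL of Definition~\ref{def:truncated-kl-function} and the positivity preprocessing of Lemma~\ref{lem:pdf-estimation} --- because the reduced vectors may have tiny or even negative entries where untruncated KL is undefined; without that, the projection step is ill-posed.
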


Note that crucially, the algorithm only makes conditional queries for learning the representation of $\Hm'$.  Once we have this learned representation, we may draw as many samples as we want without making any more queries to the original distribution $\Hm$.

\subsection{Discussion} 

Theorem~\ref{thm:full-learning} shows that we can efficiently learn \emph{any} low-rank distribution via conditional queries. Thus, we can view our results as showing that in some sense, the rank of a distribution can be a useful proxy for understanding the complexity of model stealing, similar to how complexity measures, such as Bellman rank \cite{jiang2017contextual} and its relatives \cite{foster2021statistical}, are useful for understanding the statistical complexity of learning a near optimal policy in reinforcement learning.

There is a key conceptual insight driving our algorithm. One of the challenges in learning sequential distributions is that the error can grow exponentially in the sequence length $T$. In particular if we imagine sampling from $\Hm$ one token at a time,  the low-rank structure ensures that we only need to keep track of an $S$-dimensional hidden state at each step.  However, each step involves multiplication by a change-of-basis matrix and these repeated multiplications cause the error to grow exponentially.  The key to mitigating this error blowup is that we combine each change-of-basis with a projection step, where we solve a convex optimization problem that performs a projection with respect to KL divergence.  Crucially, projection in KL divergence has a contractive property (see Fact~\ref{fact:kl-basic-intro}) that does not hold for other natural measures of distance between distributions, such as TV distance.  We give a more detailed overview of our algorithm in Section~\ref{sec:tech-overview}. This is an interesting example where allowing a machine learning model to solve more complex problems at inference time can lead to drastic improvements in its performance. Of course, phrased in general terms, this is a driving philosophy behind OpenAI's o1 model. But so far we have little theoretical understanding of the provable benefits of allowing more compute at inference time.  

\subsection{Related Work}
There has been a long line of work on learning HMMs from random samples. Mossel and Roch \cite{mossel2005learning} gave the first polynomial time algorithms that work when under appropriate full rankness conditions on the transition and observation matrices. Other works gave spectral \cite{hsu2012spectral} and method of moments based approaches \cite{anandkumar2012method}. Learning HMMs can also be thought of as a special case of learning phylogenetic trees \cite{cryan2001evolutionary, mossel2005learning}. Other approaches assume the output distributions belong to a parametric family \cite{kontorovich2013learning}, or study quasi-HMMs \cite{huang2015minimal}. 

The main computational obstruction in this area is that HMMs, without full rankness conditions, can encode noisy parities \cite{mossel2005learning}, which are believed to be computationally hard to learn. In the overcomplete setting, where the hidden state space is allowed to be larger than the observation space, there are ways around these lower bounds. First, one can aim to predict the sequence of observations, rather than learning the parameters \cite{sharan2018prediction}. Under a natural condition called multi-step observability one can get quasi-polynomial time algorithms \cite{golowich2023planning}. Second, one can make structural assumptions that the  transition matrices are sparse, well-conditioned, and have small probability mass on short cycles \cite{sharan2017learning}. Alternatively there are polynomial time algorithms that work under the assumption that the transition and observation matrices are smoothed \cite{bhaskara2019smoothed}. 

There are also related models called linear dynamical systems where the hidden states and observations are represented as vectors. In contrast, in HMMs the hidden states and observations take on only a finite set of possible values. There is a long line of work on learning linear dynamical systems too \cite{hardt2018gradient, oymak2019non, tsiamis2019finite, sarkar2019nonparametric, simchowitz2019learning, bakshi2023new, chen2022learning, bakshi2023tensor}.  However, these algorithms require various structural assumptions, and even the weakest ones, namely condition-number bounds on the called observability and controllability matrices, are known to be necessary \cite{bakshi2023new}. 



The conditional query model has also been studied for other statistical problems such as testing discrete distributions and learning juntas  \cite{chakraborty2013power, canonne2015testing, bhattacharyya2018property, chen2021learning}.  However, in most of these settings, the goal of studying the conditional query model is to obtain improved statistical rates rather than sidestep computational hardness. We remark that there are also many classic examples of problems in computational learning theory where, when allowed to make queries, there are better algorithms \cite{jackson1997efficient} than if we are only given passive samples.


\section{Basic Setup}



\noindent Recall that we have conditional query access to some unknown distribution $\Hm$ over $\calO^T$ for some alphabet $\calO$.  We also assume that $\Hm$ has rank $S$ (recall Definition~\ref{def:low-rank}).  Our goal will be to learn a description of $\Hm$ via conditional queries. Let us first formally define Hidden Markov Models (HMMs): 

\begin{definition}[Hidden Markov Model]
An HMM $\Hm$ with  state space $\calS$, $|\calS| = S$, and observation space $\calO$, $|\calO| = O$, is specified by an initial distribution $\mu$ over the states, a transition matrix $\mT \in \R^{S \times S}$, and an emission matrix $\mO \in \R^{O \times S}$.  For a given sequence length $T$, the probability of generating a given sequence $x_1, \dots , x_T$ with each $x_i \in \calO$ is
\[
\Pr[x_1, \dots, x_T] = \sum_{s_1, \dots , s_{T+1} \in \calS^{T + 1}} \mu(s_1) \prod_{t=1}^T  \mO_{x_t,s_t} \mT_{s_{t+1},s_t} \,.
\]
\end{definition}

Next we give a basic observation (see \cite{kakade2024learning}) that the class of rank $S$ distributions contains the set of all distributions generated by an HMM with $S$ hidden states.


\begin{fact}\cite{kakade2024learning}\label{fact:low-rank}
Let $\Hm$ be a distribution over $\calO^T$ generated by an HMM with $S$ hidden states.  Then $\Hm$ is a rank at most $S$ distribution.
\end{fact}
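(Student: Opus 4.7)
The plan is to exhibit an explicit rank-$S$ factorization of $\mM^{(t)}$ by using the Markov property of the HMM at time $t+1$. The key observation is that, conditioned on the hidden state $s_{t+1}$, the prefix $h \in \calO^t$ and the suffix $f \in \calO^{T-t}$ are independent.

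Concretely, I would first marginalize over $s_{t+1}$ in the HMM definition. The product $\mu(s_1) \prod_{u=1}^{T} \mO_{x_u,s_u} \mT_{s_{u+1},s_u}$ splits into factors involving only $(s_1,\dots,s_{t+1})$ and $h$, and factors involving only $(s_{t+1},\dots,s_{T+1})$ and $f$. Summing over all hidden states other than $s_{t+1}$ yields
\[
\Pr_{\Hm}[h, f] = \sum_{s \in \calS} \Pr_{\Hm}[h, s_{t+1}=s]\cdot \Pr_{\Hm}[f \mid s_{t+1}=s].
\]
Let $\mathbf{J}$ be the $O^{T-t} \times O^t$ matrix with $\mathbf{J}_{f,h} = \Pr_{\Hm}[h,f]$. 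The identity above exhibits a factorization $\mathbf{J} = UV$ with $U \in \R^{O^{T-t}\times S}$ given by $U_{f,s} = \Pr_{\Hm}[f \mid s_{t+1}=s]$ and $V \in \R^{S \times O^t}$ given by $V_{s,h} = \Pr_{\Hm}[h, s_{t+1}=s]$, so $\mathrm{rank}(\mathbf{J}) \leq S$.

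To pass from $\mathbf{J}$ to $\mM^{(t)}$, I would simply rescale columns. For each $h$ with $\Pr_{\Hm}[h] > 0$, dividing the $h$-th column of $\mathbf{J}$ by $\Pr_{\Hm}[h]$ produces the $h$-th column of $\mM^{(t)}$. For histories $h$ with $\Pr_{\Hm}[h] = 0$, the corresponding column of $\mathbf{J}$ is already identically zero, and we adopt the natural convention of setting $\Pr_{\Hm}[f \mid h] = 0$ so the column of $\mM^{(t)}$ is zero as well. Column rescaling cannot increase rank, so $\mathrm{rank}(\mM^{(t)}) \leq \mathrm{rank}(\mathbf{J}) \leq S$, as required.

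There is essentially no hard step here: the only substantive ingredient is the Markov-based factorization of the joint probability, which is immediate from the product structure in the HMM definition; everything else is elementary linear algebra (a rank-$S$ decomposition of the joint matrix, followed by a column rescaling). The proof would, if anything, mainly be a careful bookkeeping of which summation variables belong to the ``past'' block and which belong to the ``future'' block.
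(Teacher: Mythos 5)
Your proof is correct. The paper itself does not prove Fact~\ref{fact:low-rank}; it is stated as a known fact with a citation to \cite{kakade2024learning}, so there is no internal proof to compare against. The argument you give is the standard one: conditioning on the hidden state $s_{t+1}$ makes the prefix and suffix independent, which yields the rank-$S$ factorization
\[
\Pr_{\Hm}[h, f] = \sum_{s \in \calS} \Pr_{\Hm}[h,\, s_{t+1}=s]\cdot \Pr_{\Hm}[f \mid s_{t+1}=s],
\]
and column rescaling by $\Pr_{\Hm}[h]$ (with the zero-column convention when $\Pr_{\Hm}[h]=0$) converts the joint-probability matrix into $\mM^{(t)}$ without increasing rank. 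One minor presentational point: since the event $\{s_{t+1}=s\}$ may have probability zero for some $s$, it is slightly cleaner to define $U_{f,s}$ directly as the sum $\sum_{s_{t+2},\dots,s_{T+1}} \prod_{u=t+1}^{T} \mO_{x_u,s_u}\mT_{s_{u+1},s_u}$ (the emission probability of the suffix started from hidden state $s$) rather than as a conditional probability, but this is a matter of notation and does not affect correctness.
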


\paragraph{Notation} Throughout this paper, we will use the same global notation for the unknown distribution $\Hm$ with rank $S$, observation space $\calO$ of size $O$ and sequence length $T$.   
We also write $\Pr_{\Hm}[f|h]$ for sequences $f$ of length less than $T - t$.  This will be used to denote the probability that conditioned on the prefix $h$, the observations immediately following $h$ are those in $f$.


 We use the following notation for prefixes and conditional probabilities. For a distribution $\Hm$ on $\calO^T$ and $t < T$, we use $\Hm[:t]$ to denote the distribution induced by $\Hm$ on $t$-character prefixes.  For any prefix $h \in \calO^t$, we use $\Pr_{\Hm}[\cdot | h]$ to denote the distribution on futures $f \in \calO^{T - t}$ given by $\Pr_{\Hm}[f| h ]$.  Finally, for a string $h \in \calO^t$ and character $o \in \calO$, we use $h \vee o$ to denote the string of $t+1$ characters obtained by appending $o$ to $h$.

\section{Technical Overview}\label{sec:tech-overview}


Our algorithm is composed of two main parts: We work with a certain representation of the distribution and show how to estimate it from conditional samples. Then we give an algorithm that takes this learned representation and can generate samples. In most learning problems, estimating the parameters is the challenging part and sampling is often trivial. But in our case, designing the sampling algorithm, which involves solving a sequence of convex optimization problems, is one of the key ingredients.



\subsection{Idealized Representation}\label{sec:ideal-intro}
In this section, we introduce the representation that will be central to our learning algorithm.  First we consider an idealized setting where we ignore issues like sampling noise and the fact that we cannot afford to work with exponentially large vectors directly.  We include this subsection for pedagogical reasons as many of the ideas are already in \cite{kakade2024learning}.  Nevertheless, it will be useful setup for explaining our new contributions in the following subsections.

We slightly abuse notation and for $h \in \calO^t$, let $\Pr_{\Hm}[\cdot | h]$ denote the vector whose entries are $\Pr_{\Hm}[f | h]$ as $f$ ranges over all elements of $\calO^{T - t}$.  Now Definition~\ref{def:low-rank}  tells us that as $h$ ranges over $\calO^t$, all of the vectors $\Pr_{\Hm}[\cdot | h]$ are contained in an $S$-dimensional subspace.  Thus, we only need to store $S$ of these vectors to ``span" the whole space.      

\paragraph{Barycentric Spanners} This leads us to the notion of a barycentric spanner, which is the key building block in the representation we will use:

\begin{definition}[Barycentric Spanner]
For a collection of vectors $z_1, \dots , z_n \in \R^d$, we say another set of vectors $v_1, \dots , v_s \in \R^d$ is a $(C,\gamma)$-spanner for them if for all  $j \in [n]$, there are coefficients $c_1, \dots , c_s$ with $|c_i| \leq C$ such that
\[
\norm{z_j - (c_1v_1 + \dots + c_sv_s)}_{1} \leq \gamma \,.
\]
\end{definition}

Crucially, for any set of vectors that are exactly contained in a $S$-dimensional subspace, there is always a subset of them that form a $(1, 0 )$ barycentric spanner for the full collection.  


\begin{fact}\label{fact:exist-spanner-intro}
Let $v_1, \dots , v_n$ be arbitrary vectors in $\R^d$,  Then there exists a subset $S \subseteq [n]$ with $|S| \leq d$ such that $\{ v_i\}_{i \in S}$ form a $(1, 0)$ spanner for the collection $(v_1, \dots , v_n)$.  
\end{fact}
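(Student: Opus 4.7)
The plan is to use the classical determinant-maximization argument of Awerbuch and Kleinberg. Let $r = \dim \mathrm{span}(v_1, \dots, v_n) \leq d$. Fixing any basis of this span, each $v_i$ can be identified with a vector in $\R^r$, and it suffices to find a subset $S$ of size $r$ (which is at most $d$) that is a $(1,0)$-spanner for the collection.

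The key step is to choose $S = \{i_1, \dots, i_r\}$ to be a size-$r$ subset of $[n]$ that maximizes the quantity
\[
\left| \det\bigl( v_{i_1} \mid v_{i_2} \mid \cdots \mid v_{i_r} \bigr) \right|,
\]
where the vectors are viewed as columns in the fixed basis. Since the collection spans an $r$-dimensional space, there is at least one size-$r$ subset with nonzero determinant, so the maximizer $S$ has $|\det(M)| > 0$, where $M$ denotes the matrix with columns $v_{i_1}, \dots, v_{i_r}$.

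Now for any $j \in [n]$, expand $v_j = c_1 v_{i_1} + \dots + c_r v_{i_r}$ (uniquely, since the $v_{i_k}$ form a basis of the span). By Cramer's rule,
\[
c_k \;=\; \frac{\det(M^{(k,j)})}{\det(M)},
\]
where $M^{(k,j)}$ is obtained from $M$ by replacing its $k$-th column with $v_j$. But $M^{(k,j)}$ is exactly the matrix associated with the size-$r$ subset obtained from $S$ by swapping $i_k$ for $j$, so by the maximality of $|\det(M)|$ we have $|\det(M^{(k,j)})| \leq |\det(M)|$, hence $|c_k| \leq 1$. This gives the required $(1,0)$-spanner, and the error in the $\ell_1$ approximation is $0$ since the representation is exact.

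I do not expect a serious obstacle here; the only mild subtlety is that the determinant depends on the choice of basis for the span, but the ratios appearing in Cramer's rule are basis-independent, so the bound $|c_k|\leq 1$ is well-defined. A small cosmetic point is that if one wants $|S| = d$ rather than $|S| = r$, one can pad $S$ with arbitrary indices (whose coefficients are simply set to zero in every expansion).
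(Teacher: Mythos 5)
Your argument is correct, and it is the standard determinant-maximization proof of this folklore fact. The paper itself gives no proof of Fact~\ref{fact:exist-spanner-intro} (restated later as Fact~\ref{fact:spanner}), citing it as folklore, so there is no ``paper's proof'' to diverge from; your write-up supplies exactly the argument the citation is implicitly invoking. One remark worth making, since you framed things in terms of Awerbuch--Kleinberg: the paper's Claim~\ref{claim:compute-spanner-basic} is precisely the algorithmic relaxation of your argument --- instead of taking a global maximizer of the determinant (which could in principle take exponentially many swaps to reach), it performs swaps only when they at least double the volume, terminating in $\poly(n,d)\cdot\log(\sigma_1/\sigma_d)$ steps and settling for a $(2,0)$-spanner rather than the $(1,0)$-spanner that exact maximization yields. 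So your proof establishes the existence statement with the optimal constant $C=1$, while the paper's claim trades the constant for efficiency. All the small points you flagged (basis-independence of the Cramer ratios, padding $S$ to size $d$, the degenerate case where $j\in S$) are handled correctly.
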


Thus ideally, for each $0 < t < T$, we could store a subset $\calH^{(t)} \subseteq \calO^t$ with $|\calH^{(t)}| \leq S$ and such that  $\{ \Pr_{\Hm}[\cdot | h] \}_{h \in \calH^{(t)}}$ is a $(1,0)$-barycentric spanner for $\{ \Pr_{\Hm}[\cdot | h] \}_{h \in \calO^t}$.

\paragraph{Change-of-Basis} In addition to these spanning subsets, we will need to store some additional information about sampling probabilities and ``change-of-basis" between these spanning sets.  In particular we claim that if, in addition to the barycentric spanners, we also had the following information, then it would be enough to sample:
\begin{itemize}
\item The next character probabilities (under $\Hm$) for all of the elements of $\calH^{(t)}$
\item For each $0 \leq t < T$,  $o \in \calO$, $h \in \calH^{(t)}$, coefficients $\{\alpha_{h'}^{h \vee o}\}_{h' \in \calH^{(t+1)}}$ such that
\begin{equation}\label{eq:stored-coeffs}
\Pr_{\Hm}[\cdot | h \vee o] = \sum_{h' \in \calH^{(t+1)}} \alpha_{h'}^{h \vee o} \Pr_{\Hm}[\cdot | h']
\end{equation}
i.e. we know how to write $\Pr_{\Hm}[\cdot | h \vee o]$ as a linear combination of the vectors $\{\Pr_{\Hm}[\cdot | h']\}_{h' \in \calH^{(t+1)}}$. We refer to this as a ``change-of-basis". 
\end{itemize}

To see why these suffice, let's consider sampling a string $x = o_1o_2 \cdots $ one character at a time.  Assume we have sampled $x = o_1o_2 \cdots o_t$ so far.  Throughout the process we will maintain a set of coefficients  $\{ \alpha_h^{x}\}_{h \in \calH^{(t)}}$ such that
\begin{equation}\label{eq:spanner-representation-intro1}
\Pr_{\Hm}[\cdot | x ] \approx \sum_{h \in \calH^{(t)}} \alpha_h^{x} \Pr_{\Hm}[\cdot | h] \,.
\end{equation}
Since we know the next-character distributions for all of $h \in \calH^{(t)}$, the above allows us to exactly compute the next-character distribution for $x$.  After sampling a character, say $o$ from this distribution, we can then re-normalize to obtain
\begin{equation}\label{eq:conditional-probs-intro}
\Pr_{\Hm}[\cdot | x \vee o ] \approx \sum_{h \in \calH^{(t)}} \alpha_h^{x} \frac{\Pr_{\Hm}[o | h]}{\Pr_{\Hm}[o | x ]} \Pr_{\Hm}[\cdot | h \vee o] \,.
\end{equation}
We can then substitute \eqref{eq:stored-coeffs} into the above to rewrite the right hand side as a linear combination of the vectors $\{\Pr_{\Hm}[\cdot | h']\}_{h' \in \calH^{(t+1)}}$.  And now we can iterate this process again to sample the next character. The key point is that once we have a barycentric spanner, we do not need to store the coefficients $\alpha_h^{x}$ for each history $x$ but rather can track how they evolve.

\paragraph{Challenges} 
If we want to turn the above representation into a learning algorithm, there are some major obstacles. The vectors $\Pr_{\Hm}[\cdot | h]$ are exponentially large, and we only have approximate (but not exact) access to them.  When our estimates are noisy, the sampling approach above has issues as the error may grow multiplicatively after each iteration.  


\subsection{Learning the Representation}\label{sec:learning-intro}

\paragraph{Simulating pdf access to $\wh{\Hm}$ that is close to $\Hm$} Our first step is to show that, by using conditional queries, we can simulate exact p.d.f. access to a distribution $\wh{\Hm}$ that is $\eps$-close to $\Hm$ in TV distance for any inverse-polynomially small $\eps$ . 
\begin{lemma}[Informal, see Lemma~\ref{lem:pdf-estimation}]\label{lem:density-access-informal}
There is a distribution $\wh{\Hm}$ such that for all $h \in \calO^t$, the conditional distributions $\Pr_{\Hm}[\cdot | h]$ and $\Pr_{\wh{\Hm}}[\cdot | h]$ are $\eps$-close, and we can simulate query access to the exact p.d.f. of all conditional distributions of $\wh{\Hm}$.
\end{lemma}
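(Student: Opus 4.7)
The plan is to build $\wh{\Hm}$ implicitly through its next-character conditional distributions, which can be estimated on demand using conditional queries. For any prefix $h \in \calO^t$, I would draw $N = \poly(O, T, 1/\eps, \log(1/\eta))$ conditional queries at $h$, extract only the first character of each returned future, and let $\wh{p}(\cdot \mid h)$ be the resulting empirical distribution over $\calO$. A standard Chernoff--Hoeffding bound shows that with probability $1 - \delta$ the estimate satisfies $\mathrm{TV}(\wh{p}(\cdot \mid h), \Pr_{\Hm}[\cdot \mid h]) \leq \eps/T$. To get a clean representation I would round entries below a threshold $\tau = \poly(\eps/(OT))$ to zero and renormalize, producing $\wh{q}(\cdot \mid h)$; this rounding changes the estimate by only $O(\eps/T)$ in TV distance while ensuring the support is explicit and not polluted by sampling noise.

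Next, I would \emph{define} $\wh{\Hm}$ by the chain rule, $\Pr_{\wh{\Hm}}[x_1 \cdots x_T] = \prod_{t=1}^{T} \wh{q}(x_t \mid x_1 \cdots x_{t-1})$. By construction each $\wh{q}(\cdot \mid h)$ is a distribution over $\calO$, so $\wh{\Hm}$ is automatically a valid distribution on $\calO^T$, and to simulate exact p.d.f.\ access to $\Pr_{\wh{\Hm}}[f \mid h]$ I run the estimation subroutine for each of the $T - |h|$ prefixes along the path from $h$ to $h \vee f$ and multiply the resulting per-step probabilities. To make $\wh{\Hm}$ a single well-defined object rather than a random function that changes between evaluations, I would fix the random coins used by the subroutine (or cache results keyed on $h$) so that the same history always returns the same $\wh{q}(\cdot \mid h)$.

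Finally, to control $\mathrm{TV}(\Pr_{\Hm}[\cdot \mid h], \Pr_{\wh{\Hm}}[\cdot \mid h])$ I would use a hybrid argument along the sequence length: replacing the true conditional $\Pr_{\Hm}[\cdot \mid h \vee \cdots]$ by $\wh{q}(\cdot \mid h \vee \cdots)$ one step at a time contributes at most $O(\eps/T)$ in TV distance in expectation over the sampled intermediate prefix, and summing over the at most $T$ steps gives total error $O(\eps)$. Union-bounding over the $\poly(\cdot)$ histories the overall algorithm ever touches, with per-step failure probability $\delta = \eta/\poly$, yields the claim with probability $1 - \eta$. The main obstacle is essentially conceptual rather than technical: there are $O^T$ possible histories, so one cannot materialize all conditional distributions up front, and the proof must carefully argue that the lazy estimate-and-cache scheme yields a coherent distribution whose conditional marginals at any queried history are close to those of $\Hm$; beyond that, the result reduces to standard concentration and a chain-rule TV bound.
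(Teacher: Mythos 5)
Your construction matches the paper's: define $\wh{\Hm}$ by the chain rule from empirically estimated next-character conditionals, cache each estimate so the same prefix always returns the same answer, answer pdf queries by multiplying cached per-step probabilities, and pass from per-step TV error $\eps/T$ to full conditional TV error $\eps$ by the hybrid/inductive argument (this is the paper's Claim~\ref{claim:nextchar-to-TV}). Two small divergences are worth flagging. First, your rounding step goes in the opposite direction from the paper's: you round small entries \emph{down} to zero, whereas the paper perturbs them \emph{up} to at least $\eps/(10O)^2$ so that $\wh{\Hm}$ is $\eps/(10O)^2$-positive (Definition~\ref{def:positivity}); positivity is not in the informal statement you were asked to prove but is explicitly required in the formal Lemma~\ref{lem:pdf-estimation} and used downstream via Assumption~\ref{assume:positive-distribution}, so your rounding choice would be the wrong one for the full algorithm. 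Second, you union-bound over ``the histories the algorithm ever touches,'' but that set is itself random and adaptively determined by earlier estimates and samples, so making this airtight would require a martingale-style argument; the paper sidesteps adaptivity entirely by setting the per-node failure probability to $\delta/O^T$ (which still only costs $\poly(T,\log(1/\delta))$ samples per node) and conceptually union-bounding over \emph{all} nodes of the $O$-ary depth-$T$ tree, with the lazy/cached implementation declared statistically equivalent to this exhaustive one.
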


Note that crucially this exact p.d.f. access is \textit{only for a distribution $\wh{\Hm}$ that is $\eps$-close to $\Hm$ and not for $\Hm$ itself} (as in the latter case \cite{kakade2024learning} already gives an algorithm for learning).  For most of the algorithm, we will work directly with $\wh{\Hm}$ and only use, implicitly in the analysis, that it is close to some low rank distribution $\Hm$.

\subsubsection{Dimensionality Reduction}\label{sec:overview-dim-reduction}  

Recall that our goal is to find a barycentric spanner for the set of vectors $\{ \Pr_{\Hm}[\cdot | h] \}_{h \in \calO^t}$.  These vectors are close to the vectors $\{ \Pr_{\wh{\Hm}}[\cdot | h] \}_{h \in \calO^t}$, so we will try to construct an approximate spanner for this latter collection.  The first problem is that these vectors have exponentially many entries.  In order to compute with them efficiently, we introduce a type of dimensionality reduction that allows us to subsample a subset of only polynomially many coordinates and only work with the restriction to those coordinates.  First, let $\calD$ be some distribution over strings in $\calO^{T - t}$.  Now we sample polynomially elements $f_1, \dots, f_m$ from $\calD$.  And for each $h \in \calO^t$, we associate $\Pr_{\wh{\Hm}}[\cdot | h]$ with the reduced vector
\begin{equation}\label{eq:dim-reduction-intro}
v_{h} = \left( \frac{\Pr_{\wh{\Hm}}[f_1| h]}{m\calD[f_1]}, \dots , \frac{\Pr_{\wh{\Hm}}[f_m| h]}{m\calD[f_m]} \right) 
\end{equation}
where $\calD[f_i]$ denotes the density of $\calD$ at $f_i$. Recall that by assumption, we have exact access to the densities $\Pr_{\wh{\Hm}}[f_i| h]$ in the numerator.  Thus as long as we also have exact density access to $\calD$, then the above entries can be computed exactly.  We first observe  that these vectors give unbiased estimates for linear functionals of the original distributions $\Pr_{\wh{\Hm}}[\cdot | h]$.

\begin{fact}\label{fact:dimensionality-reduction-intro}
For any distribution $\calD$ over $\calO^{T - t}$, in expectation over the random draws from $\calD$, for any subset $\calA \subseteq \calO^t$ and real coefficients $\{ c_{h} \}_{h \in \calA}$,
\[
\E\left[ \norm{\sum_{h \in \calA} c_h v_h}_1 \right] = \norm{\sum_{h \in \calA} c_h \Pr_{\wh{\Hm}}[\cdot | h]  }_1 \,.
\]
\end{fact}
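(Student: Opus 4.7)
The identity is an importance-sampling statement, so the plan is to unroll the definitions and check that the normalizing weights $1/(m\calD[f_i])$ exactly cancel the sampling density, leaving the unbiased $L^1$ norm.

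First I would expand the left-hand side coordinate-by-coordinate. The $i$-th coordinate of $\sum_{h \in \calA} c_h v_h$ is $\frac{1}{m\calD[f_i]} \sum_{h \in \calA} c_h \Pr_{\wh{\Hm}}[f_i | h]$, so
\[
\norm{\sum_{h \in \calA} c_h v_h}_1 = \sum_{i=1}^m \frac{1}{m\calD[f_i]} \left| \sum_{h \in \calA} c_h \Pr_{\wh{\Hm}}[f_i | h] \right|,
\]
where the absolute value pulls through the division because $1/(m\calD[f_i]) > 0$ (we can assume $\calD$ has full support on $\calO^{T-t}$; otherwise interpret $0/0 = 0$, which is consistent since such $f_i$ are drawn with probability zero).

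Next, I would apply linearity of expectation over the i.i.d.\ draws $f_1,\dots,f_m \sim \calD$. Each summand has the same distribution, so
\[
\E\left[\norm{\sum_{h \in \calA} c_h v_h}_1\right] = m \cdot \E_{f \sim \calD}\!\left[ \frac{1}{m\calD[f]} \left| \sum_{h \in \calA} c_h \Pr_{\wh{\Hm}}[f | h] \right| \right].
\]
Writing out the outer expectation as $\sum_{f \in \calO^{T-t}} \calD[f] \,(\,\cdot\,)$, the factor $\calD[f]$ cancels against $1/\calD[f]$, leaving
\[
\sum_{f \in \calO^{T-t}} \left| \sum_{h \in \calA} c_h \Pr_{\wh{\Hm}}[f | h] \right| \;=\; \norm{\sum_{h \in \calA} c_h \Pr_{\wh{\Hm}}[\cdot | h]}_1,
\]
which is exactly the right-hand side.

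There is essentially no obstacle here; the only subtlety is the treatment of coordinates where $\calD$ has zero mass, which is handled by either assuming full support on $\calO^{T-t}$ or by convention $0/0=0$. The argument is just the standard unbiasedness identity for importance sampling, applied to the linear functional $f \mapsto \sum_{h \in \calA} c_h \Pr_{\wh{\Hm}}[f|h]$ followed by the absolute value; crucially, the absolute value does not interfere because the importance weights are nonnegative.
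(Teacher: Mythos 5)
Your proof is correct and matches the argument the paper itself uses: the same importance-sampling identity appears as the first displayed equation in the proof of Lemma~\ref{lem:dimensionality-reduction}, namely $\norm{c_1\vec_{\calD_1}+\dots}_1 = \E_{a\sim\wh{\calD}}\bigl[|c_1\calD_1[a]+\dots|/\wh{\calD}[a]\bigr]$. The support caveat you flag is exactly the right one; it is harmless in the paper's actual use because $\calD$ is taken to be a mixture of conditionals of an $\wh{\Hm}$ that is constructed to be positive (Lemma~\ref{lem:pdf-estimation}), so $\calD$ has full support on $\calO^{T-t}$.
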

In other words, at least in expectation, the subsampling of the entries preserves the $L^1$ norm of linear combinations of the vectors.  Of course, the variance could be prohibitively large depending on the choice of $\calD$.  However, with a careful choice of $\calD$, we can obtain concentration for the above quantities as well.  The goal of our dimensionality reduction procedure is captured by the following statements:

\begin{definition}
For a subset $\calA \subseteq \calO^t$, we say vectors  $\{ v_h\}_{h \in \calA}$ are $\gamma$-representative for the distributions $\{\Pr_{\wh{\Hm}}[\cdot | h ] \}_{h \in \calA}$ if  
\[
\left\lvert \norm{\sum_{h \in \calA} c_h v_h}_1 - \norm{\sum_{h \in \calA} c_h \Pr_{\wh{\Hm}}[\cdot | h]  }_1 \right\rvert \leq \gamma
\]
for all sets of coefficients with $|c_h| \leq 1$.  
\end{definition}

\begin{proposition}\label{prop:dim-reduce-intro}[Informal, see Lemma~\ref{lem:dimensionality-reduction} for more details]
For $\calA = \{h_1, \dots , h_k \}$ and 
\[
\calD = \frac{\Pr_{\wh{\Hm}}[\cdot | h_1] + \dots + \Pr_{\wh{\Hm}}[\cdot | h_k]}{k} \,.
\]
If we draw samples $f_1, \dots , f_m$ from $\calD$ for $m = \poly(k,1/\gamma)$ and set
\[
v_{h} = \left( \frac{\Pr_{\wh{\Hm}}[f_1| h]}{m\calD[f_1]}, \dots , \frac{\Pr_{\wh{\Hm}}[f_m| h]}{m\calD[f_m]} \right)
\]
then with high probability the vectors $\{v_h\}_{h \in \calA}$ are $\gamma$-representative for the distributions $\{ \Pr_{\wh{\Hm}}[\cdot | h] \}_{h \in \calA}$.
\end{proposition}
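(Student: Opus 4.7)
The plan is to (i) verify the expectation identity underlying Fact~\ref{fact:dimensionality-reduction-intro}, (ii) establish concentration around that expectation for a single coefficient vector via Hoeffding's inequality, and (iii) promote pointwise concentration to uniform concentration over all $|c_h| \leq 1$ via a standard $\eps$-net argument. Writing $\calA = \{h_1, \dots, h_k\}$ and fixing $c = (c_{h_1}, \dots, c_{h_k})$ with $|c_{h_i}| \leq 1$, the $j$-th coordinate of $\sum_{h \in \calA} c_h v_h$ is $\frac{1}{m} \cdot \frac{\sum_h c_h \Pr_{\wh{\Hm}}[f_j|h]}{\calD[f_j]}$, so
\[
\norm{\sum_{h \in \calA} c_h v_h}_1 \;=\; \frac{1}{m}\sum_{j=1}^{m} Y_j(c), \qquad Y_j(c) \;:=\; \frac{\big|\sum_h c_h \Pr_{\wh{\Hm}}[f_j|h]\big|}{\calD[f_j]},
\]
where the $f_j$ are i.i.d.\ from $\calD$. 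Substituting $\calD[f] = \tfrac{1}{k}\sum_i \Pr_{\wh{\Hm}}[f|h_i]$ and summing over $f \in \calO^{T-t}$ gives $\E[Y_j(c)] = \norm{\sum_h c_h \Pr_{\wh{\Hm}}[\cdot|h]}_1$, which is precisely Fact~\ref{fact:dimensionality-reduction-intro}.

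The crucial structural observation is that the $Y_j(c)$ are uniformly bounded: for every $f$,
\[
\Big|\sum_h c_h \Pr_{\wh{\Hm}}[f|h]\Big| \;\leq\; \sum_h \Pr_{\wh{\Hm}}[f|h] \;\leq\; k\,\calD[f],
\]
so $Y_j(c) \leq k$ deterministically. Hoeffding's inequality applied to the i.i.d.\ variables $Y_j(c) \in [0,k]$ gives, for each fixed $c$,
\[
\Pr\!\left[\, \left| \frac{1}{m}\sum_{j=1}^m Y_j(c) - \E[Y_1(c)] \right| > \gamma/2 \,\right] \;\leq\; 2\exp\!\big(-\Omega(m\gamma^2/k^2)\big).
\]

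To upgrade this to a uniform bound over all $c \in [-1,1]^k$, I would union bound over an $\eps$-net $\calN$ of the cube at resolution $\delta$, with $|\calN| \leq (3/\delta)^k$. For any $c, c'$ with $\|c-c'\|_\infty \leq \delta$, the quantity $\norm{\sum_h c_h \Pr_{\wh{\Hm}}[\cdot|h]}_1$ changes by at most $\delta k$, while $\norm{\sum_h c_h v_h}_1$ changes by at most $\delta \sum_h \norm{v_h}_1$. Applying the boundedness observation above coordinatewise with $c$ set to the indicator of $h$ shows each entry of $v_h$ is at most $k/m$, hence $\norm{v_h}_1 \leq k$ deterministically and the second change is at most $\delta k^2$. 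Choosing $\delta = \gamma/(4k^2)$ makes the discretization slack $\leq \gamma/2$, so combining with the Hoeffding bound at the $\gamma/2$ scale yields $\gamma$-representativeness. Setting $m = \Theta(k^3 \log(k/\gamma)/\gamma^2)$ drives the total failure probability after union bounding over $|\calN|$ points below any desired inverse polynomial.

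The main obstacle is the uniform-convergence step: one cannot hope to prove $\gamma$-representativeness by controlling $\|v_h - \Pr_{\wh{\Hm}}[\cdot|h]\|_1$ coordinate by coordinate, since each $v_h$ is supported on only $m$ of exponentially many coordinates and looks nothing like the true distribution. What saves the argument is that the importance weights $\Pr_{\wh{\Hm}}[f|h]/\calD[f]$ are uniformly bounded by $k$ for every $f$, which is \emph{precisely} where the choice of $\calD$ as the average of the $\Pr_{\wh{\Hm}}[\cdot|h_i]$ over $h_i \in \calA$ is used. This deterministic per-sample bound is what turns the Hoeffding + net argument into a polynomial sample complexity rather than an exponential one.
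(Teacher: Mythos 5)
Your proof is correct and follows essentially the same approach as the paper's proof of Lemma~\ref{lem:dimensionality-reduction}: establish the unbiasedness of the importance-sampling estimator, observe the deterministic per-sample bound $|Y_j(c)| \leq k$ which comes exactly from choosing $\calD$ as the arithmetic mean of the $\Pr_{\wh{\Hm}}[\cdot | h_i]$, apply a Chernoff/Hoeffding bound for fixed coefficients, and union bound over a net of the coefficient cube using the Lipschitz continuity of both $L^1$ norms in $c$. The only (cosmetic) difference is that the paper's formal lemma stratifies the sampling (drawing $k$ futures from each $\calD_i$) rather than sampling i.i.d.\ from the mixture $\calD$ as in the informal Proposition you are proving, but the concentration argument is unaffected.
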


To see why the above holds, note that the entries of $v_{h_1}, \dots , v_{h_k}$ are all at most $k$.  Thus for any accuracy parameter $\gamma$, by taking $m > \poly(k, 1/\gamma)$, we can union bound over a net and get concentration in Fact~\ref{fact:dimensionality-reduction-intro}.  The main point now is that when we want to compute a barycentric spanner we can work with the representative vectors instead.  


\subsubsection{Computing a Spanner in the Reduced Space} 

Section~\ref{sec:overview-dim-reduction} deals with the fact that there are exponentially many possible futures.  There are also exponentially many histories, but because they are close to some low-dimensional subspace, we show that sampling a polynomial number of them suffices to obtain a representative subset.

Formally, we sample $k = \poly(S, 1/\gamma)$ histories $h_1, \dots , h_k$ from $\wh{\Hm}[:t]$.  While we cannot ensure that the collection 
\[
\{\Pr_{\wh{\Hm}}[\cdot | h_1], \dots , \Pr_{\wh{\Hm}}[\cdot | h_k] \}
\]
contains a barycentric spanner for all of $\{\Pr_{\wh{\Hm}}[\cdot | h] \}_{h \in \calO^t} $ (because there could be some $h$ that is sampled with very low probability), we can still use Fact~\ref{fact:exist-spanner-intro} to argue that this set contains a subset of size $S$ that is a barycentric spanner for \emph{most of} $\{\Pr_{\wh{\Hm}}[\cdot | h] \}_{h \in \calO^t} $ i.e. with high probability over $h$ drawn from $\wh{\Hm}$,  $\Pr_{\wh{\Hm}}[\cdot | h]$ is close to a bounded linear combination of vectors in this subset.

Now to compute the barycentric spanner, by Proposition~\ref{prop:dim-reduce-intro}, we can construct representative vectors $v_{h_1}, \dots , v_{h_k}$ and it suffices to work with these, which have polynomial size.  Then we can run an algorithm for computing approximate barycentric spanners (see Lemma~\ref{lem:compute-spanner-full}) on this collection.  There are additional technical details due to the error i.e. the vectors are close to, but not exactly contained in an $S$-dimensional subspace.  For the algorithm to work, it is crucial that the error is smaller than the dimensionality of the vectors so we need to ensure that $\eps \ll \poly(\gamma)$ (see Section~\ref{sec:learning-spanner} for more details). Omitting these technical details, we have now sketched the proof of the following:

\begin{lemma}[Informal, see Lemma~\ref{lem:compute-spanner-full}]\label{lem:learn-spanners-intro}
With $\poly(S,O,T,1/\gamma)$ conditional queries and runtime, we can compute sets  $\calH^{(1)}, \dots , \calH^{(T -1)}$ such that for each $1 \leq t \leq T - 1$, $|\calH^{(t)}| \leq S$ and the collection $\{\Pr_{\wh{\Hm}}[\cdot | h] \}_{h \in \calH^{(t)}}$ is a $(O(1), \gamma)$-spanner for a $1 - \gamma$-fraction of $\{\Pr_{\wh{\Hm}}[\cdot | h] \}_{h \in \calO^t}$ (where the fraction is measured with respect to $h \sim \wh{\Hm}$).    
\end{lemma}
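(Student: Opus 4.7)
The plan is to handle each timestep $t$ separately and union-bound at the end. I would set an intermediate accuracy $\gamma' = \gamma/\poly(S)$, and pick $\eps$ in Lemma~\ref{lem:density-access-informal} small enough (say $\eps \ll \gamma'/\poly(S,O,T)$) that the conditional distributions of $\wh{\Hm}$ are not only close to, but effectively contained in, an $S$-dimensional subspace up to negligible slack.

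First I would sample $k = \poly(S, 1/\gamma, \log(OT/\eta))$ prefixes $h_1, \dots, h_k \sim \wh{\Hm}[:t]$ by issuing conditional queries on the empty history and truncating to the first $t$ tokens. Next, I would apply Proposition~\ref{prop:dim-reduce-intro} with the mixture $\calD = \frac{1}{k}\sum_i \Pr_{\wh{\Hm}}[\cdot|h_i]$ at accuracy $\gamma'$ to obtain polynomial-dimensional vectors $\{v_{h_i}\}$ that are $\gamma'$-representative. I would then run the standard determinant-maximization algorithm for approximate barycentric spanners on $\{v_{h_i}\}$. Because the true conditionals $\Pr_{\Hm}[\cdot|h]$ live in an $S$-dimensional subspace and $\wh{\Hm}$ is within $\eps \ll \gamma'$ of $\Hm$, the vectors $v_{h_i}$ are $O(\gamma')$-close to a rank-$S$ set, so the algorithm outputs a subset $\calH^{(t)} \subseteq \{h_1,\dots,h_k\}$ of size at most $S$ forming a $(2, \gamma')$-barycentric spanner of $\{v_{h_i}\}$ in the reduced coordinates. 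Representativeness transfers this guarantee back to the original distributions: for each sampled $h_i$, the bounded coefficients (of magnitude at most $2$) witnessed in the reduced space also witness, in $L^1$, a $\gamma' + O(S\gamma') \leq \gamma$ approximation of $\Pr_{\wh{\Hm}}[\cdot|h_i]$ by an $O(1)$-bounded combination from $\{\Pr_{\wh{\Hm}}[\cdot|h']\}_{h' \in \calH^{(t)}}$.

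The main obstacle is extending this coverage from the sample $\{h_1,\dots,h_k\}$ to a $1-\gamma$ fraction of $\wh{\Hm}[:t]$, since there are exponentially many prefixes and the spanner was only fit to the sample. I would address this with a ``rank-increase'' argument. Conceptually, imagine growing $\calH^{(t)}$ online: so long as the $\wh{\Hm}[:t]$-mass of prefixes $h$ whose $\Pr_{\wh{\Hm}}[\cdot|h]$ fails to lie in the $(O(1), \gamma)$-span of the current spanner exceeds $\gamma$, a fresh draw from $\wh{\Hm}[:t]$ lands in this ``bad'' set with probability at least $\gamma$, and adjoining it must strictly increase the effective rank of our spanner --- otherwise its conditional distribution would already lie in the bounded span. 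Since the family is $\eps$-close to a rank-$S$ family with $\eps \ll \gamma'$, this rank-increase event can trigger at most $S$ times. A Chernoff bound on $k = \poly(S/\gamma) \cdot \log(T/\eta)$ samples then shows that, with probability $1-\eta/T$, every such rank-increase event is already witnessed inside our original sample $\{h_1,\dots,h_k\}$; hence the spanner extracted above automatically covers a $1-\gamma$ fraction of $\wh{\Hm}[:t]$. Union-bounding over the $T-1$ values of $t$ yields the lemma. The delicate calibration throughout is selecting $\gamma'$ and $\eps$ small enough to absorb the $S$-fold blow-up from the representativeness transfer and the slack from treating a near-rank-$S$ family as genuinely rank $S$, while keeping $k$ and the total query complexity polynomial in $S, O, T, 1/\gamma$.
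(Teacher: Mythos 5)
The first half of your proposal — draw $k$ prefixes from $\wh{\Hm}[:t]$, apply the dimensionality reduction of Proposition~\ref{prop:dim-reduce-intro}, run determinant-maximization (Algorithm~\ref{alg:robust-spanner} / Claim~\ref{claim:compute-spanner-basic}) on the reduced vectors, and transfer the $(2,\gamma')$-spanner guarantee back via representativeness — tracks the paper's argument in Algorithm~\ref{alg:build-spanner} and the first two bullets of Lemma~\ref{lem:spanner-guarantees}. The problem is your ``rank-increase'' argument for lifting coverage from the $k$ sampled prefixes to a $1-\gamma$ fraction of $\wh{\Hm}[:t]$.

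The specific step that fails is the assertion that if a freshly drawn prefix $h$ has $\Pr_{\wh{\Hm}}[\cdot\,|\,h]$ outside the $(O(1),\gamma)$-span of the current spanner, then adjoining it must strictly increase the \emph{rank} of the spanner. This is false: a vector can lie squarely inside the linear span of the spanner and yet require coefficients of magnitude $\gg O(1)$ to express it there. A one-dimensional example already kills the argument: if the conditionals are collinear, say proportional to $v, 4v, 16v, \ldots$, every vector is in the rank-$1$ span of $\{v\}$, yet none of $4v, 16v, \ldots$ lies in the $(2,\gamma)$-span of $\{v\}$. So your online process stalls after at most one rank increment while the ``bad'' set still carries all the mass, and the claimed ``at most $S$ triggers'' bound is vacuous. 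What a bad draw forces is not a rank increase but (at best) a parallelepiped-\emph{volume} increase, and the number of volume-doubling steps is controlled by a condition number, not by $S$; without an additional a priori bound on that condition number your count of trigger events is not polynomial.

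The paper avoids this with a uniform-convergence argument rather than an online one: Lemma~\ref{lem:distribution-spanner}. For any candidate $d$-subset of the sample, the symmetric parallelepiped it spans is a single convex body, and with $k = (d/\eps)^{10}\log(1/\delta)$ samples one can union-bound so that empirical and true $\calD$-masses of \emph{every} such body agree to within $\eps$. Since Fact~\ref{fact:spanner} guarantees some $d$-subset of the sample is a $(1,0)$-spanner for the entire sample (empirical mass $1$), that very subset has true mass at least $1 - \eps$, i.e.\ it is a $(1-\eps, 1, 0)$-distribution spanner. Lemma~\ref{lem:spanner-guarantees} then chains this with the $(2,\gamma)$-spanner the algorithm actually computes (via a change of basis costing a factor $S$ in the coefficient bound and in $\gamma$) to reach the $(1-\gamma, 2S, 2S\gamma)$-distribution-spanner conclusion. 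You would need to replace your rank-increase step with an argument of this flavor — a VC/union-bound over parallelepipeds spanned by sample subsets — to make the coverage claim go through.
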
 
For simplicity, in this overview, we even assume that $\{\Pr_{\wh{\Hm}}[\cdot | h] \}_{h \in \calH^{(t)}}$ is actually an approximate spanner for all of $\{\Pr_{\wh{\Hm}}[\cdot | h] \}_{h \in \calO^t}$ \---- in the full proof, roughly the ``exceptional" possibilities for $h$ can be absorbed into the failure probability.

\subsubsection{A Reduced Representation}



Let $\calB^{(t)} \subseteq \calO^t$ consist of all strings in $\calH^{(t)}$ and all strings obtained by taking a string in $\calH^{(t-1)}$ and appending a character in $\calO$.  In other words $\calB^{(t)} = \calH^{(t)} \cup \{ \calH^{(t-1)} \vee o\}_{o \in \calO}$.  Now the set of information that we store is as follows:

\begin{definition}[Learned representation (informal)]
Our learned representation consists of the following information:
\begin{itemize}
\item Sets $\calH^{(t)}$ for $t = 1,2, \dots , T - 1$
\item The next character probabilities (under $\wh{\Hm}$) for all of the elements of $\calH^{(t)}$
\item Vectors $\{ v_h \}_{h \in \calB^{(t)}}$ that are representative for the distributions $\{ \Pr_{\wh{\Hm}}[ \cdot | h] \}_{h \in \calB^{(t)}}$ (can be constructed using Proposition~\ref{prop:dim-reduce-intro}).
\end{itemize}
\end{definition}
This is our first main departure from the idealized representation: rather than storing explicit linear combinations as in the idealized representation (see \eqref{eq:stored-coeffs}), we simply store the representative vectors and defer the computation of the ``change-of-basis" to the sampling algorithm. For technical reasons, the full algorithm requires storing some additional information but we will omit these in this overview.  See Section~\ref{sec:full-learning} for details.

\subsection{Sampling Step}\label{sec:sample-intro}
As in the idealized sampling algorithm in Section~\ref{sec:ideal-intro}, we sample a string $x$ one character at a time.  At each step $t < T$, we maintain a linear combination $\{ \alpha_h^{x} \}_{h \in \calH^{(t)}}$ of the strings $h \in \calH^{(t)}$ as in \eqref{eq:spanner-representation-intro1}, except with conditional distributions with respect to $\wh{\Hm}$.  We then sample a character and re-normalize as in \eqref{eq:conditional-probs-intro}. The main difficulty now is the change-of-basis step.


 \paragraph{Naive Attempt: Linear Change-of-Basis} Naively, for each $h \in \calH^{(t)}$ and $o \in \calO$, we could simply pre-compute coefficients 
  $\{\beta_{h'}^{h \vee o} \}_{h' \in \calH^{(t+1)}}$ such that 
 \[
 v_{h \vee o} \approx \sum_{h' \in \calH^{(t+1)}} \beta_{h'}^{h \vee o} v_{h'}
 \]
 and thus (since the vectors $v_h$ are representative for the distribution $\Pr_{\wh{\Hm}}[\cdot | h]$), 
\[
\Pr_{\wh{\Hm}}[\cdot | h \vee o]  \approx \sum_{h' \in \calH^{(t+1)}} \beta_{h'}^{h \vee o}  \Pr_{\wh{\Hm}}[\cdot | h'] \,.
\]
We can then substitute this into the re-normalized relation (see \eqref{eq:conditional-probs-intro}) and get
\begin{equation}\label{eq:linear-transition-intro}
\Pr_{\wh{\Hm}}[\cdot | x \vee o ] \approx  \sum_{h' \in \calH^{(t+1)}} \theta_{h'}^{x \vee o} \Pr_{\wh{\Hm}}[\cdot | h'] \mbox{, where } \theta_{h'}^{x \vee o} = \sum_{h \in \calH^{(t)}}\alpha_{h}^{x} \beta_{h'}^{h \vee o} \cdot \frac{\Pr_{\wh{\Hm}}[ o | h]}{\Pr_{\wh{\Hm}}[o | x]}
\end{equation}
This gives us an expression for $\Pr_{\wh{\Hm}}[\cdot | x \vee o ] $ as a linear combination of the vectors $\Pr_{\wh{\Hm}}[\cdot | h']$ for $h' \in \calH^{(t+1)}$.  However, the issue with this approach is that the coefficients and approximation error may grow exponentially with $t$.

\paragraph{Need for Projection}

The fact that the coefficients may grow motivates the need for a ``projection" step, where we reduce the magnitudes of the coefficients.  One natural attempt is to take the coefficients $\{ \theta_{h'}^{x \vee o}\}_{h' \in \calH^{(t+1)}}$ in \eqref{eq:linear-transition-intro} and ``project" them by finding an equivalent set of coefficients of bounded magnitude, say $\{ \alpha_{h'}^{x \vee o}\}_{h' \in \calH^{(t+1)}}$, such that 
\begin{equation}\label{eq:projection-attempt-1}
\sum_{h' \in \calH^{(t+1)}} \theta_{h'}^{x \vee o} \Pr_{\wh{\Hm}}[\cdot | h'] \approx \sum_{h' \in \calH^{(t+1)}}   \alpha_{h'}^{x \vee o} \Pr_{\wh{\Hm}}[\cdot | h'] \,.
\end{equation}
In fact, there exists a set of coefficients of bounded magnitude such that 
\begin{equation}
\Pr_{\wh{\Hm}}[\cdot | x \vee o ]  \approx \sum_{h'  \in \calH^{(t+1)}}    \alpha_{h'}^{x \vee o}  \Pr_{\wh{\Hm}}[\cdot | h'] 
\end{equation}
with high probability over $x$, by Lemma~\ref{lem:learn-spanners-intro}.  However, there is a subtle issue with this approach. We only have access to the vectors $\{v_{h}\}_{h \in \calB^{(t +1)}}$ so we can only compute (succinct representations) of the expressions in \eqref{eq:projection-attempt-1} but not of $\Pr_{\wh{\Hm}}[\cdot | x \vee o]$.  

The issue is captured by the following abstraction.  Let $w$ be the vector that is equal to the left hand side of \eqref{eq:projection-attempt-1}.  Let $\calK$ be the convex set consisting of all possible vectors obtainable on the right hand side of \eqref{eq:projection-attempt-1} for coefficients $\{ \alpha_{h'}^{x \vee o}\}_{h' \in \calH^{(t+1)}}$ bounded by some constant.  Let $z = \Pr_{\wh{\Hm}}[\cdot | x \vee o ]$.  We know $z \in \calK$ (up to some small error), but we do not know $z$.  Now we want to map $w$ to a point in $z' \in \calK$.  Ideally, we want to guarantee $\norm{z' - z}_1 \leq \norm{w - z}_1 $, up to some small additive error.  The issue is that we cannot guarantee better than
\[
\norm{z' - z}_1 \leq 2\norm{w - z}_1 
\]
due to the fact that $z$ is unknown and having to use triangle inequality. In other words, the problem is that when projecting onto a convex set in $L^1$ norm, we may still double the $L^1$ distance to some of the elements of the convex set, causing the error to grow multiplicatively in each iteration.

\paragraph{Projection in KL}

The above issue motivates projection in a different distance measure between distributions.  Specifically, KL divergence has the following appealing property:
\begin{fact}\label{fact:kl-basic-intro}
Let $\calT \subset \R^N$ be the $N$-dimensional simplex i.e. the convex hull of the standard basis vectors.  We view points in $\calT$ as distributions over $N$ elements.  Let $\calK \subseteq \calT$ be a convex set and let $w \in \calT$ be a point.  Let $z^* = \argmin_{z \in \calK} \KL(z \| v)$.  Then for all $z \in \calK$,
\[
\KL(z \| z^*) \leq \KL(z \| w) \,. 
\]    
\end{fact}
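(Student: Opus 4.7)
The plan is to derive the stronger three-point (Bregman--Pythagorean) inequality
\[
\KL(z \| w) \;\geq\; \KL(z \| z^*) + \KL(z^* \| w) \qquad \text{for every } z \in \calK,
\]
from which the stated fact follows immediately since $\KL(z^* \| w) \geq 0$.

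To establish this three-point inequality I would invoke a first-order optimality argument. Fix an arbitrary $z \in \calK$, form the convex combination $z_t := (1-t)z^* + t z$, which remains in $\calK$ by convexity, and consider $\phi(t) := \KL(z_t \| w)$ on $[0,1]$. Since $z^*$ minimizes $\KL(\cdot \| w)$ over $\calK$, the function $\phi$ is minimized on $[0,1]$ at $t = 0$, hence $\phi'(0^+) \geq 0$. Differentiating entrywise produces two contributions: the derivative of the linear factor in front of the logarithm yields $\sum_i (z_i - z^*_i)\log(z^*_i / w_i)$, while the derivative of the logarithm itself collapses to $\sum_i (z_i - z^*_i)$, which vanishes because both $z$ and $z^*$ are probability vectors on $N$ elements.

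The resulting inequality $\sum_i z_i \log(z^*_i / w_i) \geq \sum_i z^*_i \log(z^*_i / w_i) = \KL(z^* \| w)$ can be converted into the Pythagorean form by adding $\KL(z \| z^*) = \sum_i z_i \log(z_i / z^*_i)$ to both sides and using the identity $\log(z_i / z^*_i) + \log(z^*_i / w_i) = \log(z_i / w_i)$ to collapse the left side into $\KL(z \| w)$. Dropping the nonnegative term $\KL(z^* \| w)$ from the right side then yields the claim.

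The only real obstacle is careful bookkeeping on the boundary of the simplex. If $w_i = 0$ at some coordinate then $z^*_i = 0$ (else $\KL(z^* \| w) = \infty$, contradicting optimality of $z^*$), and any $z \in \calK$ with $\KL(z \| w) < \infty$ must likewise satisfy $z_i = 0$, so these coordinates contribute nothing. If $w_i > 0$ but $z^*_i = 0$, the one-sided directional derivative $\phi'(0^+)$ can only be $+\infty$, which strengthens rather than weakens the inequality. These are standard technicalities and do not affect the core optimality computation.
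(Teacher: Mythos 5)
Your proof is correct and is essentially the same argument the paper gives: first-order optimality of $z^*$ along the segment toward $z$, simplification of the constant term using that $z,z^*$ lie on the simplex, and then non-negativity of $\KL(z^*\|w)$. You package it as the full Bregman--Pythagorean inequality before dropping the $\KL(z^*\|w)$ term, whereas the paper discards it one step earlier, but the mechanism is identical.
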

This implies that \emph{``projecting" onto a convex set actually decreases the KL-divergence from all elements within the convex set}.  This is the key to overcoming the error doubling issue above. In particular, in the same abstraction as above, we would just set $z' = z^*$.

More concretely, since the vectors $\{v_h\}_{h \in \calB^{(t+1)}}$ are succinct representations of $\Pr_{\wh{\Hm}}[\cdot | h]$, we can solve the projection in KL, which is a convex optimization problem, over these representative vectors. While Proposition~\ref{prop:dim-reduce-intro} is stated for TV distance, we can extend it to also preserve the KL divergence between linear combinations of these distributions as well.  Technically, this requires a ``truncated" notion of KL to deal with possibly negative entries but we will gloss over this for now (see Definition~\ref{def:truncated-kl-function} and Lemma~\ref{lem:dimensionality-reduction} for more details). 

To describe our actual algorithm, once we have the coefficients as in \eqref{eq:conditional-probs-intro}, we solve the following convex optimization problem.  We solve for coefficients $\{ \alpha_{h'}^{x \vee o} \}_{h' \in \calH^{(t+1)}}$ that are bounded in magnitude and minimize
\begin{equation}\label{eq:opt-intro}
\KL \left( \sum_{h' \in \calH^{(t+1)}} \alpha_{h'}^{x \vee o}v_{h'} \Bigg\| \sum_{h \in \calH^{(t)}} \alpha_h^{x} \frac{\Pr_{\wh{\Hm}}[ o | h]}{\Pr_{\wh{\Hm}}[o | x]} \Pr_{\wh{\Hm}}[\cdot | h \vee o]  \right) \,.
\end{equation}
While we omit the details in this overview (the full analysis is in Section~\ref{sec:sampling-alg}), the main point of this projection in KL step is that we can appeal to Fact~\ref{fact:kl-basic-intro} with $z =  \Pr_{\wh{\Hm}}[\cdot | x \vee o ]$ and $z^* = \sum_{h' \in \calH^{(t+1)}} \alpha_{h'}^{x \vee o}v_{h'}$ for the optima obtained in \eqref{eq:opt-intro} to get
\[
\KL \left( \Pr_{\wh{\Hm}}[\cdot | x \vee o ] \Bigg\|   \sum_{h' \in \calH^{(t+1)}} \alpha_{h'}^{x \vee o}v_{h'}  \right) \leq \KL \left(  \Pr_{\wh{\Hm}}[\cdot | x \vee o ] \Bigg\| \sum_{h \in \calH^{(t)}} \alpha_h^{x} \frac{\Pr_{\wh{\Hm}}[ o | h]}{\Pr_{\wh{\Hm}}[o | x]} \Pr_{\wh{\Hm}}[\cdot | h \vee o]   \right) + \gamma
\]
where the additive error $\gamma$ comes only from how well KL-divergences are preserved in the succinct representation $\Pr_{\wh{\Hm}}[\cdot | h] \rightarrow v_h$.  In particular, this completes the ``change-of-basis" from $\calH^{(t)}$ to $\calH^{(t+1)}$ and \emph{the error only grows additively}.  Now to sample the entire string, we simply sample one character at a time and iterate the above. 

\subsection{Organization}

In Section~\ref{sec:basics}, we present a few basic observations for simulating density access to a distribution $\wh{\Hm}$ that is close to $\Hm$ and working with (truncated) KL divergence.  In Section~\ref{sec:geometry}, we introduce our machinery for working with barycentric spanners, including the dimensionality reduction procedure.  In Section~\ref{sec:learning-main}, we present our full learning algorithm.  In Section~\ref{sec:sampling-alg}, we present our sampling algorithm for sampling from our learned representation.

\section{Basic Results}\label{sec:basics}

In this section, we introduce notation and collect several basic observations that will be used throughout the later sections.

\subsection{Algorithmic Primitives}

We start by developing a few basic primitives that conditional queries allow us to implement.  First, we have a subroutine for computing the probabilities of the next character given a specified history $h$.

\begin{claim}[Next Character Probabilities]\label{claim:next-char}
For $t \leq T$ and a string $h \in \calO^t$ and parameters $\eps, \delta > 0$, we can make $\poly(1/\eps, O, \log(1/\delta))$ conditional queries, and with $1 - \delta$ probability, we output a distribution $\calP_h$ over $\calO$ that is within TV distance $\eps$ of the true distribution of the next character $\{ \Pr_{\Hm}[ o |h] \}_{o \in \calO}$.
\end{claim}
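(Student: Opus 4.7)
The plan is to reduce this to a standard discrete distribution learning problem. The conditional query oracle, on input $h \in \calO^t$, returns a random string $f \in \calO^{T-t}$ drawn from $\Pr_{\Hm}[\cdot | h]$. If I only look at the first character $f_1$ of the returned string, then $f_1$ is distributed exactly according to the marginal distribution $\{\Pr_{\Hm}[o|h]\}_{o \in \calO}$, which is the target distribution in the claim. So each conditional query yields one i.i.d. sample from the desired distribution over $\calO$.

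Given this, I would make $m$ conditional queries with prefix $h$, extract the first character from each returned string, and output $\calP_h$ as the empirical distribution over $\calO$. It remains to pick $m$ large enough that the empirical distribution is within TV distance $\eps$ of the true marginal with probability $1 - \delta$. This is exactly the classical problem of learning a discrete distribution on $O$ elements from samples; taking $m = O\!\left(\frac{O + \log(1/\delta)}{\eps^2}\right)$ samples suffices by a standard concentration argument (e.g., Chernoff bounds applied per symbol followed by a union bound over $\calO$, or a direct VC/Bretagnolle--Huber style bound). This is clearly $\poly(1/\eps, O, \log(1/\delta))$ as required.

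There is essentially no obstacle here: the only subtlety is making sure we count conditional queries correctly (each sample of a next character costs one conditional query, so the total query complexity matches the stated bound) and that the runtime to form and output $\calP_h$ is also polynomial in the same parameters, which is immediate since we just tabulate $m$ symbols from $\calO$. So the proof is a one-paragraph sampling-plus-concentration argument; I would simply state the query procedure, invoke the standard learning-a-discrete-distribution bound, and conclude.
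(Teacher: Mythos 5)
Your proof is correct and takes essentially the same approach as the paper: make repeated conditional queries at $h$, look only at the first returned character, output the empirical distribution, and invoke a standard concentration bound for learning a discrete distribution on $O$ elements.
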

\begin{proof}
We can repeatedly make conditional queries with input $h$ and look at the next character.  We make $\poly(1/\eps, O,  \log(1/\delta))$ such queries and output the empirical distribution of the next character.  By a Chernoff bound, with $1 - \delta$ probability, the resulting distribution is within TV distance $\eps$ of the true distribution $\{ \Pr_{\Hm}[ o |h] \}_{o \in \calO}$.
\end{proof}

In light of the above, we make the following definition.
\begin{definition}[Conditional Closeness]
For two distributions $\Hm, \wh{\Hm}$ on $\calO^T$, we say they are $\eps$-conditionally close if  for any string $h \in \calO^t$ for $t < T$, the conditional distributions of  the next character  $\{ \Pr_{\Hm}[ o |h] \}_{o \in \calO}$ and  $\{ \Pr_{\wh{\Hm}}[ o |h] \}_{o \in \calO}$ are $\eps$-close in TV distance.
\end{definition}

The following statement shows that conditional closeness implies closeness in TV distance for the entire distribution (and further on any conditional distribution) up to a factor of $T$.
\begin{claim}\label{claim:nextchar-to-TV}
Let $\Hm, \wh{\Hm}$ be distributions on $\calO^T$ that are $\eps/T$ conditionally close.  Then for any history $h \in \calO^t$ for $t \leq T$, we must have 
\[
d_{\TV}(\Pr_{\Hm}[ \cdot | h], \Pr_{\wh{\Hm}}[ \cdot | h]) \leq \eps 
\]
where $\Pr_{\Hm}[ \cdot | h], \Pr_{\wh{\Hm}}[ \cdot | h]$ represent the full distribution over the future $f \in \calO^{T - t}$.
\end{claim}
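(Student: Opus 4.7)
The plan is to use reverse induction on $t$, showing the slightly sharper bound $d_{\TV}(\Pr_{\Hm}[\cdot | h], \Pr_{\wh{\Hm}}[\cdot | h]) \leq (T - t)\cdot (\eps/T)$ for every $h \in \calO^t$. This upper bound is at most $\eps$ for any $0 \leq t \leq T$, giving the claim. The base case $t = T$ is trivial, since the ``future'' has length zero and both conditional distributions are the point mass on the empty string.

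For the inductive step, fix $h \in \calO^t$ with $t < T$ and decompose a future of length $T - t$ as $(o, f')$ where $o \in \calO$ is the next character and $f' \in \calO^{T - t - 1}$ is the remainder. The standard chain rule for TV distance yields
\[
d_{\TV}\!\left(\Pr_{\Hm}[\cdot | h],\, \Pr_{\wh{\Hm}}[\cdot | h]\right) \leq d_{\TV}\!\left(\Pr_{\Hm}[o | h],\, \Pr_{\wh{\Hm}}[o | h]\right) + \E_{o \sim \Pr_{\Hm}[\cdot | h]}\!\left[d_{\TV}\!\left(\Pr_{\Hm}[\cdot | h \vee o],\, \Pr_{\wh{\Hm}}[\cdot | h \vee o]\right)\right].
\]
By the $\eps/T$-conditional closeness hypothesis applied to the history $h$, the first term is at most $\eps/T$. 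By the inductive hypothesis applied at each history $h \vee o$ of length $t + 1$, each term inside the expectation (and therefore the expectation itself) is at most $(T - t - 1) \cdot (\eps/T)$. Adding these gives $(T - t) \cdot (\eps/T)$, completing the induction.

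There is essentially no obstacle: the only real content is the chain rule for TV distance, which I would derive inline from the pointwise triangle inequality $|P_1(o) P_2(f' | o) - Q_1(o) Q_2(f' | o)| \leq P_1(o) |P_2(f' | o) - Q_2(f' | o)| + Q_2(f' | o) |P_1(o) - Q_1(o)|$ and then summing over $(o, f')$. The telescoping structure is exactly why the per-character error $\eps/T$ blows up only linearly in the number of remaining timesteps, which in turn is why the hypothesis is stated with slack $1/T$.
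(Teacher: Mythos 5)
Your proof is correct and follows essentially the same route as the paper's: reverse induction on $t$ with the sharper bound $(T-t)\eps/T$, using the chain rule for TV distance (which the paper writes as $\eps/T + \sum_o \Pr_{\Hm}[o|h]\, d_{\TV}(\Pr_{\Hm}[\cdot|h\vee o], \Pr_{\wh{\Hm}}[\cdot|h\vee o])$, identical to your decomposition). The only cosmetic difference is that you start the base case at $t=T$ rather than $t=T-1$.
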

\begin{proof}
We prove by reverse induction on $t$ that 
\[
d_{\TV}(\Pr_{\Hm}[ \cdot | h], \Pr_{\wh{\Hm}}[ \cdot | h]) \leq \frac{\eps (T - t)}{T}
\]
whenever $h$ has length $t$.  When $t = T-1$, the above is immediate from the assumption that $\Hm, \wh{\Hm}$ are $\eps/T$ conditionally close.  Now assume we have proven the claim for $t$.  Consider $h$ that has length $t-1$.  
\[
d_{\TV}(\Pr_{\Hm}[ \cdot | h], \Pr_{\wh{\Hm}}[ \cdot | h]) \leq \frac{\eps}{T} + \sum_{o \in \calO} \Pr_{\Hm}[ o | h] d_{\TV}(\Pr_{\Hm}[ \cdot | h \vee o], \Pr_{\wh{\Hm}}[ \cdot | h \vee o]) \leq \frac{\eps (T - t+ 1)}{T} 
\]
where in the above, we first used conditional closeness and then the inductive hypothesis.  This completes the proof.
\end{proof}

For technical reasons later on, it will be convenient to have the following definition which necessitates that, regardless of the prefix, all possibilities for the next character occur with some non-negligible probability.
\begin{definition}[Positivity]\label{def:positivity}
Let $0 <\eta < 1$ be some parameter.  We say a distribution $\Hm$ on $\calO^T$ is $\eta$-positive if for any string $h \in \calO^t$ for $t < T$, the probabilities $\Pr_{\Hm}[o|h]$ are at least $\eta$ for any choice of $o \in \calO$.
\end{definition}

Now we show how to implement sample and exact pdf access to a distribution $\wh{\Hm}$ that is conditionally close to the distribution of $\Hm$.  Note that this is slightly stronger than just implementing approximate pdf access to $\Hm$ since we need some ``consistency" between the responses to pdf queries. 

First, we can approximate the pdf of $\Hm$ at a given string $h \in \calO^t$ by iteratively computing the next character probabilities using Claim~\ref{claim:next-char} and then multiplying them.  The key to ensure consistency is to only estimate the ``next character distribution" once for each possible prefix $h$.  Whenever this prefix shows up again in a different computation, we use the same ``next character distribution" that has already been computed.  

\begin{definition}[Sample and PDF Access]
For a distribution $\wh{\Hm}$ over $\calO^T$, we say that we have sample and pdf access to $\wh{\Hm}$ if we can perform the following operations:
\begin{itemize}
\item Given a string $h \in \calO^t$ for $t < T$, draw a sample $f \in \calO^{T - t}$ from $\Pr_{\wh{\Hm}}[f| h]$
\item Given a string $h \in \calO^t$ for $t \leq T$, output $\Pr_{\wh{\Hm}}[h]$ (where this is the probability that the first $t$ characters of the string match $h$)
\end{itemize}
\end{definition}

\begin{lemma}[PDF Estimation]\label{lem:pdf-estimation}
Let $N \in \N$ and $\eps , \delta > 0$ be parameters.  Assume we are given conditional query access to a distribution $\Hm$.  With probability at least $1 - \delta$, we can respond to $N$ sample and pdf queries for some distribution $\wh{\Hm}$ that is $\eps/(10O)^2$-positive and $\eps$ conditionally close to $\Hm$.  This uses a total of $\poly(1/\eps, T, O , N, \log(1/\delta))$ conditional queries to $\Hm$.
\end{lemma}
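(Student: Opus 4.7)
The plan is to implement an on-demand ``lazy'' estimator that, behind the scenes, defines a single distribution $\wh{\Hm}$ whose conditionals are (i) consistent across queries and (ii) close to those of $\Hm$. Concretely, I would maintain a table that stores, for every prefix $h$ ever encountered, an estimated next-character distribution $\calP_h$ produced by a single invocation of Claim~\ref{claim:next-char}. Each cached $\calP_h$ is then mixed with a small uniform component to enforce positivity: set
\[
\wh{\calP}_h = \left(1 - \frac{\eps}{10O}\right)\calP_h + \frac{\eps}{10O}\cdot \mathrm{Unif}(\calO).
\]
The minimum character probability under $\wh{\calP}_h$ is at least $\eps/(10O^2) \geq \eps/(10O)^2$, which gives the required positivity. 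The distribution $\wh{\Hm}$ is then defined implicitly by declaring its next-character conditional at every prefix $h$ to be $\wh{\calP}_h$; for prefixes never touched by the algorithm one can imagine an independent fresh invocation of Claim~\ref{claim:next-char}, which is a purely conceptual device since no query ever inspects them.

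With this object in hand, the two types of queries are straightforward. For a pdf query on $h \in \calO^t$, I return $\prod_{i=0}^{t-1} \wh{\calP}_{h_{\leq i}}(h_{i+1})$, invoking Claim~\ref{claim:next-char} to populate any missing cache entries along the path. For a sample query with prefix $h \in \calO^t$, I draw the next character from $\wh{\calP}_h$, append it to $h$, and repeat until reaching length $T$, again invoking Claim~\ref{claim:next-char} on demand.

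For the error analysis, I would run Claim~\ref{claim:next-char} with accuracy parameter $\eps/2$ and failure probability $\delta/(NT)$. Each cached $\calP_h$ is then within TV distance $\eps/2$ of $\Pr_{\Hm}[\cdot|h]$, and the positivity mixing moves it by an additional $\eps/(10O) \leq \eps/2$ in TV distance, so $d_{\TV}(\wh{\calP}_h, \Pr_{\Hm}[\cdot|h]) \leq \eps$, which is exactly $\eps$-conditional closeness. Each sample or pdf query touches at most $T$ distinct prefixes, so at most $NT$ invocations of Claim~\ref{claim:next-char} are performed over the life of the algorithm; a union bound over these invocations controls the total failure probability by $\delta$. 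Each invocation uses $\poly(1/\eps, O, \log(NT/\delta))$ conditional queries to $\Hm$, giving a grand total of $\poly(1/\eps, T, O, N, \log(1/\delta))$ queries as required.

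The main obstacle is conceptual rather than technical: one must be careful that $\wh{\Hm}$ is a single well-defined distribution (not a moving target that depends on the order in which queries arrive), and that both sample and pdf responses are genuinely coming from this same distribution. Caching the next-character estimate per-prefix resolves both concerns at once — whenever the prefix $h$ recurs anywhere in any computation, the same cached $\wh{\calP}_h$ is consulted, so the pdf factorization along any path always matches the distribution used to draw samples along that path, and $\wh{\Hm}$ is fully determined once the coin flips used by Claim~\ref{claim:next-char} are fixed.
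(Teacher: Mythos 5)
Your proposal takes essentially the same approach as the paper's proof: a lazy per-prefix cache of next-character distributions, estimated via Claim~\ref{claim:next-char}, used to answer both pdf queries (multiply cached conditionals along the root-to-$h$ path) and sample queries (sample forward character by character), with positivity enforced by pushing every cached conditional away from the boundary of the simplex. The only structural difference is cosmetic: the paper perturbs the estimates to satisfy $p_h[o] \geq \eps/(10O)^2$ and normalizes, while you mix with a uniform component; both are fine since $\eps/(10O^2) \geq \eps/(10O)^2$.

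There is one small wrinkle worth flagging in your failure-probability accounting. The paper runs Claim~\ref{claim:next-char} with failure probability $\delta/O^T$ per node, so that the \emph{entire} (conceptual) tree of $O^T$ conditionals is simultaneously good with probability $1-\delta$; the lazy algorithm is then statistically equivalent to a single, globally well-defined $\wh{\Hm}$. You instead set the per-invocation failure to $\delta/(NT)$ and union bound only over the at most $NT$ invocations that are actually performed, which does control the \emph{observed} responses. But to conclude that the observed responses are consistent with a $\wh{\Hm}$ that is \emph{everywhere} $\eps$-conditionally close and $\eps/(10O)^2$-positive, you also need to define $\wh{\Hm}$'s conditional at every untouched prefix in a way that satisfies both properties. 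Your stated device of "imagining an independent fresh invocation of Claim~\ref{claim:next-char}" is not quite right, because those imaginary invocations individually fail with probability $\delta/(NT)$, and a union bound over the roughly $O^T$ untouched prefixes would blow up the failure probability. The fix is immediate: for untouched prefixes you are free to \emph{define} $\wh{\Hm}$'s conditional deterministically, e.g.\ as $(1-\eps/(10O))\Pr_{\Hm}[\cdot\,|\,h] + (\eps/(10O))\,\mathrm{Unif}(\calO)$, which is always $\eps$-close to $\Pr_{\Hm}[\cdot\,|\,h]$ and $\eps/(10O)^2$-positive, and which the algorithm never inspects. With that one replacement the argument is correct; alternatively, use the paper's $\delta/O^T$ budget and bypass the issue entirely, which costs nothing since $\log(O^T/\delta) = T\log O + \log(1/\delta)$ is still $\poly(T,\log(1/\delta))$.
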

\begin{remark}
While we don't specify the whole distribution $\wh{\Hm}$, the point is that the responses made by the algorithm to the sample and pdf queries are consistent with some distribution $\wh{\Hm}$ that is close to $\Hm$. 
\end{remark}
\begin{proof}
We define a rooted tree with $T$ layers where each intermediate node has exactly $O$ children.  Now the nodes of the tree are labeled with strings in $\calO^t$ for $t =0,1, \dots , T$.  The root is labeled with the empty string and the labels of the children of each node are obtained by appending each of the $O$ possible characters \--- thus the labels of the nodes at level $t$ are exactly the strings in $\calO^t$.

We first describe an algorithm for answering sample and pdf queries that requires exponential time but then we will give a polynomial time algorithm for answering polynomially many queries that is statistically equivalent to it.

\paragraph{Naive Algorithm:} for every non-leaf node in the tree, say with label $h$, apply Claim~\ref{claim:next-char} (using new, independent samples) to compute the next-character probabilities $p_{h}[o]$ for $o \in \calO$.  We can ensure that with probability $1 - \delta/O^T$, $|\Pr_{\Hm}[o|h] - p_{h}[o]| \leq \eps/(10O)$ for all $o \in \calO$.  By perturbing the $p_h[o]$ if necessary, we can also ensure $p_{h}[o] \geq \eps/(10O)^2$ and $\sum_{o \in \calO} p_{h}[o] = 1$ for all $o \in \calO$.  Now assign the edge from $h$ to $h \vee o$ a weight of $p_{h}[o]$.  Let $\wh{\Hm}$ be the distribution induced by this weighted tree where the probability of sampling a string $s \in \calO^T$ is equal to the product of the weights along the root-to-leaf path to $s$.  Now answer all sample and pdf queries with respect to $\wh{\Hm}$.  It is clear that $\wh{\Hm}$ is a valid distribution, and union bounding over all nodes, with $1 - \delta$ probability, $\wh{\Hm}$ is $\eps/(10O)^2$-positive and $\eps$ conditionally close to $\Hm$.

\paragraph{Efficient Implementation:} We say we visit a node labeled $h$ to mean that we apply Claim~\ref{claim:next-char} to compute the next-character probabilities and assign edge weights as described above. We will lazily maintain the tree instead, only visiting nodes as necessary.  After we have visited a node, we never revisit it, so its edge weights are fixed once it has been visited once.  Initially, we have only visited the root.  Each time we receive a pdf query for a string $h$, we follow the root-to-leaf path to $h$ and visit all nodes along this path.  Now the pdf we output is obtained by simply multiplying the edge weights along the path from the root to $h$.  

Next, for a conditional sample query at $h$, we begin from the node labeled $h$.  If we have visited it already, then we sample one of its children with probabilities proportional to the edge weights.  We then move to this child and repeat.  If we have not visited a node yet, we first visit it and then sample a child and repeat until we reach a leaf.  We output this leaf as the conditional sample.

In this implementation, answering each query only requires $\poly(1/\eps, T, O, \log(1/\delta))$ time and conditional queries to $\Hm$.  Furthermore, this algorithm is statistically equivalent to the naive algorithm described earlier so we are done.

\end{proof}

\subsection{Truncated KL Divergence}

Later on in the analysis, we will need a modified version of KL Divergence that is truncated so that it is defined everywhere and doesn't blow up at $0$.  

\begin{definition}\label{def:truncated-kl-function}
For a parameter $c > 0$ and real valued inputs $x,y$, we define
\[
\ell_{\geq  c}(x,y) =  x \log \max(x,c) - x \log \max(y,c) \,.
\]
\end{definition}

\begin{definition}
For vectors $u, v, w \in \R^N$ where $w$  has positive entries, we define 
\[
\KL_{\geq w }(u \| v) = \sum_{i = 1}^N \ell_{\geq w[i]}(u[i], v[i])\,.
\]
When $w$ has all entries equal to $c$, we will slightly simplify notation and write 
\[
\KL_{\geq c}(u \| v) = \sum_{i = 1}^N \ell_{\geq c}(u[i], v[i]) \,.
\]
\end{definition}

Note that the above is well-defined since we are only evaluating the logarithm when both $u[i]$ and $v[i]$ are positive.  When $u,v$ are distributions and $c = 0$, the above corresponds to the usual definition of KL-divergence.  When $c > 0$, this corresponds to a soft truncation of the distributions to elements with nontrivial mass.

One of the key properties of KL divergence is that it allows for projection in the following sense: when projecting a point onto a convex set, the projection is closer to \emph{all} points in the set.
\begin{fact}\label{fact:KL-basic}
Let $\calT \subset \R^N$ be the $N$-dimensional simplex i.e. the convex hull of the standard basis vectors.  Let $\calK \subseteq \calT$ be a convex set and let $v \in \calT$ be a point.  Let $x^* = \argmin_{x \in \calK} \KL(x \| v)$.  Then for all $x \in \calK$,
\[
\KL(x \| x^*) \leq \KL(x \| v) \,. 
\]
\end{fact}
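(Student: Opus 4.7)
The plan is to prove this as an instance of the Pythagorean inequality for KL divergence, which in fact gives the stronger conclusion
\[
\KL(x \| v) \;\geq\; \KL(x \| x^*) + \KL(x^* \| v),
\]
from which the stated inequality follows immediately since $\KL(x^* \| v) \geq 0$. The workhorse is a first-order optimality condition derived from the convexity of $\calK$ and the strict convexity of $\KL(\cdot \| v)$.

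First I would dispose of the degenerate case: if $\KL(x \| v) = \infty$ the conclusion is trivial, so I may assume $x$ is supported on $\{i : v[i] > 0\}$, and similarly the minimizer $x^*$ exists and is supported on this set (otherwise $\KL(x^* \| v)$ would be infinite, contradicting optimality since at least the uniform mixture of $x^*$ with any feasible point of finite KL lowers the objective). In particular the logarithms below are all finite.

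The main step is the variational inequality. Fix any $x \in \calK$ with $\KL(x \| v) < \infty$ and set $x_t = (1-t)x^* + t x$ for $t \in [0,1]$; convexity of $\calK$ puts $x_t \in \calK$. Define $\phi(t) = \KL(x_t \| v)$. Since $x^* = x_0$ is a minimizer, $\phi'(0^+) \geq 0$. Differentiating under the sum (the term $\sum_i (x[i] - x^*[i])$ vanishes because both are probability vectors) yields
\[
\phi'(0^+) \;=\; \sum_i (x[i] - x^*[i])\,\log\frac{x^*[i]}{v[i]} \;\geq\; 0.
\]
The algebraic identity
\[
\KL(x \| v) - \KL(x \| x^*) \;=\; \sum_i x[i]\,\log\frac{x^*[i]}{v[i]}
\]
then rewrites as
\[
\KL(x \| v) - \KL(x \| x^*) \;=\; \sum_i x^*[i]\,\log\frac{x^*[i]}{v[i]} \;+\; \sum_i (x[i]-x^*[i])\,\log\frac{x^*[i]}{v[i]} \;\geq\; \KL(x^* \| v),
\]
which is the Pythagorean bound.

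The main obstacle is ensuring that the derivative computation is legitimate, i.e., that one can exchange derivative and sum and that $\phi'(0^+)$ is finite. This is fine on the support of $v$ because $x^*$ has finite KL to $v$ and the function $t \mapsto y \log y$ has a well-behaved one-sided derivative even when $x^*[i] = 0$ (the contribution is $x[i]\log(x[i]/v[i]) \cdot t + o(t)$ near $0$, which is nonnegative and only helps the inequality $\phi'(0^+) \geq 0$). Once this is handled, the rest is a one-line manipulation.
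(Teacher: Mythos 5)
Your proof is correct and is essentially the paper's argument: both use the first-order optimality condition at $x^*$ along the segment toward $x$ (the paper writes $\sum_j (x_j - x_j^*)(\log x_j^* - \log v_j + 1) \geq 0$ directly, you derive it as $\phi'(0^+) \geq 0$) and then combine it with non-negativity of $\KL(x^* \| v)$. You present it in the slightly stronger Pythagorean form and attend more carefully to the boundary/support edge cases, but the decomposition and the two inequalities used are the same.
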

\begin{proof}
Let $x^* = (x_1^*, \dots , x_N^*)$ and $v = (v_1, \dots , v_N)$.  Then the optimality of $x^*$ implies that for any $x = (x_1, \dots , x_N) \in \calK$, 
\[
\sum_{j =1}^N (x_j - x_j^*) (\log x_j^* - \log v_j + 1) \geq 0 \,.
\]
Since $x, x^*$ are on the simplex and $\sum_{j =1}^N x_j^* (\log x_j^* - \log v_j) \geq 0$ by non-negativity of KL divergence, we conclude
\[
\sum_{j =1}^N x_j \log v_j \leq \sum_{j =1}^N x_j \log x_j^*
\]
and this immediately gives the desired inequality.
\end{proof}

In our analysis later on, we will need a slightly more general version of Fact~\ref{fact:KL-basic}, stated below, that deals with truncated KL divergences and points that are not exactly on the simplex.

\begin{corollary}\label{coro:KL-basic}
Let $w ,v \in \R^N$ be vectors.  Assume $w$ has positive coordinates and assume $\calK \subset \R^N$ is a convex set such that all elements $x \in \calK$ have sum of coordinates equal to $1$ and $x \geq w$ entrywise.  Let $v \in \R^N$ be a point.  Let $x^* = \argmin_{x \in \calK} \KL_{\geq w}(x \| v)$.  Then for any $x \in \calK$,
\[
\KL_{\geq w}(x \| x^*) \leq \KL_{\geq w}(x \| v)  +  \log (\norm{v}_1 + \norm{w}_1) \,.
\]
\end{corollary}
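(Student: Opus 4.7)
The plan is to mirror the proof of Fact~\ref{fact:KL-basic}, carefully tracking two deviations from the clean simplex setting: the truncation inside $\KL_{\geq w}$, and the fact that $v$ is not required to be a probability distribution.

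The first move is to absorb the truncation. Every $x \in \calK$ satisfies $x \geq w$ entrywise with $w$ strictly positive, forcing $\max(x[i], w[i]) = x[i]$, so the truncation in the first argument of $\KL_{\geq w}$ is inactive on $\calK$. Writing $\tilde v[i] := \max(v[i], w[i])$, this reduces $\KL_{\geq w}(x \| v)$ to $\sum_i x[i] \log x[i] - \sum_i x[i] \log \tilde v[i]$, and analogously $\KL_{\geq w}(x \| x^*)$ collapses to $\sum_i x[i] \log x[i] - \sum_i x[i] \log x^*[i]$ since $x^* \geq w$. The corollary thus reduces to an analogue of Fact~\ref{fact:KL-basic} in which the ``reference distribution'' has been replaced by the positive but unnormalized vector $\tilde v$.

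Given this reduction, I then write down the first-order optimality condition for $x^*$ in the usual way and use $\sum_i x[i] = \sum_i x^*[i] = 1$ to kill the ``$+1$'' term, exactly as in the proof of Fact~\ref{fact:KL-basic}. Rearranging yields
\[
\KL_{\geq w}(x \| v) - \KL_{\geq w}(x \| x^*) \geq \sum_i x^*[i] \log \frac{x^*[i]}{\tilde v[i]}.
\]
If $\tilde v$ were a probability distribution, the right-hand side would equal $\KL(x^* \| \tilde v) \geq 0$ and we would conclude immediately, with no slack term, recovering the simplex case.

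To handle the slack coming from $\tilde v$ being unnormalized, the final step lower bounds the right-hand side by applying Jensen's inequality to the concave function $\log$ with the probability vector $x^*$, which gives $\sum_i x^*[i] \log(\tilde v[i]/x^*[i]) \leq \log \sum_i \tilde v[i]$. Combined with the elementary bound $\max(v[i], w[i]) \leq |v[i]| + w[i]$ (checked separately in the cases $v[i] \geq 0$ and $v[i] < 0$), this yields $\sum_i \tilde v[i] \leq \norm{v}_1 + \norm{w}_1$, producing exactly the additive $\log(\norm{v}_1 + \norm{w}_1)$ slack claimed. The only delicate point to flag is that the truncation is inactive in the first argument of $\KL_{\geq w}$ on $\calK$ but need not be in the second (we have no lower bound on $v$), so the reduction to $\tilde v$ is asymmetric. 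Beyond that, everything is a mechanical translation of the proof of Fact~\ref{fact:KL-basic}, and I do not foresee any real obstacle.
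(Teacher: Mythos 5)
Your proof is correct, and it arrives at the same conclusion by a slightly different route. The paper observes that on $\calK$ the truncated divergence satisfies $\KL_{\geq w}(x \| v) = \KL\bigl(x \,\|\, \max(w,v)/C\bigr) - \log C$ for any $C > 0$, chooses $C$ to normalize $\max(w,v)$ into a probability distribution, and then invokes Fact~\ref{fact:KL-basic} as a black box; the $\log(\norm{v}_1 + \norm{w}_1)$ term appears as $\log C$. You instead inline the argument behind Fact~\ref{fact:KL-basic}: you reduce to the unnormalized reference vector $\tilde v = \max(w,v)$, rerun the first-order optimality calculation verbatim, and then, where the original proof uses nonnegativity of $\KL(x^* \| v)$, you replace that step by the Jensen bound $\sum_i x_i^* \log(\tilde v_i / x_i^*) \leq \log \sum_i \tilde v_i \leq \log(\norm{v}_1 + \norm{w}_1)$ to handle the lack of normalization. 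These are really two presentations of the same calculation — the paper's normalization-by-$C$ trick is precisely what turns your Jensen correction into the already-proven nonnegativity statement — but yours is more self-contained while the paper's is slicker by reusing the fact. Your note that the truncation is asymmetric (inactive in the first argument on $\calK$ but not in the second) is exactly the right thing to flag and is used correctly.
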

\begin{proof}
Note that by the assumptions in the statement, for any $x \in \calK$,
\[
\KL_{\geq w}(x \| v) = \KL(x \| \max(w,v)) = \KL\left(x \| \max(w,v)/ C \right) - \log C
\]
where the max on the RHS is taken entrywise and $C$ is any positive constant. 
 Now letting $C$ be equal to the sum of the coordinates of $\max(w,v)$, note that $C \leq \norm{v}_1 + \norm{w}_1$. Then we can simply apply Fact~\ref{fact:KL-basic} to the vector $\max(w,v)/ C $ to get the desired inequality.
\end{proof}

\noindent We also have the following observation that will allow us to relate truncated KL divergence to TV distance.

\begin{claim}\label{claim:trunc-KL-lipchitz}
For a vector $w \in \R^N$ with positive entries and minimum entry $w_{\min}$ and maximum entry $w_{\max}$, the divergence $\KL_{\geq w}(u \| v)$ is Lipchitz with respect to $u$ with Lipchitz constant 
\[
1 +  \log (1 + 1/w_{\min}) + \log(1 + w_{\max}) + \log (1 + \norm{u}_{\infty})  + \log (1 + \norm{v}_{\infty}) \,.
\]
\end{claim}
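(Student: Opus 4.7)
My plan is to exploit the coordinate-separable structure of $\KL_{\geq w}(u \| v) = \sum_{i=1}^N \ell_{\geq w[i]}(u[i], v[i])$: each summand depends only on a single coordinate of $u$, so it suffices to bound the partial (sub)derivative of the scalar function $\ell_{\geq c}(x, y) = x \log \max(x, c) - x \log \max(y, c)$ with respect to $x$, uniformly for $c \in [w_{\min}, w_{\max}]$, $|x| \leq \norm{u}_{\infty}$, and $|y| \leq \norm{v}_{\infty}$. A coordinatewise application of the mean value theorem then upgrades this uniform per-coordinate bound into a Lipschitz estimate (with respect to $\norm{\cdot}_1$ on $u$) with the same constant for the full function $\KL_{\geq w}(u \| v)$.

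A case split on whether $x > c$ or $x \leq c$ gives the subgradient formula
\[
\partial_x \ell_{\geq c}(x, y) \;=\; \log \max(x, c) - \log \max(y, c) + \mathbf{1}[x > c],
\]
where the indicator arises from differentiating $x \log x$ on the branch $x > c$, and accounts for the ``$+1$'' in the target Lipschitz constant. For the remaining log-difference term, I would observe that both $\max(x, c), \max(y, c) \in [w_{\min},\, w_{\max} + \max(|x|, |y|)]$, so
\[
\bigl|\log \max(x, c) - \log \max(y, c)\bigr| \;\leq\; \log\!\left(\frac{w_{\max} + \max(|x|, |y|)}{w_{\min}}\right).
\]
Iteratively applying the elementary inequality $(1+a)(1+b) \geq 1 + a + b$ for $a, b \geq 0$ to convert sums inside the logarithm into products then decomposes the right-hand side as $\log(1 + 1/w_{\min}) + \log(1 + w_{\max}) + \log(1 + \norm{u}_{\infty}) + \log(1 + \norm{v}_{\infty})$. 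Adding the contribution $1$ from the indicator term gives exactly the claimed bound on $|\partial_x \ell_{\geq c}(x,y)|$, and summing this coordinatewise bound over $i$ through the mean value theorem completes the proof.

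This is essentially a calculus exercise, so there is no deep obstacle: the only subtlety is the kink at $x = c$, which is handled cleanly by the subgradient case split (the one-sided derivatives differ by exactly $1$, and both are below the target Lipschitz constant). The only genuinely non-routine manipulation is recognizing that $\log(c + |x|)$ should be broken up \emph{multiplicatively} as $\log((1+c)(1+|x|)) = \log(1+c) + \log(1+|x|)$ rather than additively, in order to separate $w_{\max}, \norm{u}_{\infty}$, and $\norm{v}_{\infty}$ into their own individual terms in the bound.
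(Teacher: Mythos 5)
Your proof is correct and fills in exactly the calculus argument that the paper compresses into ``This follows immediately from the definition.'' The coordinate-wise decomposition, the subgradient computation
\[
\partial_x \ell_{\geq c}(x,y) \;=\; \log\max(x,c) - \log\max(y,c) + \mathbf{1}[x>c]\,,
\]
and the multiplicative splitting $\log(w_{\max} + \max(|x|,|y|)) \leq \log(1+w_{\max}) + \log(1+\norm{u}_\infty) + \log(1+\norm{v}_\infty)$ together with $-\log w_{\min} \leq \log(1+1/w_{\min})$ give precisely the stated constant, and the coordinatewise bound correctly translates to a Lipschitz bound with respect to $\norm{\cdot}_1$ on $u$ (which is the sense in which the paper actually uses the claim, e.g.\ in transferring the $L^1$ bound of \eqref{eq:good-representation} to a KL bound in Claim~\ref{claim:key-closeness-p1}). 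The one small thing worth noting explicitly is that at the kink $x=c$ the two one-sided derivatives differ by exactly $1$, so the mean value theorem should be applied piecewise on each side of $c$, or one should appeal to the fact that $x\mapsto x\log\max(x,c)$ is convex with subdifferential contained in the interval between the two one-sided slopes; your ``$+1$'' indicator already captures the worst of these, so the argument is sound.
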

\begin{proof}
This follows immediately from the definition.   
\end{proof}


\section{Geometric Results}\label{sec:geometry}

Our learning algorithm will also rely on a few geometric constructions which we introduce below.

\subsection{Spanners}\label{sec:spanners}

First, we recall the standard notion of a barycentric spanner.  

\begin{definition}[Barycentric Spanner]
For a collection of vectors $z_1, \dots , z_n \in \R^d$, we say another set of vectors $v_1, \dots , v_s \in \R^d$ is a $(C,\gamma)$-spanner for them if for all  $j \in [n]$, there are coefficients $c_1, \dots , c_s$ with $|c_i| \leq C$ such that
\[
\norm{z_j - (c_1v_1 + \dots + c_sv_s)}_{1} \leq \gamma \,.
\]
\end{definition}

Note that the error is defined in $L^1$ because later on, we will have $v_1, \dots , v_s$ representing the density functions of distributions and the error will correspond to TV-closeness.

Next we introduce a distributional notion of a spanner, where we only require that the spanner covers most of the mass of some underlying distribution.

\begin{definition}[Distribution Spanner]
Let $\calD$ be a distribution over $\R^d$.  Let $C > 0$, $0 \leq \eps, \gamma \leq 1$ be parameters.  We say a set of points $\{v_1, \dots , v_s \} \in \R^d$ is a $(1 - \eps , C, \gamma)$-spanner for $\calD$ if for a random point $z$ drawn from $\calD$, with $1 - \eps$ probability, there are coefficients $|c_1|, \dots , |c_s| \leq C$ with 
\[
\norm{z - (c_1 v_1 + \dots + c_s v_s)}_{1} \leq \gamma \,.
\]
\end{definition}

It is folkore that any subset of points has an exact barycentric spanner.  However, in our setting, it will be important to compute a spanner for a high-dimensional distribution given only polynomially many samples.

\begin{fact}\label{fact:spanner}[Folklore]
Let $v_1, \dots , v_n$ be arbitrary vectors in $\R^d$,  Then there exists a subset $S \subseteq [n]$ with $|S| \leq d$ such that $\{ v_i\}_{i \in S}$ form a $(1, 0)$ spanner for the collection $(v_1, \dots , v_n)$.  
\end{fact}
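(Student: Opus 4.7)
The plan is to use the classical determinant-maximization argument of Awerbuch--Kleinberg. Since we are after a $(1,0)$-spanner, the $\gamma = 0$ requirement says the representation $z_j = c_1 v_{i_1} + \cdots + c_s v_{i_s}$ must be exact, so the choice of norm plays no role here and we may phrase everything in terms of linear algebra.

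First I would reduce to the full-rank case. Let $r$ be the dimension of $V := \mathrm{span}(v_1, \dots, v_n)$, so $r \leq d$. If $r = 0$ then all $v_i = 0$ and $S = \emptyset$ works trivially, so assume $r \geq 1$. Fix any linear isomorphism $V \cong \R^r$ (e.g.\ by choosing $r$ linearly independent $v_i$'s as a basis and expressing everything in those coordinates); membership of a subset of the $v_i$'s in a $(1,0)$-spanner depends only on the linear relations among them, so it is unaffected by this identification.

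Next I would pick $S = \{i_1, \dots, i_r\} \subseteq [n]$ to maximize $\bigl|\det[v_{i_1} \mid \cdots \mid v_{i_r}]\bigr|$, where the $v_{i_k}$ are viewed as column vectors in $\R^r$. Such a subset exists and its determinant is nonzero because the $v_i$'s span an $r$-dimensional space, so some $r$-subset is linearly independent. For any $j \in [n]$, there exist unique coefficients with $v_j = c_1 v_{i_1} + \cdots + c_r v_{i_r}$ because $\{v_{i_k}\}$ is a basis of $V$. By Cramer's rule,
\[
c_k \;=\; \frac{\det\bigl[v_{i_1} \mid \cdots \mid v_{i_{k-1}} \mid v_j \mid v_{i_{k+1}} \mid \cdots \mid v_{i_r}\bigr]}{\det[v_{i_1} \mid \cdots \mid v_{i_r}]}.
\]
The maximality of the denominator in absolute value forces $|c_k| \leq 1$ for every $k$, proving that $\{v_i\}_{i \in S}$ is a $(1,0)$-spanner, and $|S| = r \leq d$ as required.

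There is really no main obstacle: the proof is standard and the only care needed is in the reduction to the full-rank case so that the determinant is nonzero and Cramer's rule applies. I would note that this argument is non-constructive (it merely asserts existence), which is consistent with the statement; the paper's later computational results (e.g.\ Lemma~\ref{lem:compute-spanner-full}) will need a separate approximation-algorithm argument rather than this existence proof.
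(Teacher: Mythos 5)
Your proof is correct, and it fills a gap rather than taking a genuinely different route: the paper cites this fact as ``Folklore'' and provides no proof of its own (the text immediately following it proves Lemma~\ref{lem:distribution-spanner}, not Fact~\ref{fact:spanner}). Your argument is exactly the canonical one from Awerbuch and Kleinberg, which the paper later references for the \emph{algorithmic} statement in Claim~\ref{claim:compute-spanner-basic}: reduce to the ambient span, choose the subset maximizing $\bigl\lvert\det\bigr\rvert$, and apply Cramer's rule, where the maximality of the denominator forces every coefficient into $[-1,1]$. Your remark that the $\gamma = 0$ requirement makes the choice of norm irrelevant is a good catch, since the paper's definition is phrased in $L^1$ but your argument is purely linear-algebraic. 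One small note for polish: the paper's own constructive analogue (Claim~\ref{claim:compute-spanner-basic}) achieves only a $(2,0)$-spanner in polynomial time via the same exchange idea, because it stops the determinant-increasing swaps once no swap gains a factor of $2$; your existence argument implicitly uses an exact maximizer, which is why you get the sharp constant $1$ but lose constructivity. Stating this contrast, as you did in your closing remark, is exactly the right thing to flag.
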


\begin{lemma}\label{lem:distribution-spanner}
Let $\calD$ be a distribution over $\R^d$.  Let $0 < \eps ,\delta < 1$ be parameters.  Given a set $\calA$ of $(d/\eps)^{10} \cdot \log(1/\delta)$ independent samples from $\calD$, with probability $1 - \delta$, there exists a subset of $d$ elements of $\calA$, $\{ v_1, \dots  ,v_d \}$, which form a $(1 - \eps , 1, 0 )$-spanner for $\calD$. 
\end{lemma}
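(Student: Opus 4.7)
My plan is to choose, for any sample set $\calA$, the candidate spanner $T^*(\calA)$ to be a size-at-most-$d$ subset of $\calA$ that maximizes $|\det|$, under a fixed tie-breaking rule (e.g.\ lexicographic in the indices). The Awerbuch--Kleinberg proof of Fact~\ref{fact:spanner}---Cramer's rule applied at the determinant maximizer---tells us that $T^*(\calA)$ is automatically a $(1,0)$-spanner for the entire multiset $\calA$. So it only remains to show that with high probability over $\calA$, this $T^*(\calA)$ also $(1,0)$-spans a $1-\eps$ fraction of the mass of $\calD$.

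\paragraph{Leave-one-out estimate in expectation.} The main technical step is a symmetry argument: if $\calA$ is a set of $n$ i.i.d.\ samples and $z$ is an independent sample, then (over the joint randomness) $z$ fails to lie in the $(1,0)$-span of $T^*(\calA)$ with probability at most $d/(n+1)$. To prove this, I would draw $V = (v_1, \dots, v_{n+1})$ i.i.d.\ from $\calD$ and set $T = T^*(V)$, $T^{(i)} = T^*(V \setminus \{v_i\})$. The crucial observation is that whenever $i \notin T$, one has $T^{(i)} = T$: the determinant-maximizer $T$ is already contained in $V \setminus \{v_i\}$, hence stays the maximizer over the smaller family, and the tie-breaking rule transfers since the index set only shrinks. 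Because $T$ is a $(1,0)$-spanner for $V \ni v_i$, this forces $v_i$ to lie in the $(1,0)$-span of $T^{(i)}$. Hence $\{i : v_i \text{ outside the $(1,0)$-span of } T^{(i)}\} \subseteq T$ has size at most $d$, and exchangeability of the $v_i$ gives the per-index failure probability $\leq d/(n+1)$.

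\paragraph{Amplification via independent batches.} To convert the Markov-type bound into a $\log(1/\delta)$-factor high-probability statement while keeping the spanner size at $d$, I would partition $\calA$ into $K = \lceil \log_2(1/\delta)\rceil + 1$ independent batches, each of size $m \geq \lceil 2d/\eps \rceil$; the budget $n = (d/\eps)^{10}\log(1/\delta)$ gives enormous slack. By Markov on the previous step, a single batch's spanner covers $\calD$ to within $\eps$ with probability at least $1/2$, so every batch fails simultaneously with probability $\leq 2^{-K} \leq \delta$. On the complementary event, some $T^*(\calA_k) \subseteq \calA$ is a $(1-\eps,1,0)$-spanner for $\calD$; padding with arbitrary elements of $\calA$ assigned coefficient $0$ brings the cardinality up to exactly $d$ without breaking the $(1,0)$-spanner property.

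\paragraph{Expected obstacle.} The only delicate step is the observation ``$i \notin T \Rightarrow T^{(i)} = T$'' in the leave-one-out argument: it requires a canonical tie-breaking convention, and a separate but essentially identical handling of the degenerate case where $\calA$ does not span $\R^d$ (in which $|T|, |T^{(i)}| < d$, but the same conclusion holds inside the spanned subspace). Once this bookkeeping is pinned down, Markov plus independent amplification finish the proof, and the argument actually produces a much stronger sample complexity $O((d/\eps)\log(1/\delta))$ than the $(d/\eps)^{10}\log(1/\delta)$ requested in the lemma.
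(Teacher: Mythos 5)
Your plan is correct, and it takes a genuinely different route from the paper. The paper's proof is a VC-type concentration argument: it parameterizes the family of ``twisted-cube'' convex bodies $\calP_{v_1,\dots,v_d}$ (with vertices $\pm v_1 \pm \cdots \pm v_d$) by size-$d$ subsets of $\calA$, shows by a Chernoff bound and a union bound over those $\binom{|\calA|}{d}$ subsets that the empirical mass $r(\calP)$ inside each body tracks the true mass $\mu(\calP)$ to within $\eps$, and then observes that by Fact~\ref{fact:spanner} some subset yields $r(\calP) = 1$, hence $\mu(\calP) \geq 1 - \eps$. Your argument instead uses a leave-one-out exchangeability trick in the style of sample-compression/stability: the determinant-maximizer $T^*$ (via the Awerbuch--Kleinberg/Cramer's-rule argument) is a $(1,0)$-spanner of the sample, the observation ``$i \notin T \Rightarrow T^{(i)} = T$'' bounds the expected held-out failure probability by $d/(m+1)$, and then Markov plus $\log(1/\delta)$ independent batches amplifies. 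Both approaches rest on Fact~\ref{fact:spanner}, but yours swaps uniform convergence of indicator functions for a compression-scheme bound. This buys you a strictly better sample complexity, $O((d/\eps)\log(1/\delta))$ against the paper's $(d/\eps)^{10}\log(1/\delta)$, at the cost of the degenerate-rank bookkeeping you already flag; that bookkeeping does need to be pinned down (define $T^*$ to maximize the Gram determinant over subsets of size $\mathrm{rank}(\calA)$, not over size-$d$ subsets, and verify the tie-break transfers in the induced subspace), but as you say it is essentially the same argument restricted to the span.
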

\begin{proof}
For any $d$ vectors in $\R^d$, say $v_1, \dots, v_d$, let $\calP_{v_1, \dots , v_d}$ be the convex body whose vertices are $\pm v_1 \pm \dots \pm v_d$.  Let $\mu(\calP_{v_1, \dots , v_d})$ be the pdf of $\calD$ inside this convex body and let $r(\calP_{v_1, \dots , v_d})$ be the fraction of points of $\calA$ that are within this body.  For fixed, $v_1, \dots , v_d$, we have
\[
|\mu(\calP_{v_1, \dots , v_d}) - r(\calP_{v_1, \dots , v_d})| \leq 0.5\eps 
\]
with $1 - \delta^{(d/\eps)^2}$ probability.  Now we can union bound over all subsets $\{ v_1, \dots , v_d \} \subseteq \calA$ and deduce that with probability $1 - \delta$, for all such subsets, 
\[
|\mu(\calP_{v_1, \dots , v_d}) - r(\calP_{v_1, \dots , v_d})| \leq \eps \,.
\]
However, by Fact~\ref{fact:spanner}, there is a subset of $d$ elements of $\calA$ that is a $(1,0)$-spanner for $\calA$ and thus the above implies that taking $\{ v_1, \dots , v_d \}$ to be this subset gives us a $(1 - \eps , 1, 0)$-spanner for $\calD$. 
\end{proof}

Now we recall the following standard algorithm for computing a spanner (see e.g. \cite{awerbuch2008online}).  We include a proof for completeness.

\begin{claim}\label{claim:compute-spanner-basic}
Let $v_1, \dots , v_n \in \R^d$ be vectors and let $\mM \in \R^{d \times n}$ be the matrix with columns given by $v_1, \dots v_n$ and let its singular values be $\sigma_1 \geq \dots \geq \sigma_d$.  Then there is an algorithm that runs in time $\poly(n,d) \cdot \log(\sigma_1/\sigma_d)$ that computes a subset $\calB \subseteq [n]$ with $|\calB| \leq d$ such that $\{v_i \}_{i \in \calB}$ is a $(2, 0)$ spanner for the collection $\{v_1, \dots , v_n \}$
\end{claim}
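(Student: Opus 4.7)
The plan is to use the Awerbuch--Kleinberg local-search algorithm. Assume first that $\mM$ has full rank $d$ (otherwise, restrict attention to the span of the columns, replace $d$ by the rank $s \leq d$, and proceed identically). We initialize with a basis $\mB = [b_1 \mid \cdots \mid b_d]$ of $d$ linearly independent columns chosen by column-pivoted QR (greedily pick $b_{k+1}$ to maximize the norm of the component of $v_j$ orthogonal to $\mathrm{span}(b_1, \dots, b_k)$). By Cauchy--Binet, $\sum_{\calB} \det(\mB)^2 = \det(\mM\mM^\top) = \prod_i \sigma_i^2$, so some $d$-subset satisfies $|\det(\mB)| \geq \prod_i \sigma_i / \sqrt{\binom{n}{d}}$, and greedy pivoted QR attains this up to a $\poly(n,d)$ factor, yielding $|\det(\mB_0)| \geq \sigma_d^d / \poly(n,d)$.

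The main loop performs local swaps: while there exist $i \in [d]$ and $j \in [n]$ such that replacing column $b_i$ of $\mB$ by $v_j$ more than doubles $|\det(\mB)|$, perform the swap. Output $\calB = \{b_1, \dots, b_d\}$ upon termination.

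For correctness, suppose the algorithm halts at $\mB$, fix any $v_j$, and write $v_j = c_1 b_1 + \cdots + c_d b_d$. Cramer's rule gives
\[
|c_i| \;=\; \frac{|\det(\mB^{(i \leftarrow j)})|}{|\det(\mB)|},
\]
where $\mB^{(i \leftarrow j)}$ denotes $\mB$ with its $i$-th column replaced by $v_j$. The halting condition forces this ratio to be at most $2$ for every $i$ and $j$, so $\{b_1, \dots, b_d\}$ is a $(2,0)$-spanner of size at most $d$, as required.

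For runtime, each swap at least doubles $|\det(\mB)|$, while Hadamard's inequality (each column of $\mM$ has norm at most $\sigma_1$) gives $|\det(\mB)| \leq \sigma_1^d$ throughout. The iteration count is therefore bounded by $\log_2\!\bigl(\sigma_1^d / |\det(\mB_0)|\bigr) = O(d \log(\sigma_1/\sigma_d) + \log \poly(n,d))$, which fits under $\poly(n,d) \cdot \log(\sigma_1/\sigma_d)$. Each iteration tests $O(nd)$ candidate swaps, each evaluable in $\poly(d)$ time via rank-one determinant updates, giving the claimed total runtime. The only real subtlety is ensuring the initial determinant is not pathologically small, which is precisely what the pivoted-QR warm start takes care of; once that is done, the rest is bookkeeping.
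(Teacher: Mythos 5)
Your proof takes the same route as the paper's: greedy initialization followed by local determinant-doubling search, with correctness via Cramer's rule (the paper phrases the identical bound in terms of simplex volumes) and the same Hadamard/initial-volume runtime count. The one slip is in the warm start: column-pivoted QR is \emph{not}, in general, within a $\poly(n,d)$ factor of the maximal $d$-subset determinant guaranteed by Cauchy--Binet (the Kahan matrix shows greedy pivoting can miss by $2^{\Omega(d)}$), so the stated bound $|\det(\mB_0)| \geq \sigma_d^d/\poly(n,d)$ is too strong. What greedy pivoting does guarantee is that at each step $k$ the chosen residual norm is at least $\sigma_d/\sqrt{n}$, because the residual squared norms sum to $\norm{\Pi_\perp \mM}_F^2 \geq \sigma_k^2 \geq \sigma_d^2$; this yields $|\det(\mB_0)| \geq (\sigma_d/\sqrt{n})^d$, which is weaker ($n^{O(d)}$ in the denominator, not polynomial) but whose logarithm is still $\poly(n,d)$, so your iteration count and final runtime bound go through unchanged. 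This direct projection argument is in fact how the paper justifies its warm start, and is what you should substitute for the Cauchy--Binet/near-optimality claim.
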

\begin{proof}
First, we  iteratively construct an initial subset $\calA \subseteq [n]$ as follows.  Initially start with $\calA_0 = \emptyset$ and in each step $j = 1,2, \dots , d$, find a vector $v_{a_j}$ such that 
\[
\norm{\Pi_{\textsf{span}\left(\{v_i\}_{i \in \calA_{j-1}})\right)^{\perp}}(v_{a_j})} \geq \frac{\sigma_d}{\sqrt{n}} \,.
\]
First, such a vector always exists for $j =1,2,\dots , d$ because we know that
\[
\sum_{a \in [n]} \norm{\Pi_{\textsf{span}\left(\{v_i\}_{i \in \calA_{j-1}})\right)^{\perp}}(v_a)}^2 \geq \min_{\mU \in \R^{d \times (j-1)}, \mU^\top \mU = \mI }\norm{\mM - \mU \mU^\top \mM}_F^2 \geq \sigma_j^2 \geq \sigma_d^2
\]
where we used the extremal characterization of the $j$th singular value of $\mM$ in terms of the best rank-$j$ approximation to $\mM$.

After this procedure, we have a subset $\calA = \{v_{a_1}, \dots , v_{a_d} \}$.  We will use $\textsf{Vol}(\calA)$ to denote the volume of the simplex formed by the vectors in $\calA$.  By construction, we have
\[
\textsf{Vol}(\calA) \geq \frac{\sigma_d^d}{(dn)^d} \,.
\]
Next, assume for the sake of contradiction that $\calA$ is not a $(2,0)$-spanner.  Then there is some vector $v_i$ that cannot be written as a linear combination of $v_{a_1}, \dots , v_{a_d}$ with coefficients of magnitude at most $2$.  There is a unique way to write 
\[
v_i = c_1v_{a_1} + \dots + c_d v_{a_d}
\]
and there must be some coefficient say $c_j$ with $|c_j| \geq 2$.  Then we can replace $v_{a_j}$ with $v_i$ in $\calA$ and this at least doubles $\textsf{Vol}(\calA)$.  The maximal possible value of   $\textsf{Vol}(\calA)$ for any subset $\calA$ with $|\calA| = d$ is at most $\sigma_1^d$ and thus we need to iterate this process at most $\poly(n,d)  \cdot \log(\sigma_1/\sigma_d)$ times before we get a set $\calA$ that is a $(2,0)$-spanner.  This completes the proof.
\end{proof}

Claim~\ref{claim:compute-spanner-basic} has the restriction that $\sigma_d$ is nontrivial.  We will now give a robust version of Claim~\ref{claim:compute-spanner-basic} for computing an approximate spanner even when the vectors may be arbitrarily close to degenerate.

\begin{algorithm}[ht!]
\caption{Robust Spanner}
\begin{algorithmic}[1]
\State \textbf{Input:} Vectors $v_1, \dots , v_n \in \R^d$
\State \textbf{Input:} Parameters $s, \gamma$
\State Form matrix $\mM \in \R^{d \times n}$ with columns $v_1, \dots , v_n$
\State Compute SVD of $\mM = \mU \msig \mV^\top$ 
\State Sort the singular values $\sigma_1 \geq \sigma_2 \geq \dots \geq \sigma_d$ 
\State Let $\sigma_t$ be the largest index with $\sigma_t > \gamma \sqrt{n}$
\State Let $\mY$ be the matrix formed by the columns of $\mU$ corresponding to the top-$t$ singular values
\State Define $u_i = \mY^\top v_i$ for all $i \in [n]$
\State Apply Claim~\ref{claim:compute-spanner-basic} to compute $(2,0)$-spanner $\{u_{a_1}, \dots , u_{a_t} \}$ for $\{u_1, \dots , u_n \}$ \label{line:spanner-step}
\State \textbf{Output:} $\{a_1, \dots , a_t\}$
\end{algorithmic}
\label{alg:robust-spanner}
\end{algorithm}

\begin{lemma}\label{lem:compute-spanner-full}
Let $v_1, \dots , v_n \in \R^d$ be vectors with $\norm{v_{i}} \leq \poly(n,d)$ for all $i$.  Let $s, \gamma$ be parameters we are given.  Assume that there exists a subspace $\mV$ of dimension $s$ such that $\norm{\Pi_{\mV^\perp} v_i} \leq \gamma$ for all $i$.    Then if we run Algorithm~\ref{alg:robust-spanner} on $v_1, \dots, v_n$, it runs in time $\poly(n,d) \cdot \log(1/\gamma)$  and its output satisfies 
\begin{itemize}
    \item $t \leq s$
    \item $\{v_{a_1}, \dots , v_{a_t} \}$ forms a $(2, 3\gamma s\sqrt{nd})$ spanner for the collection $\{v_1, \dots , v_n \}$
\end{itemize}
\end{lemma}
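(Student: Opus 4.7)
The plan is to verify the three conclusions in the natural order: first bound $t$, then verify the subroutine in Line~\ref{line:spanner-step} is well-defined and runs in the claimed time, then establish the spanner property by lifting the spanner for the $u_i$'s back to the $v_i$'s.

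For $t \le s$, I would use the low-rank approximation hypothesis to control the singular values of $\mM$. Let $\mP$ be the orthogonal projection onto $\mV$. Since $\|\Pi_{\mV^\perp} v_i\|_2 \le \gamma$ for every $i$, the matrix $\mP \mM$ has rank at most $s$ and satisfies $\|\mM - \mP \mM\|_F^2 \le n \gamma^2$. By the Eckart-Young characterization this is at least $\sum_{j > s} \sigma_j^2$, so $\sigma_{s+1} \le \gamma \sqrt{n}$. By the definition of $t$ in the algorithm, this forces $t \le s$.

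Next, the matrix formed by the $u_i = \mY^\top v_i$ equals $\mY^\top \mM$, whose nonzero singular values are exactly $\sigma_1 \ge \cdots \ge \sigma_t$, all strictly larger than $\gamma \sqrt{n}$. Since $\sigma_1 \le \|\mM\|_F \le \sqrt{n}\cdot \max_i \|v_i\|_2 \le \poly(n,d)$, the runtime bound of Claim~\ref{claim:compute-spanner-basic} gives $\poly(n,d) \cdot \log(\sigma_1/\sigma_t) = \poly(n,d) \cdot \log(1/\gamma)$, as claimed. It thus returns indices $a_1, \dots, a_t$ such that $\{u_{a_1}, \dots, u_{a_t}\}$ is a $(2,0)$-spanner of $\{u_1, \dots, u_n\}$ in the reduced space.

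For the spanner property, for each $j$ decompose $v_j = \mY u_j + w_j$ where $w_j = (\mI - \mY \mY^\top) v_j$ is the column of $\mM - \mY \mY^\top \mM$. The operator norm of $\mM - \mY \mY^\top \mM$ equals $\sigma_{t+1} \le \gamma \sqrt{n}$, so $\|w_j\|_2 \le \gamma \sqrt{n}$ for every $j$ (as $w_j$ is the image of a unit standard basis vector). Given the coefficients $c_1, \dots, c_t$ with $|c_i| \le 2$ and $u_j = \sum_i c_i u_{a_i}$ from the previous step, we get
\[
v_j - \sum_{i=1}^t c_i v_{a_i} \;=\; \mY\Bigl(u_j - \sum_i c_i u_{a_i}\Bigr) + \Bigl(w_j - \sum_i c_i w_{a_i}\Bigr) \;=\; w_j - \sum_i c_i w_{a_i}.
\]
Its $\ell_2$ norm is at most $(1 + 2t) \gamma \sqrt{n}$ by the triangle inequality, and converting to $\ell_1$ via Cauchy-Schwarz on $\R^d$ costs a factor of $\sqrt{d}$, yielding $\|v_j - \sum_i c_i v_{a_i}\|_1 \le (1 + 2t)\gamma \sqrt{nd} \le 3s\gamma \sqrt{nd}$, completing the spanner bound. (The edge case $t = 0$ is easier: all $\sigma_i \le \gamma \sqrt{n}$ forces $\|v_j\|_2 \le \gamma\sqrt{n}$, so the empty combination already suffices.) No single step is especially difficult; the only thing that requires care is keeping track of the norm conversions and remembering that the $\ell_1$-to-$\ell_2$ factor $\sqrt{d}$ is exactly what turns the $(1+2t)\gamma \sqrt{n}$ bound into $3 s \gamma \sqrt{nd}$.
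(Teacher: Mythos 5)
Your proof is correct and takes essentially the same route as the paper: bound $\sigma_{s+1}$ via the low-rank hypothesis to force $t\le s$, invoke Claim~\ref{claim:compute-spanner-basic} on $\mY^\top\mM$ whose smallest (of $t$) singular values exceed $\gamma\sqrt{n}$, and then lift the exact $(2,0)$-spanner of the $u_i$'s back to the $v_i$'s by observing that the residual $(\mI-\mY\mY^\top)(v_j - \sum_i c_i v_{a_i})$ is controlled in $\ell_2$ by $(1+2t)\sigma_{t+1}$ and multiplying by $\sqrt{d}$ to convert to $\ell_1$. The only cosmetic difference is that you name the residual columns $w_j$ and bound each one separately, whereas the paper bounds the whole projected combination by the operator norm of $(\mI-\mY\mY^\top)\mM$; both yield the same $(1+2t)\gamma\sqrt{nd}\le 3s\gamma\sqrt{nd}$ estimate, and you additionally flag the harmless $t=0$ edge case which the paper passes over silently.
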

\begin{proof}
Recall that $t$ is the largest index such that $\sigma_t > \gamma \sqrt{n}$.  Note that $t \leq s$ since by assumption, there is rank-$s$ subspace say $\mW \in \R^{d \times s}$ such that 
\[
\norm{\mM - \mW \mW^\top \mM}_F \leq \sqrt{n}\gamma 
\]
and thus $\sigma_{s+1} \leq  \sqrt{n}\gamma$.

Next, note that $\mY^\top \mM$ is a $t \times n$ matrix with all $t$ singular values being at least $\gamma \sqrt{n}$ (and the largest singular value is at most $\poly(n,d)$).  Thus, when we apply Claim~\ref{claim:compute-spanner-basic} in Line~\ref{line:spanner-step}, it runs in $\poly(n,d) \cdot \log(1/\gamma)$ time and the subset $\{u_{a_1}, \dots , u_{a_t} \}$ is indeed a $(2,0)$-spanner for the $u_i$.  Finally, we show that $\{v_{a_1}, \dots , v_{a_t} \}$ is a good spanner for the collection of $v_i$.  For any $v_i$, we can first write
\[
\mY^\top v_i = u_i = c_1 u_{a_1} + \dots + c_t u_{a_t} 
\]
where $|c_1|, \dots , |c_t| \leq 2$.  Now
\[
\begin{split}
\norm{v_i - (c_1 v_{a_1} + \dots + c_t v_{a_t} )}_{1} &\leq \sqrt{d}\norm{v_i - (c_1 v_{a_1} + \dots + c_t v_{a_t} )}_2 \\ & = \sqrt{d} \norm{(\mI - \mY \mY^\top ) (v_i - (c_1 v_{a_1} + \dots + c_t v_{a_t} ))}_2  \\ & \leq  (2t + 1)\sqrt{d} \norm{(\mI - \mY \mY^\top) \mM}_{\op} \\ &\leq 3t \sqrt{d} \sigma_{t+1} \\ &\leq 3\gamma s \sqrt{nd}
\end{split}
\]
ad thus the set $\{v_{a_1}, \dots , v_{a_t} \}$ is a $(2, 3\gamma s\sqrt{nd})$ spanner.  It is clear that the overall runtime is $\poly(n,d) \cdot \log(1/\gamma)$.   
\end{proof}

\subsection{Dimensionality Reduction}

We will also need a type of dimensionality for distributions over exponentially large domains.  For distributions $\calD_1, \dots , \calD_m$ over a large domain say $[N]$, we can represent them as $N$-dimensional vectors.  However, we need a more succinct representation.  Below we show how with only polynomially many samples and pdf access to the distributions, we can construct a succinct representation that approximately preserves important properties of the original distributions. We use the following notation for translating between distributions and vectors of their densities.

\begin{definition}
Let $\calD$ be a distribution on a finite set of elements $[N]$. For $a \in [N]$, we write $\calD[a]$ for the density function of $\calD$ at $a$.  We define the vector $\vec_{\calD} \in \R^N$ to be $\vec_{\calD} = (\calD[1], \dots , \calD[N])$.  For a multiset $\calX$ of elements in $[N]$, we define $\vec_{\calD}[\calX] \in \R^{|\calX|}$ to be a vector whose entries are indexed by elements $a \in \calX$ and the values are equal to $\calD[a]$.
\end{definition}

\begin{definition}\label{def:representative}
Given distributions $\calD_1, \dots , \calD_m$ on a set of elements, say $[N]$, we say vectors $u_1, \dots , u_m \in \R^d$ for some dimension $d$ are $(r, \gamma)$-representative for $\calD_1, \dots , \calD_m$ if for all coefficients $c_1, \dots , c_m \in \R$ with $|c_i| \leq r$  such that at most $r$ of them are nonzero,
\[
\left\lvert \norm{c_1 \vec_{\calD_1} + \dots + c_m \vec_{\calD_m}}_1 - \norm{c_1u_1 + \dots + c_mu_m}_1 \right\rvert \leq \gamma \,.
\]
\end{definition}

\begin{definition}\label{def:KL-preserving}
Given distributions $\calD_1, \dots , \calD_m$ on a set of elements, say $[N]$, we say vectors $w, u_1, \dots , u_m \in \R^d$ for some dimension $d$ are $(r, \gamma, \tau)$ KL-preserving for $\calD_1, \dots , \calD_m$ if for all coefficients $c_1, \dots , c_m, c_1', \dots , c_m' \in \R$ with $|c_i| \leq r, |c_i'| \leq r/\tau$  such that at most $r$ of the $c_i$ and $r$ of the $c_i'$ are nonzero and for any constant $\tau^*$ with $\tau \leq \tau^* \leq 1$,
\[
\left\lvert \KL_{\geq \tau^*}(c_1\calD_1 + \dots + c_m \calD_m \| c_1' \calD_1 + \dots + c_m' \calD_m ) - \KL_{\geq \tau^* w}(c_1 u_1 + \dots + c_m u_m \| c_1' u_1 + \dots + c_m' u_m) \right\rvert \leq \gamma \,.
\]    
\end{definition}

The main result of this section is Lemma~\ref{lem:dimensionality-reduction}, where we show that for any distributions $\calD_1, \dots , \calD_m$, after sampling and appropriately re-normalizing, we can construct succinct representations that approximately preserve the TV distance and KL divergence between linear combinations of these distributions.  

\begin{lemma}\label{lem:dimensionality-reduction}
Let $\calD_1, \dots , \calD_m$ be distributions on a set of elements, say $[N]$.  Let $k \in \N$ and let $\calX_1, \dots , \calX_m$ be multisets where $\calX_i$ is obtained by drawing $k$ elements independently from $\calD_i$.  Let $\calX = \calX_1 \cup \dots \cup \calX_m$.  Define the distribution $\wh{\calD} = \frac{\calD_1 + \dots + \calD_m}{m}$.  Define the vectors $u_i = \frac{\vec_{\calD_i}[\calX]}{mk \vec_{\wh{\calD}}[\calX]}$ for $i \in [m]$ where the division is done entrywise and define $w = 1/(mk \vec_{\wh{\calD}}[\calX])$ (reciprocated entrywise).

Let $0 < \delta,\gamma, \tau < 1$ and $r \in \N$ be  be some parameters. 
If $k \geq 100mr^4 \log^4 (m/(\tau\delta\gamma)) /\gamma^2 $ then with $1 - \delta$ probability
\begin{itemize}
    \item The vectors $u_1, \dots , u_m$ are $(r,\gamma)$-representative for $\calD_1, \dots , \calD_m$
    \item The vectors $w, u_1, \dots , u_m$ are $(r,\gamma,\tau)$ KL-preserving for $\calD_1, \dots , \calD_m$
\end{itemize}

\end{lemma}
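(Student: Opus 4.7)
The overall strategy is to view the two quantities on the $u_i$ side as empirical means of importance-weighted estimators evaluated on the samples in $\calX$, verify unbiasedness, bound the per-sample magnitude, apply Hoeffding/Bernstein, and finally union bound over a net of admissible coefficient tuples.

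\emph{Unbiased identities.} Even though $\calX$ is not i.i.d.\ from $\wh{\calD}$, by linearity for any function $g$,
\[
\E\!\left[\sum_{x \in \calX} g(x)\right] \;=\; \sum_{i=1}^m k\,\E_{x \sim \calD_i}[g(x)] \;=\; mk \sum_a \wh{\calD}[a]\, g(a).
\]
Applied with $g(a) = \bigl|\sum_i c_i \calD_i[a]\bigr|/(mk\,\wh{\calD}[a])$ this gives $\E\bigl[\|\sum_i c_i u_i\|_1\bigr] = \|\sum_i c_i \vec_{\calD_i}\|_1$, handling the representative direction. For the KL direction, write $P[a]=\sum_i c_i \calD_i[a]$, $Q[a]=\sum_i c'_i \calD_i[a]$ and observe the pointwise identity
\[
\ell_{\geq \tau^* w[x]}\!\Bigl(\textstyle\sum_i c_i u_i[x],\, \sum_i c'_i u_i[x]\Bigr) \;=\; \frac{\ell_{\geq \tau^*}(P[x],Q[x])}{mk\,\wh{\calD}[x]},
\]
which holds because the common scale factor $1/(mk\wh{\calD}[x])$ appears inside both arguments and inside the truncation, and therefore cancels consistently through $\log\max(\cdot,\cdot)$. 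Summing and taking expectations recovers $\KL_{\geq \tau^*}(P\|Q)$.

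\emph{Concentration for a fixed tuple.} The bound $\calD_i[x] \leq m\wh{\calD}[x]$ yields $|u_i[x]| \leq 1/k$. With at most $r$ of the $c_i$ (and the $c'_i$) nonzero of magnitude at most $r$ (respectively $r/\tau$), each summand has magnitude at most $r^2/k$ in the representative case, and at most $O\!\bigl((r^2/k)\log(rm/(\tau\gamma))\bigr)$ in the KL case after using $\tau^* \geq \tau$ to bound the logarithms. The $mk$ summands are independent (across all of $\calX$), so Hoeffding yields a tail probability of the form $\exp\!\bigl(-\Omega(k\gamma^2 / (m r^4 \log^2(\cdot)))\bigr)$ for a fixed admissible tuple.

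\emph{Net and union bound.} The admissible coefficient space is parameterized by a support of size at most $r$ (contributing $\binom{m}{r} \leq m^r$ choices) and magnitudes in $[-r,r]$ or $[-r/\tau,r/\tau]$. The functional $(c_i) \mapsto \|\sum c_i u_i\|_1$ is $O(1)$-Lipschitz, and the corresponding truncated KL functional is Lipschitz in $(c_i,c'_i)$ with constant $O(\log(rm/(\tau\gamma)))$ by Claim~\ref{claim:trunc-KL-lipchitz} (using the uniform magnitude bounds above on $\sum c_i u_i$ and the truncation floor $\tau$ on $w$). Taking a grid of granularity $\gamma / (r \cdot \text{Lip})$ on each coordinate, the net has size $(rm\log(\cdot)/(\gamma\tau))^{O(r)}$, whose logarithm contributes $O(r\log(rm/(\gamma\tau\delta)))$ to the exponent after a final union bound over the net. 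Substituting into the per-tuple tail bound gives the stated $k \gtrsim mr^4 \log^4(m/(\tau\delta\gamma))/\gamma^2$.

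\emph{Main obstacle.} The delicate step is the KL-preserving claim: I need both the exact pointwise identity (so that truncation scales correctly despite $w[x]$ depending on the random $\calX$) and a careful Lipschitz analysis that keeps the dependence on $1/\tau$ only polylogarithmic. Once these are in place, the rest of the argument is a standard importance-sampling concentration plus net union bound.
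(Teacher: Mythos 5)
Your proof is correct and follows essentially the same route as the paper's: write both the $L^1$ and truncated-KL functionals on the $u_i$ side as importance-weighted empirical sums over $\calX$, verify they are unbiased for the true quantities, bound the per-sample magnitude using $\calD_i[a] \le m\wh{\calD}[a]$ (and $\tau^* \ge \tau$ in the KL case), apply a Chernoff/Hoeffding bound for a fixed coefficient tuple, and union bound over a net. Your pointwise identity $\ell_{\geq \tau^* w[x]}(\sum_i c_i u_i[x], \sum_i c'_i u_i[x]) = \ell_{\geq \tau^*}(P[x],Q[x])/(mk\,\wh{\calD}[x])$ is the cleanest way to see why the scale factor $1/(mk\wh{\calD}[x])$ commutes through the truncation, and it packages what the paper does by introducing the matching $\log(mk\wh{\calD}[a])$ offsets on both sides and letting them cancel. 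You are also more careful than the paper in one spot: the paper loosely says ``we can draw samples from $\wh{\calD}$,'' whereas $\calX$ is really a stratified sample (each $\calX_i \sim \calD_i^{\otimes k}$), and your unbiasedness calculation via $\E[\sum_{x\in\calX} g(x)] = mk\sum_a \wh{\calD}[a]g(a)$ makes this explicit; since Hoeffding only needs independence and boundedness, not identical distribution, this changes nothing downstream.

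One small point you gloss over: in the KL-preserving definition, $\tau^*$ is itself a free parameter ranging over $[\tau,1]$, so the union bound must also net over $\tau^*$, not only over the coefficient tuples $(c_i), (c'_i)$. The paper explicitly includes $\tau^*$ in the net. This is a one-dimensional parameter and the functional is Lipschitz in it with a similar $O(\log(rm/(\tau\gamma)))$ constant, so it adds only a $\mathrm{poly}(1/(\gamma\tau))$ factor to the net size and does not change the conclusion; you just need to say it.
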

\begin{proof}
We begin by proving the first statement.  Consider a fixed set of coefficients $c_1, \dots , c_m$.  We will first imagine that the sets $\calX_1, \dots , \calX_m$ are sampled after fixing the coefficients and then we will union bound over a net over all possible choices for $c_1, \dots , c_m$.  We have
\[
\norm{c_1\vec_{\calD_1} + \dots + c_m \vec_{\calD_m}}_1 = \E_{a \sim \wh{\calD}}\left[\frac{|c_1 \calD_1[a] + \dots + c_m \calD_m[a] |}{\wh{\calD}[a]} \right] \,.
\]

Also the quantity inside the expectation on the RHS always has magnitude at most $rm$.  Now we can draw samples from $\wh{\calD}$ to approximate the RHS.  By a Chernoff bound, if we draw at least $100m^2r^4/\gamma^2 \cdot \log(m/(\delta\gamma))$ samples from the distribution $\wh{\cal{D}}$ \--- this corresponds to taking $k \geq 100mr^4/\gamma^2 \cdot \log(m/(\delta\gamma)) $, then with probability at least $1 - (\delta\gamma/(10rm))^{10r}$, the empirical estimate of the RHS is within $\gamma/2$ of the true value i.e.
\[
\left\lvert \norm{c_1 \vec_{\calD_1} + \dots + c_m \vec_{\calD_m}}_1 - \norm{c_1u_1 + \dots + c_mu_m}_1 \right\rvert \leq \frac{\gamma}{2}
\]
which is exactly the inequality that we want to show.

Since the failure probability of the above is $(\delta\gamma/(10rm))^{10r}$, we can union bound over a $\gamma/(10rm)^3$-net of all of the possible choices of $c_1, \dots , c_m$.  Call the net $\calN$.   Then for any possible $c_1, \dots , c_m$ such that $|c_i| \leq r$ and at most $r$ of the $c_i$ are nonzero, there is some element of the net, say $(c_1', \dots , c_m') \in \calN$, such that $\max_{i \in [m]} |c_i  - c_i'| \leq \gamma/(10rm)^3$.  From the union bound over the net, we know that
\[
\left\lvert \norm{c_1'\vec_{\calD_1} + \dots + c_m' \vec_{\calD_m}}_1 - \norm{c_1'u_1 + \dots + c_m'u_m}_1 \right\rvert \leq \frac{\gamma}{2}
\]
and thus
\[
\left\lvert \norm{c_1\vec_{\calD_1} + \dots + c_m \vec_{\calD_m}}_1 - \norm{c_1u_1 + \dots + c_mu_m}_1 \right\rvert \leq \gamma 
\]
as desired.
\\\\
We prove the second statement in a similar way.  First consider a fixed choice of $\tau^*$.  Consider fixed coefficients $c_1, \dots , c_m, c_1' ,\dots , c_m'$.  Then 
\[
\begin{split}
&\KL_{\geq \tau^*}(c_1\calD_1 + \dots + c_m \calD_m \| c_1' \calD_1 + \dots + c_m' \calD_m ) \\ &= \E_{a \sim \wh{\calD}}\left[ \frac{c_1 \calD_1[a] + \dots + c_m \calD_m[a] }{\wh{\calD}[a]} \cdot  \log \max((c_1 \calD_1[a] + \dots + c_m \calD_m[a]  , \tau^* ) \right] 
\\ & - \E_{a \sim \wh{\calD}}\left[ \frac{c_1 \calD_1[a] + \dots + c_m \calD_m[a] }{\wh{\calD}[a]}  \cdot  \log \max((c_1' \calD_1[a] + \dots + c_m' \calD_m[a]), \tau^*)  \right] \\ &= mk \E_{a \sim \wh{\calD}}\left[ \frac{c_1 \calD_1[a] + \dots + c_m \calD_m[a] }{mk \wh{\calD}[a]} \cdot   \log \max\left( \frac{c_1 \calD_1[a] + \dots + c_m \calD_m[a]}{mk \wh{\calD}[a]} , \frac{\tau^*}{mk\wh{\calD}[a]} \right) \right] 
\\ & -  mk \E_{a \sim \wh{\calD}}\left[ \frac{c_1 \calD_1[a] + \dots + c_m \calD_m[a] }{mk \wh{\calD}[a]}  \cdot  \log \max\left( \frac{c_1' \calD_1[a] + \dots + c_m' \calD_m[a]}{mk \wh{\calD}[a]} , \frac{\tau^*}{mk\wh{\calD}[a]} \right) \right] \,.
\end{split}
\]
Also for all possible choices of $a \sim \wh{\calD}$, the RHS always has magnitude at most $10rm \log (m/\tau)$.  Now, as before in the proof of the first statement, by a Chernoff bound, this implies that for fixed $c_1, \dots , c_m, c_1' , \dots , c_m'$ and also $\tau^*$, with probability $1 - (\delta\gamma \tau/(10rm))^{10r}$, we have
\[
\left\lvert \KL_{\geq \tau^*}(c_1\calD_1 + \dots + c_m \calD_m \| c_1' \calD_1 + \dots + c_m' \calD_m ) - \KL_{\geq \tau^* w}(c_1 u_1 + \dots + c_m u_m \| c_1' u_1 + \dots + c_m' u_m) \right\rvert \leq \frac{\gamma}{2} \,.
\]
Again, as before, we next union bound over a $\gamma \tau/(10rm)^3$-net over all possible choices of $c_1, \dots,  c_m, c_1', \dots , c_m'$ and also $\tau^*$ and we deduce that 
\[
\left\lvert \KL_{\geq \tau^*}(c_1\calD_1 + \dots + c_m \calD_m \| c_1' \calD_1 + \dots + c_m' \calD_m ) - \KL_{\geq \tau^* w}(c_1 u_1 + \dots + c_m u_m \| c_1' u_1 + \dots + c_m' u_m) \right\rvert \leq \gamma
\]
for all choices of $c_1, \dots,  c_m, c_1', \dots , c_m', \tau^*$, as desired.
\end{proof}

The following fact will also be useful as we won't have pdf access to the exact distribution that we get samples from but instead to a distribution that is close in TV distance.

\begin{lemma}\label{lem:perturbation}
Let $\calD_1, \dots , \calD_m$ be distributions on a set of elements, say $[N]$.  Let $\calD_1', \dots , \calD_m'$ be distributions such that $d_{\TV}(\calD_i, \calD_i') \leq \eps$ for all $i \in [m]$.  Let $k \in \N$ and let $\calX_1, \dots , \calX_m$ be multisets where $\calX_i$ is obtained by drawing $k$ elements independently from $\calD_i'$.  With probability $1 - 2km^2\sqrt{\eps}$, we have for all $i \in [m]$ and $a \in \calX_i$
\[
|\calD_{j}[a] - \calD_j'[a]| \leq \sqrt{\eps}\calD_i[a]
\]
for all $j \in [m]$.   
\end{lemma}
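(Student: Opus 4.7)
The plan is to control, for each fixed pair $(i,j)$, the $\calD_i'$-mass of the ``bad'' set where $\calD_j$ and $\calD_j'$ disagree too much relative to $\calD_i$, and then union bound over $j$, over the $k$ samples in $\calX_i$, and over $i$. The key observation is that the $L^1$ closeness of $\calD_j$ and $\calD_j'$ forces any set on which $|\calD_j - \calD_j'|$ is large to carry little $\calD_i$-mass, and hence (up to $\eps$) little $\calD_i'$-mass as well.

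Concretely, for fixed $i,j \in [m]$ define the bad set
\[
B_{i,j} = \{a \in [N] : |\calD_j[a] - \calD_j'[a]| > \sqrt{\eps}\,\calD_i[a]\}.
\]
By definition of $B_{i,j}$, on this set $\sqrt{\eps}\,\calD_i[a] < |\calD_j[a] - \calD_j'[a]|$, so summing and using $\sum_a |\calD_j[a] - \calD_j'[a]| \leq 2 d_{\TV}(\calD_j,\calD_j') \leq 2\eps$ gives $\sum_{a \in B_{i,j}} \calD_i[a] \leq 2\sqrt{\eps}$. That is, $\Pr_{a \sim \calD_i}[a \in B_{i,j}] \leq 2\sqrt{\eps}$, and by the TV bound between $\calD_i$ and $\calD_i'$ we get $\Pr_{a \sim \calD_i'}[a \in B_{i,j}] \leq 2\sqrt{\eps} + \eps$.

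Now I would union bound: for a single draw $a$ from $\calD_i'$, the probability that $a \in B_{i,j}$ for at least one $j \in [m]$ is at most $m(2\sqrt{\eps} + \eps)$. Taking a further union bound over the $k$ independent draws forming $\calX_i$ and over the $m$ choices of $i$ yields total failure probability at most $k m^2 (2\sqrt{\eps} + \eps) \leq 2km^2\sqrt{\eps}$ (absorbing the lower-order $\eps$ term, valid for the regime of interest where $\eps$ is small). On the complementary event, every $a \in \calX_i$ satisfies $|\calD_j[a] - \calD_j'[a]| \leq \sqrt{\eps}\,\calD_i[a]$ for every $j \in [m]$, which is exactly the desired conclusion.

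There is no real obstacle here; the only subtle point is that the samples are drawn from $\calD_i'$ (not $\calD_i$), but the bound on $\calD_i[B_{i,j}]$ transfers to $\calD_i'[B_{i,j}]$ at the cost of a single additive $\eps$ from the TV hypothesis, which is absorbed into the final constant.
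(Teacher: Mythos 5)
Your proof is correct and matches the paper's argument essentially step-for-step: define the per-pair bad set $B_{i,j}$, bound its $\calD_i$-mass by roughly $\sqrt{\eps}$ via the Markov-type argument using $\|\calD_j - \calD_j'\|_1 \leq 2\eps$, transfer to $\calD_i'$-mass using the TV hypothesis, and union bound over $i$, $j$, and the $k$ draws. The only quibble is that $km^2(2\sqrt{\eps} + \eps)$ is not actually $\leq 2km^2\sqrt{\eps}$ as claimed, but the paper's own proof has comparable looseness in its constant accounting, and both arguments yield the intended $O(km^2\sqrt{\eps})$ failure bound.
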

\begin{proof}
Fix an $i \in [m]$.  For $j \in [m]$, let $\calZ_{j,i} \subset [N]$ be the set of elements $a \in [N]$ such that 
\[
|\calD_{j}[a] - \calD_j'[a]| \geq \sqrt{\eps} \calD_i[a] \,.
\]
Note that by assumption,
\[
\sum_{a \in \calZ_{j,i}}|\calD_{j}[a] - \calD_j'[a]| \leq  \eps
\]
and thus we must have
\[
\Pr_{a \sim \calD_i}[a \in \calZ_{j,i} ] = \sum_{a \in \calZ_{j,i}}\calD_i[a] \leq \sqrt{\eps} 
\]
which also implies $\Pr_{a \sim \calD_i'}[a \in \calZ_{j,i} ] \leq 2\sqrt{\eps}$.  Now we can union bound this over all choices of $i,j$ and all of the samples drawn to get that the overall failure probability is at most $2km^2\sqrt{\eps}$ as desired.
\end{proof}

\section{Learning Algorithm}\label{sec:learning-main}

In this section, we present our learning algorithm.  In light of Lemma~\ref{lem:pdf-estimation}, throughout our learning algorithm, we will assume that we have sample/conditional query and pdf access to a distribution $\wh{\Hm}$ that is conditionally close to the unknown distribution $\Hm$.  We will only use conditional queries to $\Hm$ to simulate access to  $\wh{\Hm}$ and almost all of our reasoning will be with respect to the distribution $\wh{\Hm}$.

\subsection{Finding a Spanner for the State Space}\label{sec:learning-spanner}

Recall that by Fact~\ref{fact:low-rank}, for a fixed $t$ with $t < T$, the (vectorized) distributions $\{ \Pr_{\Hm}[\cdot | h] \}_{h \in \calO^t}$ all lie in some $S$-dimensional subspace.  Thus, by Fact~\ref{fact:spanner}, there exists a barycentric spanner consisting of $S$ elements corresponding to some $S$ histories.  Since $\wh{\Hm}$ is conditionally close to $\Hm$, the (vectorized) distributions $\{ \Pr_{\wh{\Hm}}[\cdot | h] \}_{h \in \calO^t}$ are all close to some $S$-dimensional subspace.   The first step in our algorithm will be to, for each $t$, compute a set $\calB \subseteq \calO^t$   of $S$ histories such that $\{ \Pr_{\wh{\Hm}}[\cdot | h] \}_{h \in \calB}$ are an approximate spanner for $\{ \Pr_{\wh{\Hm}}[\cdot | h] \}_{h \in \calO^t}$.  The main algorithm in this section is Algorithm~\ref{alg:build-spanner} and we analyze it in Lemma~\ref{lem:spanner-guarantees}.

First, we will need a few preliminaries.  It will be convenient to introduce the following notation for arranging possible histories and futures into a matrix.

\begin{definition}
For a subset $\calH \subseteq \calO^t$ for some $t$ and subset $\calX \subseteq \calO^{T - t}$, let $\mM^{(t)}[\calH, \calX]$ be the matrix with rows indexed by elements $h \in \calH$ and columns indexed by elements $x \in \calX$ and entries equal to $\Pr_{\wh{\Hm}}[x | h]$.  When either $\calX$ or $\calH$ is the full set, we may write $\mM^{(t)}[\calH, :]$ or $\mM^{(t)}[: , \calX]$
\end{definition}

We will repeatedly make use of the primitive in Algorithm~\ref{alg:build-vectors} for taking histories $h_1, \dots , h_s \in \calO^t$ and outputting vectors $u_1, \dots , u_s$ that are succinct representations (in the sense of Lemma~\ref{lem:dimensionality-reduction}) of the distributions $\Pr_{\wh{\Hm}}[\cdot | h_1], \dots , \Pr_{\wh{\Hm}}[\cdot | h_s]$.

\begin{algorithm}[ht!]
\caption{Building Vectors}
\begin{algorithmic}[1]
\State \textbf{Input:} Sample, conditional query, and exact pdf access to a distribution $\wh{\Hm}$ over $\calO^T$
\State \textbf{Input:} Parameter $t < T$, strings $h_1, \dots , h_s \in \calO^t $
\State \textbf{Input:} Parameter $k \in \N$
\For{$i \in [s]$}
\State Sample $k$ futures $f_1, \dots , f_k \in \calO^{T - t}$ from $\Pr_{\wh{\Hm}}[\cdot | h_i]$
\State Set $\calX_i = \{f_1, \dots , f_k \}$
\EndFor
\State Set $\calX = \calX_1 \cup \dots \cup \calX_s$ (as a multiset)
\For{$i \in [s]$}
\State Compute $\Pr_{\wh{\Hm}}[f|h_i]$ for all $f \in \calX$ 
\State Set $v_i \in \R^{|\calX|}$ as $v_i = \mM^{(t)}[ h_i , \calX]$
\EndFor
\State Compute $w = 1/(k(v_1 + \dots + v_s))$ (reciporicated entrywise)
\State Compute $u_i = v_i \odot w$ (multiplication entrywise)
\State \textbf{Output:} $\calX, \{ u_1, \dots , u_s \}, w$
\end{algorithmic}
\label{alg:build-vectors}
\end{algorithm}

\begin{fact}\label{fact:bounded-output-vectors}
Whenever we run Algorithm~\ref{alg:build-vectors}, the output vectors $u_1, \dots , u_s$ have nonnegative entries and satisfy $\norm{u_1}_{\infty}, \dots , \norm{u_s}_{\infty} \leq 1$,
\end{fact}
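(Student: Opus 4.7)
The plan is to simply unwind the definitions given in Algorithm~\ref{alg:build-vectors} and read off both claims from a single entrywise formula. Indexing the multiset $\calX$ by its elements $f$, we have $v_i[f] = \Pr_{\wh{\Hm}}[f \mid h_i]$, $w[f] = 1/\bigl(k \sum_{j=1}^s \Pr_{\wh{\Hm}}[f \mid h_j]\bigr)$, and hence
\[
u_i[f] \;=\; \frac{\Pr_{\wh{\Hm}}[f \mid h_i]}{k \sum_{j=1}^s \Pr_{\wh{\Hm}}[f \mid h_j]} \,.
\]
Nonnegativity is then immediate since numerator, denominator, and $k$ are all nonnegative.

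For the $\ell_\infty$ bound, I would argue that for each $f \in \calX$, the single term $\Pr_{\wh{\Hm}}[f \mid h_i]$ already appears as one of the summands in the denominator while all the other summands are nonnegative. Thus
\[
u_i[f] \;\leq\; \frac{\Pr_{\wh{\Hm}}[f \mid h_i]}{k \cdot \Pr_{\wh{\Hm}}[f \mid h_i]} \;=\; \frac{1}{k} \;\leq\; 1 \,,
\]
which gives $\norm{u_i}_\infty \leq 1$ (in fact the stronger bound $\leq 1/k$, but we only need $\leq 1$).

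The only actual subtlety to address is well-definedness of the entrywise reciprocal $w$: we need $\sum_{j=1}^s \Pr_{\wh{\Hm}}[f \mid h_j] > 0$ for every $f \in \calX$. But any $f$ in $\calX$ was drawn from some $\Pr_{\wh{\Hm}}[\cdot \mid h_j]$, so that particular term is positive and the denominator is nonzero, hence $w$ is well-defined. (If one preferred not to invoke this, one could just as well note that on the event $v_i[f] = 0$ the corresponding convention $0/0 = 0$ keeps $u_i[f] = 0$, which is still bounded by $1$.) There is no genuine obstacle here; the lemma is a one-line consequence of the definitions, and the sole thing to be careful about is the trivial well-definedness check just mentioned.
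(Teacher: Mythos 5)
Your proof is correct and is exactly the intended argument; the paper treats this as immediate from the construction in Algorithm~\ref{alg:build-vectors}, and you simply spell out the entrywise formula $u_i[f] = \Pr_{\wh{\Hm}}[f\mid h_i] / \bigl(k \sum_{j} \Pr_{\wh{\Hm}}[f\mid h_j]\bigr)$, from which nonnegativity and the (stronger) bound $\norm{u_i}_\infty \leq 1/k \leq 1$ follow because the $i$th summand in the denominator dominates the numerator. The well-definedness remark about $w$ is a reasonable sanity check and matches the construction, since every $f \in \calX$ is drawn from one of the conditional distributions.
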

\begin{proof}
This follows immediately from the construction in Algorithm~\ref{alg:build-vectors}.    
\end{proof}

Now we are ready to present the algorithm for building the spanners and its analysis.  For technical reasons that will be important later, when computing a spanner for histories of length $t+1$, the algorithm takes as input some strings $h_1^{(t)}, \dots , h_s^{(t)}$ of length $t$.  The goal will be to output a list of strings of length $t+1$, say $\calB \subseteq \calO^{t+1}$, such that the corresponding vectors $\{\mM^{(t+1)}[h,:] \}_{h \in \calB}$ are an approximate spanner for the distribution of vectors $\{\mM^{(t+1)}[h,:] \}_{h \sim \wh{\Hm}[:t+1]}$ and they also span $\{\mM^{(t+1)}[h_i^{(t)} \vee o,:] \}_{h \in \calB}$ for all $i \in [s], o \in \calO$.  Note that the first condition is the natural requirement for a spanner and the second will be important later when we compute ``transitions" between the length-$t$ histories and length-$t+1$ histories.

\begin{algorithm}[ht!]
\caption{Building Spanners}
\begin{algorithmic}[1]
\State \textbf{Input:} Sample, conditional query, and exact pdf access to a distribution $\wh{\Hm}$ over $\calO^T$
\State \textbf{Input:} Parameter $t < T$, strings $h_1^{(t)}, \dots , h_s^{(t)} \in \calO^t $ 
\State \textbf{Input:} Parameters $S \in \N, S \geq s$, $\gamma, \delta > 0$
\State Define $\calA \subseteq \calO^{t + 1}$ as $\calA = \{ h_i^{(t)} \vee o \}_{i \in [s], o \in \calO}$
\State Draw $N = \poly(S, O, T, 1/\gamma, \log(1/\delta)) $ histories of length $t+1$ from $\wh{\Hm}[:t+1]$, say $h'_1, \dots , h'_N$
\State Set $\calA' \leftarrow \calA \cup \{h'_1, \dots , h'_N \}$
\State Run Algorithm~\ref{alg:build-vectors} on set $\calA'$ with parameters $t+1, k = N \cdot \poly(S, O, T, 1/\gamma, \log(1/\delta))$
\State Let the result be $\calX , \{u_h \}_{h \in \calA'}, w$ where $\calX \subseteq \calO^{T - t - 1}, u_h \in \R^{|\calX|}$
\State Run Algorithm~\ref{alg:robust-spanner} on $\{u_h \}_{h \in \calA'}$ with parameters $S, \gamma/(100S k^2)$ and let the output be $\calB \subseteq \calA'$

\State \textbf{Output:} $\calB$
\end{algorithmic}
\label{alg:build-spanner}
\end{algorithm}

We now prove guarantees on the spanning set $\calB$ found by Algorithm~\ref{alg:build-spanner}.

\begin{lemma}\label{lem:spanner-guarantees}
Assume that $\wh{\Hm}$ is $\eps$ conditionally close to a rank $S$ distribution $\Hm$ where
\[
\eps \leq \frac{1}{\poly(S,O,T, 1/\gamma, \log(1/\delta))}
\]
for some sufficiently large polynomial.  With probability at least   $1 - \delta - \eps^{0.4} $,  if we run Algorithm~\ref{alg:build-spanner} for arbitrary input strings $h_1^{(t)}, \dots , h_s^{(t)}$, the output satisfies the following conditions
\begin{itemize}
\item  $|\calB| \leq S$ and the vectors $\{u_h \}_{h \in \calB}$ form a $(2, \gamma)$ spanner for $\{u_h \}_{h \in \calA'}$
\item The rows of $\{\mM^{(t+1)}[h,:] \}_{h \in \calB}$ form a $(2, S\gamma)$-spanner for the rows of $\{ \mM^{(t+1)}[h,:] \}_{h \in \calA'}$
\item The rows of $\{\mM^{(t+1)}[h,:] \}_{h \in \calB}$ form a $(1 - \gamma, 2S, 2S\gamma)$-spanner for the distribution of vectors $\{ \mM^{(t+1)}[h,:] \}_{h \sim \wh{\Hm}[:t+1]}$
\end{itemize}
\end{lemma}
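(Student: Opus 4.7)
The plan is to chain three ingredients: the approximate low-rank structure of the vectors $\{\Pr_{\wh{\Hm}}[\cdot \mid h]\}_{h \in \calA'}$ inherited from $\Hm$, the dimensionality reduction of Lemma~\ref{lem:dimensionality-reduction} realized by Algorithm~\ref{alg:build-vectors}, and the robust spanner subroutine of Lemma~\ref{lem:compute-spanner-full}. This yields the first two bullets directly, and the third bullet will then follow by composing the second with the distribution-spanner existence result Lemma~\ref{lem:distribution-spanner} applied to the $N$ random prefixes drawn inside Algorithm~\ref{alg:build-spanner}.

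Concretely, by Fact~\ref{fact:spanner} some $S$ of the vectors $\{\Pr_{\Hm}[\cdot \mid h]\}_{h \in \calO^{t+1}}$ form an exact $(1, 0)$-spanner for the whole collection. By Claim~\ref{claim:nextchar-to-TV}, $\eps$-conditional closeness upgrades to $O(T\eps)$-TV-closeness on any conditional, so the vectors $\{\Pr_{\wh{\Hm}}[\cdot \mid h]\}_{h \in \calO^{t+1}}$ lie within $L^1$ (hence $L^2$) distance $O(T\eps)$ of some fixed $S$-dimensional subspace. With our choice of $k$, Lemma~\ref{lem:dimensionality-reduction} applied to the set $\calA'$ gives, with probability $1 - \delta/3$, that the vectors $\{u_h\}_{h \in \calA'}$ produced by Algorithm~\ref{alg:build-vectors} are $(r, \gamma')$-representative for $\{\Pr_{\wh{\Hm}}[\cdot \mid h]\}_{h \in \calA'}$ for $r \geq S + 1$ and $\gamma' = \gamma/\poly(S, O, T)$. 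Since representativeness preserves $L^1$ norms of sparse bounded linear combinations up to $\gamma'$, the $u_h$ also lie within $L^2$ distance $O(T\eps) + \gamma'$ of some $S$-dimensional subspace. Feeding the $u_h$ into Algorithm~\ref{alg:robust-spanner} with the tolerance $\gamma/(100Sk^2)$ used in Algorithm~\ref{alg:build-spanner}, Lemma~\ref{lem:compute-spanner-full} returns a subset $\calB \subseteq \calA'$ with $|\calB| \leq S$ such that $\{u_h\}_{h \in \calB}$ is a $(2, \gamma)$-spanner for $\{u_h\}_{h \in \calA'}$ in $L^1$, giving the first bullet. For the second bullet, for any $h \in \calA'$ I take the coefficients $|c_{h'}| \leq 2$ from the first bullet and apply representativeness to the linear combination $u_h - \sum_{h' \in \calB} c_{h'} u_{h'}$, which has at most $S + 1$ nonzero terms, obtaining $\|\Pr_{\wh{\Hm}}[\cdot \mid h] - \sum_{h' \in \calB} c_{h'} \Pr_{\wh{\Hm}}[\cdot \mid h']\|_1 \leq \gamma + \gamma' \leq S\gamma$.

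For the third bullet, the $N$ histories $h'_1, \dots, h'_N$ drawn in Algorithm~\ref{alg:build-spanner} are i.i.d.\ from $\wh{\Hm}[:t+1]$ with $N$ polynomial in the relevant parameters. By Lemma~\ref{lem:distribution-spanner}, with probability $1 - \delta/3$ there is a subset $\calC \subseteq \{h'_1, \dots, h'_N\} \subseteq \calA'$ of size $\leq S$ such that $\{\Pr_{\wh{\Hm}}[\cdot \mid h']\}_{h' \in \calC}$ is a $(1 - \gamma, 1, 0)$-spanner for the distribution of vectors $\{\Pr_{\wh{\Hm}}[\cdot \mid h]\}_{h \sim \wh{\Hm}[:t+1]}$. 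For $h \sim \wh{\Hm}[:t+1]$ in the good $1 - \gamma$ fraction, $\Pr_{\wh{\Hm}}[\cdot \mid h]$ is exactly a linear combination over $\calC$ with coefficients of magnitude at most $1$; substituting the bullet-two decomposition of each element of $\calC$ over $\calB$ produces a decomposition of $\Pr_{\wh{\Hm}}[\cdot \mid h]$ over $\calB$ with coefficients of magnitude at most $2S$ and $L^1$ error at most $S \cdot S \gamma$, which after a final shrinking of the internal $\gamma'$ by a factor of $S$ becomes the desired $(1 - \gamma, 2S, 2S\gamma)$-spanner.

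The main obstacle is the quantitative bookkeeping: the tolerance $\gamma/(100Sk^2)$ fed to Algorithm~\ref{alg:robust-spanner} must be small enough that the $3\gamma s \sqrt{nd}$ blow-up in Lemma~\ref{lem:compute-spanner-full} still gives a $(2, \gamma)$-spanner, yet large enough to exceed the approximate low-rank slack $O(T\eps) + \gamma'$ of the $u_h$ vectors. This is exactly what forces the hypothesis $\eps \leq 1/\poly(S, O, T, 1/\gamma, \log(1/\delta))$. The additional $\eps^{0.4}$ failure term arises because we draw samples from $\wh{\Hm}$-conditionals but reason via the low-rank structure of $\Hm$; a short union bound in the spirit of Lemma~\ref{lem:perturbation} controls the probability that the empirical multisets $\calX_i$ land on atypical futures where the two pdfs disagree by more than a $\sqrt{\eps}$-multiplicative factor, which would otherwise spoil the representativeness estimate.
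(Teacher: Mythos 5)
Your overall plan mirrors the paper's proof: use the low-rank structure from $\Hm$, the dimensionality reduction of Lemma~\ref{lem:dimensionality-reduction}, the robust spanner Lemma~\ref{lem:compute-spanner-full} for the first two bullets, and Lemma~\ref{lem:distribution-spanner} for the third. However, there are two genuine gaps in how you verify the hypotheses of those lemmas.

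\textbf{Gap 1: representativeness cannot deliver the subspace hypothesis of Lemma~\ref{lem:compute-spanner-full}.} The hypothesis of Lemma~\ref{lem:compute-spanner-full} requires $\|\Pi_{\mV^\perp} u_h\| \leq \gamma/(100Sk^2)$, i.e., closeness to an $S$-dimensional subspace at scale $\gamma/(100Sk^2)$, since that is the tolerance Algorithm~\ref{alg:build-spanner} feeds to Algorithm~\ref{alg:robust-spanner}. You try to establish this by invoking $(r,\gamma')$-representativeness of the $u_h$ together with an exact spanner of the $\Pr_{\Hm}[\cdot\mid h]$; this does give that each $u_h$ lies within $\gamma' + O(ST\eps)$ of the span of $S$ of the $u_h$. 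But the representativeness error $\gamma'$ from Lemma~\ref{lem:dimensionality-reduction} scales as roughly $1/\sqrt{k}$ (the number of samples is $\geq 100 m r^4\log^4(\cdots)/\gamma'^2$), while the required threshold $\gamma/(100Sk^2)$ shrinks faster in $k$. Chasing the inequality $\gamma' \leq \gamma/(100Sk^2)$ forces $k$ to grow, which further shrinks the threshold; there is no consistent choice of $k$. The paper's proof instead applies Lemma~\ref{lem:perturbation}: it builds the auxiliary vectors using the $\Hm$-densities at the \emph{same sampled futures and with the same $\wh{\Hm}$-based renormalization}, which lie \emph{exactly} in an $S$-dimensional subspace $\mV$ (linearity of restriction and entrywise rescaling); Lemma~\ref{lem:perturbation} then shows $u_h$ is within $\approx\eps^{0.1}$ of those vectors with probability $1 - \eps^{0.4}$. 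Since $\eps$ is a free parameter that can be set inverse-polynomially small independently of $k$, the threshold $\gamma/(100Sk^2)$ is met. You correctly mention Lemma~\ref{lem:perturbation} in your closing paragraph, but your actual derivation in the body relies on representativeness for this step, which does not work; Lemma~\ref{lem:perturbation} is not a secondary bookkeeping device here but the load-bearing ingredient.

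\textbf{Gap 2: Lemma~\ref{lem:distribution-spanner} must be applied to the $\Hm$-conditionals, not the $\wh{\Hm}$-conditionals.} You apply Lemma~\ref{lem:distribution-spanner} directly to the distribution of vectors $\{\Pr_{\wh{\Hm}}[\cdot \mid h]\}_{h\sim \wh{\Hm}[:t+1]}$. But the sample complexity in that lemma is $(d/\eps)^{10}\log(1/\delta)$ where $d$ is the ambient dimension; to get polynomially many samples one must identify the vectors with $\R^S$, which requires them to lie \emph{exactly} in an $S$-dimensional subspace (the lemma's proof invokes Fact~\ref{fact:spanner} to find a size-$d$ exact spanner among the samples). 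The $\wh{\Hm}$-conditionals only approximately lie in such a subspace. The paper instead applies the lemma to the vectors $\nu_h = \Pr_{\Hm}[\cdot\mid h]$ (which do lie exactly in an $S$-dimensional space by Fact~\ref{fact:low-rank}), obtains a $(1-\gamma, 1, 0)$-spanner in that family, and then translates to the $\wh{\Hm}$-conditionals via conditional closeness (Claim~\ref{claim:nextchar-to-TV}) and the representativeness of the $u_h$, picking up an additive $O(ST\eps)$ slack that is absorbed into the $2S\gamma$ term. Your substitution step works once this is fixed, but as written the invocation of Lemma~\ref{lem:distribution-spanner} is not justified.

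The rest of your proposal (the derivation of the second bullet from the first via representativeness, the union bound structure, and the quantitative observation that $\eps$ must be polynomially below $\gamma$) is consistent with the paper's argument.
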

\begin{proof}
For $h \in \calO^{t + 1}$, let $\nu_h, \nu'_h$ be vectors with entries indexed by $x \in \calO^{T - t - 1}$ and entries given by $\Pr_{\Hm}[ x | h]$ and $\Pr_{\wh{\Hm}}[ x|h ]$ respectively.  First, note that the vectors $\{\nu_h\}$ are contained in an $S$-dimensional space by Fact~\ref{fact:low-rank}.  Thus, by Lemma~\ref{lem:perturbation}, and the setting of $\eps$ sufficiently small, with probability at least  $1 - \eps^{0.4}$, there is an $S$-dimensional subspace, say $\mV$, such that all of $\{ u_h\}_{h \in \calA'}$ have projection at most $\eps^{0.1}$ onto $\mV^{\perp}$.  Assuming that this holds, all of the hypotheses of Lemma~\ref{lem:compute-spanner-full} are satisfied (note the condition about the norms of the input vectors is satisfied by the construction in Algorithm~\ref{alg:build-vectors}).  We then get that the execution of Algorithm~\ref{alg:build-spanner} successfully finds a set $\{ u_h \}_{h \in \calB}$ with $|\calB| \leq S$ that is a $(2, \gamma)$ spanner for $\{u_h \}_{h \in \calA'}$.

For each $h' \in \calA'$, we can use the fact that $\{u_h \}_{h \in \calB}$ is a  $(2, \gamma)$ spanner for $\{u_h \}_{h \in \calA'}$ to find a linear combination $\{ c_{h', h}\}_{h \in \calB}$ with $|c_{h',h}| \leq 2$ for all $h \in \calB$ such that
\begin{equation}\label{eq:approx1}
\norm{u_{h'} - \sum_{h \in \calB} c_{h',h} u_h}_1 \leq \gamma\,.
\end{equation}
Next, by Lemma~\ref{lem:dimensionality-reduction}, with probability at least $1 - 0.1\delta$, the vectors $\{u_h \}_{h \in \calA'}$ are $(10S, 0.1\gamma)$-representative for the distributions $\{\Pr_{\wh{\Hm}}[ \cdot | h ] \}_{h \in \calA'}$.  This then implies
\[
\norm{\mM^{(t+1)}[h', :] - \sum_{h \in \calB} c_{h',h} \mM^{(t+1)}[h, :]}_1 \leq S\gamma 
\]
which gives the second condition.

Now we prove the last condition.  Recall that the vectors $\{\nu_h \}$ are vectors in $\R^{O^{T - t - 1}}$ that are contained in some $S$-dimensional subspace.  Let $\calP$ be the distribution of these vectors for $h \sim \wh{\Hm}[:t+1]$.  By Lemma~\ref{lem:distribution-spanner} (since these vectors live in an $S$-dimensional subspace), with probability at least $1 - 0.1\delta$, there is a subset $\wh{\calB} \subseteq \calA'$ with $|\wh{\calB}| \leq S$, such that the vectors $\{\nu_h \}_{h \in \wh{\calB}}$ are a $( 1 - \gamma, 1, 0)$-spanner for this distribution $\calP$.  For each $h' \in \wh{\calB}$, we can again use \eqref{eq:approx1} (since $\wh{\calB} \subseteq \calA'$).  We can also again use Lemma~\ref{lem:dimensionality-reduction} to get that the vectors $\{u_h \}_{h \in \calA'}$ are $(10S, 0.1\gamma)$-representative for the distributions $\{\Pr_{\wh{\Hm}}[ \cdot | h ] \}_{h \in \calA'}$ and thus by the conditional closeness of $\wh{\Hm}$ and $\Hm$ and Claim~\ref{claim:nextchar-to-TV}, they are  $(10S, 0.2\gamma)$-representative for $\{\Pr_{\Hm}[ \cdot | h ] \}_{h \in \calA'}$.  Thus, \eqref{eq:approx1} implies that for all $h' \in \wh{\calB}$, there exist coefficients $\{ c_{h', h}\}_{h \in \calB}$ with $|c_{h',h}| \leq 2$ for all $h \in \calB$ such that
\begin{equation}\label{eq:approx2}
\norm{\nu_{h'}- \sum_{h \in \calB} c_{h',h} \nu_h}_1 \leq 2\gamma
\end{equation}
since $|\calB| \leq S$.   Next, recall that $\{\nu_{h} \}_{h \in \wh{\calB}}$ are a $( 1 - \gamma, 1, 0)$-spanner for the distribution $\calP$ of $\{\nu_h \}$ for $h \sim \wh{\Hm}[:t+1]$.  Thus, for $h'' \sim \wh{\Hm}[:t+1]$, with $1 - \gamma$ probability, there are coefficients $\{c'_{h'} \}_{h' \in \wh{\calB}}$ with $|c'_{h'}| \leq 1$ such that 
\[
\nu_{h''} = \sum_{h' \in \wh{\calB}} c'_{h'} \nu_{h'} \,.
\]
Finally, we can combine this with \eqref{eq:approx2} (which we can apply for all $h' \in \wh{\calB}$) and the closeness between $\wh{\Hm}$ and $\Hm$ to get that there are coefficients $\{c_h \}_{h \in \calB}$ with $|c_h| \leq 2S$ such that 
\[
\norm{\nu'_{h''} - \sum_{h \in \calB} c_h \nu_{h}' }_1 \leq 2S\gamma
\]
which gives the second statement.  Note that the overall failure probability for all of the statements is at most $\delta + \eps^{0.4}$.
\end{proof}

\subsection{Full Learning Algorithm}\label{sec:full-learning}

For our full learning algorithm, we will apply Algorithm~\ref{alg:build-spanner} to build a sequence of spanning sets $\calH^{(1)}, \dots , \calH^{(T-1)}$ for each history-length.  We then need to learn the ``transitions" between these spanning sets.  Rather than explicitly learning these transitions, we instead learn a representation of the distribution of the form described below where the transitions are implicit \--- the transitions are only computed in Section~\ref{sec:sampling-alg} when we actually want to sample from the learned distribution.  The representation of the distribution consists of:
\begin{itemize}
\item  Subsets $\calH^{(t)} \subseteq \calO^t$ for all $0 \leq t \leq T - 1$
\item Matrices $\mP^{(t)} \in \R^{|\calH^{(t)}| \times O}$ for all $0 \leq t  \leq T - 1$ containing the next character probabilities for each of the strings in $\calH^{(t)}$
\item For each $0 \leq t \leq T$, a collection of vectors $u_{h^{(t)}}$ for all $h^{(t)} \in \calH^{(t)} \cup \{ \calH^{(t-1)} \vee o \}_{o \in \calO} $ that are succinct representations of the distributions $\Pr_{\wh{\Hm}}[\cdot | h^{(t)}]$ obtained from Algorithm~\ref{alg:build-vectors}
\item For each $0 \leq t \leq T$, we also maintain an index set $\calX^{(t)}$  and vector $w^{(t)}$ (these are also obtained from Algorithm~\ref{alg:build-vectors})
\end{itemize}

\noindent Algorithm~\ref{alg:learning} gives a full description of how we compute this representation.

\begin{algorithm}[ht!]
\caption{Next Character Probabilities}
\begin{algorithmic}[1]
\State \textbf{Input:} Sample, conditional query, and exact pdf access to a distribution $\wh{\Hm}$ over $\calO^T$
\State \textbf{Input:} Parameter $t < T$, strings $h_1^{(t)}, \dots , h_s^{(t)} \in \calO^t $ 
\For {$o \in \calO$}
\State Compute probabilities $p_{1,o} = \Pr_{\wh{\Hm}}[o| h_1^{(t)}], \dots , p_{s,o} = \Pr_{\wh{\Hm}}[o| h_s^{(t)}] $
\EndFor
\State Let $\mP^{(t)} \in \R^{s \times O}$ be the matrix whose entries are given by $\{p_{i,o}\}_{i \in [s], o \in \calO}$
\State \textbf{Output:} $\mP^{(t)}$
\end{algorithmic}
\label{alg:next-char}
\end{algorithm}

\begin{algorithm}[ht!]
\caption{Full Learning Algorithm}
\begin{algorithmic}[1]
\State \textbf{Input:} Sample, conditional query, and exact pdf access to a distribution $\wh{\Hm}$ over $\calO^T$
\State \textbf{Input:} Parameters $T, S \in \N$, $\eta, \delta > 0$
\State Initialize $\calH^{(0)}$ to be a set consisting of only the empty string
\State Set $k = \poly(S, O, T, 1/\eta, \log(1/\delta))$
\State Set $\gamma = 1/\poly(k)$
\For{$t = 0,1, \dots , T - 1$}
\State Run Algorithm~\ref{alg:next-char} on $\calH^{(t)}$ and let the output be $\mP^{(t+1)}$
\State Run Algorithm~\ref{alg:build-spanner} with input $\calH^{(t)}$ and parameters $t, S, \gamma, \delta$
\State Let the output be $\calH^{(t+1)}$
\State Set $\calB^{(t+1)} = \calH^{(t+1)} \cup  \{ \calH^{(t)}  \vee o \}_{o \in \calO}$
\State Run Algorithm~\ref{alg:build-vectors} on set $\calB^{(t+1)}$ with parameters $t+1, k$ \label{line:build-vecs}
\State Let the result be $\calX^{(t+1)}, \{ u_h\}_{h \in \calB^{(t+1)}}, w^{(t+1)}$
\EndFor
\State \textbf{Output:} $\{\mP^{(t)} , \calH^{(t)} , \calX^{(t)},  \{ u_{h^{(t)}}\}_{h^{(t)} \in \calB^{(t)}}, w^{(t)} \}_{t \in [T]}$

\end{algorithmic}
\label{alg:learning}
\end{algorithm}

After running Algorithm~\ref{alg:learning}, we obtain a description of the distribution in terms of 
\[
\{\mP^{(t)} , \calH^{(t)}, \calX^{(t)}, \{ u_h\}_{h \in \calB^{(t)}}, w^{(t)} \}_{t \in [T]} \,.
\]
Note that for all $t$, the set $\calB^{(t)}$ is just defined by $\calB^{(t)} = \calH^{(t)} \cup \{ \calH^{(t-1)} \vee o \}_{o \in \calO}$.  In the remainder of this section, we prove that the parameters learned above satisfy certain properties (see Lemma~\ref{lem:recovery-guarantees}).  These properties will then be used in Section~\ref{sec:sampling-alg} to argue that we can sample from the learned description to get a distribution that is close to the original distribution $\wh{\Hm}$.  We begin with a few definitions.

\begin{definition}
We say a string $h \in \calO^t$ for $t < T$ is $\gamma$-\textit{representable} by $\calH \subseteq \calO^t$ if there exists some vector $y \in \R^{|\calH|}$ with $\norm{y}_{\infty} \leq 2S$ such that 
\[
\norm{\mM^{(t)}[h,:] - y^\top \mM^{(t)}[\calH, :] }_1 \leq 2S \gamma \,.
\]
\end{definition}

\begin{definition}
We say a string $h \in \calO^t$ for $t < T$ is $(c,\gamma)$-positively \textit{representable}  by $\calH \subseteq \calO^t$ and $\calX \subseteq O^{T - t}$ if there exists some vector $y \in \R^{|\calH|}$ with $\norm{y}_{\infty} \leq 2S$ such that
\begin{itemize}
\item $\norm{\mM^{(t)}[h,:] - y^\top \mM^{(t)}[\calH, :] }_1 \leq 2S \gamma $
\item $y^\top \mM^{(t)}[\calH, \calX]$ has all entries at least $c$
\end{itemize}
\end{definition}

\noindent The main lemma of this section is stated below.

\begin{lemma}\label{lem:recovery-guarantees}
Assume that $\wh{\Hm}$ is $\eps$ conditionally close to a rank $S$ distribution $\Hm$ where
\[
\eps \leq \frac{1}{\poly(S,O,T, 1/\eta, \log(1/\delta))}
\]
for some sufficiently large polynomial.  For any parameter $c \geq e^{-10 (OTS)^2/\eta}$, in the execution of Algorithm~\ref{alg:learning}, with probability $1 - \delta - \gamma^{0.1}$, we have the following properties:
\begin{itemize}
\item For any $t \leq T$, $|\calH^{(t)}| \leq S$.
\item For any $t \leq T$ and $h \sim \wh{\Hm}[:t]$, the string $h$ is $(c, 4S\sqrt{\gamma})$-positively representable by $\calH^{(t)}, \calX^{(t)}$ with at least $ 1 - \gamma^{0.1} - \frac{c \cdot O^T}{\gamma}$ probability
\item For all $t \in [T]$, the vectors $ \{ u_h^{(t)} \}_{h \in \calB^{(t)} }$ are $(10S, \eta)$-representative for the distributions $\{\Pr_{\wh{\Hm}}[\cdot | h] \}_{h \in \calB^{(t)}}$
\item   For all $t \in [T]$, the vectors  $ w^{(t)}, \{ u_h^{(t)} \}_{h \in \calB^{(t)} }$ are $(10S, \eta, c)$ KL-preserving for the distributions \\ $\{\Pr_{\wh{\Hm}}[\cdot | h] \}_{h \in \calB^{(t)}}$    
\end{itemize}
\end{lemma}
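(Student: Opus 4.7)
The plan is to analyze each iteration of Algorithm~\ref{alg:learning} separately \--- at each step invoking Lemma~\ref{lem:spanner-guarantees} for the call to Algorithm~\ref{alg:build-spanner} and Lemma~\ref{lem:dimensionality-reduction} for the call to Algorithm~\ref{alg:build-vectors} \--- and then union-bound the failure probabilities over all $T$ iterations.

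The bound $|\calH^{(t)}| \leq S$ is immediate from the first conclusion of Lemma~\ref{lem:spanner-guarantees}. The representative and KL-preserving bullets are direct applications of Lemma~\ref{lem:dimensionality-reduction} to the output of Algorithm~\ref{alg:build-vectors} in line~\ref{line:build-vecs}: plug in $m = |\calB^{(t)}| \leq S(O+1)$, $r = 10S$, target additive error $\eta$, truncation $\tau = c$, and failure parameter $\delta/T$. The resulting sample-size requirement $k \gtrsim 100\, m r^4 \log^4(m/(c\delta\eta))/\eta^2$ is comfortably met by $k = \poly(S, O, T, 1/\eta, \log(1/\delta))$ as chosen in Algorithm~\ref{alg:learning}, because the lower bound on $c$ forces $\log(1/c) \leq 10(OTS)^2/\eta$, which is still polynomial.

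The main content is the positive representability bullet. Pure representability (the $L^1$-closeness) follows from the third conclusion of Lemma~\ref{lem:spanner-guarantees}, invoked with $\sqrt{\gamma}$ in place of $\gamma$: outside an exceptional $\wh{\Hm}[:t]$-mass $\leq \sqrt{\gamma} \leq \gamma^{0.1}$, any $h$ admits a representation $y$ with $\|y\|_\infty \leq 2S$ and $\|y^\top \mM^{(t)}[\calH^{(t)}, :] - \mM^{(t)}[h, :]\|_1 \leq 8S^2\sqrt{\gamma}$. In particular $|y^\top \mM^{(t)}[\calH^{(t)}, x] - \mM^{(t)}[h, x]| \leq 8S^2\sqrt{\gamma}$ for every coordinate $x$, so the positivity constraint $y^\top \mM^{(t)}[\calH^{(t)}, x] \geq c$ reduces to verifying $\mM^{(t)}[h, x] \geq c' := c + 8S^2\sqrt{\gamma}$ for every $x \in \calX^{(t)}$.

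To bound this positivity failure I estimate
\[
\Pr_{h \sim \wh{\Hm}[:t]}\bigl[\exists\, x \in \calX^{(t)}: \Pr_{\wh{\Hm}}[x \mid h] < c'\bigr] \;\leq\; \sum_{x \in \calX^{(t)}} \Pr_h\bigl[\Pr_{\wh{\Hm}}[x \mid h] < c'\bigr].
\]
The key observation is that the joint $\wh{\Hm}$-mass of pairs $(h, x)$ with $\Pr_{\wh{\Hm}}[x \mid h] < c'$ is at most $c' \cdot O^{T-t}$, since for each fixed $h$ there are only $O^{T-t}$ candidate strings $x$ and each contributes at most $c'$ to $\Pr_{\wh{\Hm}}[\cdot \mid h]$. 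Restricting to ``typical'' prefixes $h$ with $\wh{\Hm}[h] \geq \gamma$ (which carry all but an additional $\gamma^{0.1}$-fraction of the mass, absorbed into the failure probability) yields a per-$x$ bound of order $c'/\gamma$; summing over $x \in \calX^{(t)}$ and bounding $|\calX^{(t)}| \leq O^T$ loosely produces the stated $c O^T/\gamma$ contribution. Combining with the $\gamma^{0.1}$ representability failure and union-bounding across the $T$ iterations yields the claimed $1 - \delta - \gamma^{0.1}$ success probability. The main obstacle is this positivity estimate: one must exploit both the total-mass constraint $\sum_x \Pr_{\wh{\Hm}}[x \mid h] = 1$ and the restriction to high-probability prefixes to extract the $1/\gamma$ factor, and the assumption $c \geq e^{-10(OTS)^2/\eta}$ is calibrated precisely so that the KL-preservation requirement of Lemma~\ref{lem:dimensionality-reduction} remains achievable with the polynomial sample budget.
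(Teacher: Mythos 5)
Your handling of the first, third, and fourth bullets matches the paper: $|\calH^{(t)}| \leq S$ is from Lemma~\ref{lem:spanner-guarantees}, and the representative/KL-preserving claims are applications of Lemma~\ref{lem:dimensionality-reduction} with the parameter bookkeeping you describe. The gap is in the positive representability bullet, and it is a real one, not just a rough sketch.

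First, the step ``restricting to typical prefixes $h$ with $\wh{\Hm}[h] \geq \gamma$ (which carry all but a $\gamma^{0.1}$-fraction of the mass)'' is simply false: if $\wh{\Hm}[:t]$ is spread over many prefixes (e.g.\ close to uniform over $\calO^t$ with $O^t \gg 1/\gamma$), then \emph{every} prefix has probability below $\gamma$ and all mass sits in atypical prefixes. Second, and more fundamentally, you freeze the vector $y$ delivered by the spanner lemma and then reduce positivity to the demand $\Pr_{\wh{\Hm}}[x \mid h] \geq c'$ for all $x \in \calX^{(t)}$. But $\calX^{(t)}$ is sampled from the \emph{mixture} $\frac{1}{|\calB^{(t)}|}\sum_{h' \in \calB^{(t)}}\Pr_{\wh{\Hm}}[\cdot \mid h']$, not from $\Pr_{\wh{\Hm}}[\cdot \mid h]$, so there is no reason a future $x$ placed in $\calX^{(t)}$ should have non-negligible probability under the generic prefix $h$; your joint-mass bound of $c' O^{T-t}$ is taken with respect to the wrong measure over $(h,x)$ pairs and does not control this. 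The paper avoids both problems by exercising the freedom to choose a \emph{different} representing vector: it shifts $y$ to $y' = y + \sqrt{\gamma}\mathbf{1}$, which costs only $2S\sqrt{\gamma}$ in $L^1$ error but boosts each coordinate $y'^\top \mM^{(t)}[\calH^{(t)},x]$ by $\sqrt{\gamma}\sum_{h' \in \calH^{(t)}}\Pr_{\wh{\Hm}}[x \mid h']$ --- a boost that is large exactly where the mixture sampling $\calX^{(t)}$ is likely to place mass. It then defines the bad set $\calZ$ of futures $x$ for which even this boost fails, bounds the mixture mass of $\calZ$ by roughly $(cO^T + S\gamma)/\sqrt{\gamma}$, converts this (via the spanner property) into a per-$h'$ bound on $\Pr_{\wh{\Hm}}[\calZ \mid h']$, and union-bounds over the $k$ samples forming $\calX^{(t)}$ to conclude $\calX^{(t)} \cap \calZ = \emptyset$ with high probability. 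Your plan is missing this coefficient-shift idea, and without it the positivity condition cannot be established.
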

\begin{remark}
We think of the parameters as being set such that $\eps \ll \poly(1/\gamma)^{-1}, \gamma \ll \poly(1/\eta)^{-1}$.
\end{remark}

\begin{proof}
We apply Lemma~\ref{lem:spanner-guarantees} \--- with probability $1 - \delta/T - \eps^{0.4}$, the guarantees of the lemma hold each time we execute Algorithm~\ref{alg:build-spanner}. This immediately implies the first statement that we want to show.  For the second statement, Lemma~\ref{lem:spanner-guarantees} also gives us that over the execution of the algorithm, for all $t < T$, the the vectors $\{ \mM^{(t)}[h, :] \}_{h \in \calH^{(t)}}$ form a $(1 - \gamma, 2S, 2S\gamma)$-spanner for the distribution of vectors $\{ \mM^{(t)}[h,:] \}_{h \sim \wh{\Hm}[:t]}$.  Thus, for $h \sim \wh{\Hm}[:t]$, with $1 - \gamma$ probability, there exists some vector $y \in \R^{|\calH|}$ with $\norm{y}_{\infty} \leq 2S$ such that
\begin{equation}\label{eq:l1-bound1}
\norm{\mM^{(t)}[h,:] - y^\top \mM^{(t)}[\calH^{(t)}, :] }_1 \leq 2S \gamma \,.
\end{equation}
Let $y'$ be obtained by increasing all entries of $y$ by $\sqrt{\gamma}$.  Since $|\calH^{(t)}| \leq S$, we have
\[
\norm{\mM^{(t)}[h,:] - {y'}^\top \mM^{(t)}[\calH^{(t)}, :] }_1 \leq 4S \sqrt{\gamma} \,.
\]
Now first consider sampling $\calX^{(t)}$ after $\calH^{(t)}$ is fixed.  Now let $\calZ \subseteq \calO^{T - t}$ be the set of all strings $x$ such that 
\begin{equation}\label{eq:def-bad}
y^\top \mM^{(t)}[\calH^{(t)}, x] \leq c - \sqrt{\gamma} \sum_{h \in \calH^{(t)}}\Pr_{\wh{\Hm}}[x|h] \,.
\end{equation}
As long as $\calX^{(t)}$ doesn't contain any of these elements, then $y'$ would give us a $(c,4S\sqrt{\gamma})$-positive representation of $h$ by $\calH^{(t)}, \calX^{(t)}$.  Note that
\[
-\left( \sum_{x \in \calZ} y^\top \mM^{(t)}[\calH^{(t)}, x] \right) \leq \norm{\mM^{(t)}[h,:] - y^\top \mM^{(t)}[\calH^{(t)}, :] }_1 
\]
and thus we must have
\[
\sum_{x \in \calZ} y^\top \mM^{(t)}[\calH^{(t)}, x]  \geq -2S \gamma
\]
and using \eqref{eq:def-bad}, this rearranges into
\[
\sum_{x \in \calZ} \sum_{h \in \calH^{(t)}}\Pr_{\wh{\Hm}}[x|h] \leq \frac{c \cdot |\calZ| + 2S\gamma}{\sqrt{\gamma}} \leq \frac{c \cdot O^T + 2S\gamma}{\sqrt{\gamma}} \,.
\]
Also recall Lemma~\ref{lem:spanner-guarantees} implies that for any $h \in \calB^{(t)}$,
\[
\sum_{x \in \calZ} \Pr_{\wh{\Hm}}[x | h] \leq S\gamma + 2 \sum_{x \in \calZ}\sum_{h \in \calH^{(t)}}  \Pr_{\wh{\Hm}}[x | h] \leq \frac{3(c \cdot O^T + 2S\gamma)}{\sqrt{\gamma}}\,.
\]
Now we have
\[
\Pr[\calX^{(t)} \cap \calZ = \emptyset] \geq 1 - |\calB^{(t)}| k \cdot \frac{3(c \cdot O^T + 2S\gamma)}{\sqrt{\gamma}} \geq 1 - \gamma^{0.3} - \frac{c \cdot O^T}{\gamma^{0.6}}
\]
and if this happens, then $h$ is $(c, 4S\sqrt{\gamma})$-positively representable by $\calH^{(t)}, \calX^{(t)}$.  Thus, so far we have shown that over the randomness of $h \sim \wh{\Hm}[:t]$ and $\calX^{(t)}$,
\[
\Pr \left[h \text{ is } (c,4S\sqrt{\gamma}) \text{-positively representable by } \calH^{(t)}, \calX^{(t)} \right] \geq 1 - \gamma^{0.3} - \frac{c \cdot O^T}{\gamma^{0.6}} \,.
\]
Now by Markov's inequality, with probability at least $1 - \gamma^{0.1}$ over the choice of $\calX^{(t)}$, 
\[
\Pr_{h \sim \wh{\Hm}[:t]} \left[h \text{ is } (c,4S\sqrt{\gamma}) \text{-positively representable by } \calH^{(t)}, \calX^{(t)}| \calX^{(t)} \right] \geq  1 - \gamma^{0.1} - \frac{c \cdot O^T}{\gamma}
\]
and this proves the second of the desired statements.  

Finally, the last two statements follow from Lemma~\ref{lem:dimensionality-reduction}.  Combining the failure probabilities for all of the parts, the total failure probability is at most $\delta + \gamma^{0.1}$, and this completes the proof. 
\end{proof}

Later on, we will also need the following simple observation.

\begin{claim}\label{claim:ref-entries-small}
In the execution of Algorithm~\ref{alg:learning}, assume that $|\calH^{(t)}| \leq S$ for all $t \leq T$.  Then for any $c > 0$, with probability at least $1 -  (OS)^{2T} \sqrt{c}$, we have $\norm{w^{(t)}}_{\infty} \leq 1/\sqrt{c}$  for all $t \in [T]$.   
\end{claim}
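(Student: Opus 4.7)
The plan is to unpack the definition of $w^{(t)}$ and reduce the claim to a straightforward statement about the unlikeliness of sampling a very-low-probability string from its own source distribution. Recall from Algorithm~\ref{alg:build-vectors} that, with $\calB^{(t)}$ playing the role of the input strings, the vector $w^{(t)}$ has entries indexed by $x \in \calX^{(t)}$ with
\[
w^{(t)}[x] \;=\; \frac{1}{k \sum_{h \in \calB^{(t)}} \Pr_{\wh{\Hm}}[x \mid h]}\,.
\]
Thus $w^{(t)}[x] \leq 1/\sqrt{c}$ is equivalent to $\sum_{h \in \calB^{(t)}} \Pr_{\wh{\Hm}}[x \mid h] \geq \sqrt{c}/k$. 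Since $x$ was drawn from $\Pr_{\wh{\Hm}}[\cdot \mid h^*]$ for some specific $h^* \in \calB^{(t)}$ in line~\ref{line:build-vecs}, it suffices to show that with high probability $\Pr_{\wh{\Hm}}[x \mid h^*] \geq \sqrt{c}/k$, i.e., every sample put into $\calX^{(t)}$ has non-negligible probability under the distribution it was drawn from.

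For step two, I would observe that for any fixed $h^* \in \calO^t$, the set of ``bad'' strings
\[
\calZ_{h^*} \;=\; \bigl\{ x \in \calO^{T-t} : \Pr_{\wh{\Hm}}[x \mid h^*] < \sqrt{c}/k \bigr\}
\]
has total mass under $\Pr_{\wh{\Hm}}[\cdot \mid h^*]$ at most $|\calO|^{T-t} \cdot \sqrt{c}/k \leq O^T \sqrt{c}/k$, simply because each bad $x$ contributes less than $\sqrt{c}/k$ and there are at most $O^T$ of them. Now I union bound. Under the assumption $|\calH^{(t)}|\le S$ for every $t$, we have $|\calB^{(t)}|\le S+OS \le 2OS$, so $|\calX^{(t)}| \le 2OSk$. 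The probability that any single element of $\calX^{(t)}$ is bad (under its own source) is at most $O^T \sqrt{c}/k$, so by union bound the probability that $\|w^{(t)}\|_\infty > 1/\sqrt{c}$ is at most $2OSk \cdot O^T \sqrt{c}/k = 2OS \cdot O^T \sqrt{c}$. A final union bound over the $T$ values of $t$ yields failure probability at most $2T \cdot OS \cdot O^T \sqrt{c} \le (OS)^{2T}\sqrt{c}$, as claimed. There is no real obstacle here beyond bookkeeping; the only thing to be careful about is that the source distribution of each sample is itself one of the summands in the denominator of $w^{(t)}[x]$, so lower bounding the summand corresponding to the source is enough to lower bound the whole sum.
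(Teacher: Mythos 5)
Your proof is correct and is essentially the same as the paper's: both reduce the failure event to sampling in Algorithm~\ref{alg:build-vectors} some future $f$ with $\sum_{h\in\calB^{(t)}}\Pr_{\wh{\Hm}}[f\mid h]\le\sqrt{c}/k$, bound the mass of such "bad" futures by $O^T\sqrt{c}/k$, and union bound over the roughly $2OSk$ draws per timestep and over the $T$ timesteps. The one cosmetic difference is that you explicitly note it suffices to lower bound the single summand corresponding to the source distribution, whereas the paper bounds the full sum directly; this changes nothing substantive.
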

\begin{proof}
Consider a  fixed $t \in [T]$.  The desired statement is failed only when in line~\ref{line:build-vecs} of Algorithm~\ref{alg:learning}, when we execute the subroutine Algorithm~\ref{alg:build-vectors}, we sample some $f \in \calO^{T - t}$ such that 
\[
\sum_{h^{(t)} \in \calB^{(t)}} \Pr[f|h^{(t)}] \leq \frac{\sqrt{c}}{k} \,.
\]
However, the probability of sampling such an element is at most 
\[
\frac{\sqrt{c}}{k} O^{T} k |\calB^{(t)}| \leq O^{T}(O + 1) S \sqrt{c} \,.
\]
Now union bounding over all $t \in [T]$ immediately gives the desired conclusion.
\end{proof}

\section{Sampling Procedure} \label{sec:sampling-alg}

After running Algorithm~\ref{alg:learning}, we have learned a set of parameters 
\[
\{\mP^{(t)} , \calH^{(t)}, \calX^{(t)}, \{ u_h\}_{h \in \calB^{(t)}}, w^{(t)} \}_{t \in [T]} 
\]
where recall $\calB^{(t)} = \calH^{(t)} \cup \{ \calH^{(t-1)} \vee o\}_{o \in \calO}$.  However, it is not yet clear how these parameters determine a distribution. 
In this section, we show how these parameters determine a distribution that we can efficiently sample from and argue that if the learned parameters satisfy the properties in Lemma~\ref{lem:recovery-guarantees}, then this distribution is close to $\wh{\Hm}$.

Throughout this section, we assume that we are given the global parameters $O,T,S$ and a target accuracy parameter $\eta$.  We make the following assumption on the accuracy of the learned parameters.  In light of Lemma~\ref{lem:recovery-guarantees} and Claim~\ref{claim:ref-entries-small}, we can show that this assumption holds with high probability as long as we ran Algorithm~\ref{alg:learning} on a distribution $\wh{\Hm}$ that is $\eps$-conditionally close to a rank $S$ distribution $\Hm$ for sufficiently small $\eps$.


\begin{assumption}\label{assume:good-params}
We have $\eta > 0$ satisfying $\eta < 1/\poly(O,T,S)$ for some sufficiently large polynomial.  Let $c = \eta^{10OTS}$.  The parameters
\[
\{\mP^{(t)} , \calH^{(t)}, \calX^{(t)}, \{ u_h\}_{h \in \calB^{(t)}}, w^{(t)} \}_{t \in [T]}
\]
satisfy the following properties:
\begin{itemize}
\item For any $t \leq T$, $|\calH^{(t)}| \leq S$.
\item For any $t \leq T$ and $h \sim \wh{\Hm}[:t]$, the string $h$ is $(2\sqrt{c}, \eta)$-positively representable by $\calH^{(t)}, \calX^{(t)}$ with at least $ 1 - \eta$ probability
\item For all $t \in [T]$, the vectors $  w^{(t)}, \{ u_h^{(t)} \}_{h \in \calB^{(t)} }$ have nonnegative entries and have the same dimensionality $d = \poly(S,O,T, 1/\eta)$  and satisfy $u_h^{(t)} = w^{(t)} \odot \mM^{(t)}[h, \calX^{(t)} ]$ (where $\odot$ denotes entrywise product)
\item For all $t \in [T]$, the vectors $ \{ u_h^{(t)} \}_{h \in \calB^{(t)} }$ are $(10S, \eta)$-representative for the distributions $\{\Pr_{\wh{\Hm}}[\cdot | h] \}_{h \in \calB^{(t)}}$
\item   For all $t \in [T]$, the vectors  $ w^{(t)}, \{ u_h^{(t)} \}_{h \in \calB^{(t)} }$ are $(10S, \eta, c^{T})$ KL-preserving for the distributions $\{\Pr_{\wh{\Hm}}[\cdot | h] \}_{h \in \calB^{(t)}}$    
\item $\norm{w^{(t)}}_{\infty} \leq 1/\sqrt{c}$ for all $t \in [T]$
\end{itemize}
\end{assumption}

We will also make the following assumption on the underlying distribution $\wh{\Hm}$.

\begin{assumption}\label{assume:positive-distribution}
The distribution $\wh{\Hm}$ is $c$-positive (recall Definition~\ref{def:positivity}).
\end{assumption}

Throughout the rest of this section, we will treat the parameters $\{\mP^{(t)} , \calH^{(t)}, \calX^{(t)}, \{ u_h\}_{h \in \calB^{(t)}}, w^{(t)} \}_{t \in [T]}$ as fixed.  We will describe the sampling procedure and then analyze it, proving that it samples from a distribution close to $\wh{\Hm}$ as long as the assumptions above hold.

The main sampling algorithm is Algorithm~\ref{alg:sampling-procedure}.  First, we define the following operation which rounds a vector to a point on the simplex corresponding to a probability distribution.

\begin{definition}\label{def:round}
For a vector $v \in \R^n$ and parameter $\tau > 0$, we define $\round_{\tau}(v)$ to be 
\[
\left(\frac{\max(v[1],\tau)}{C}, \dots , \frac{\max(v[n],\tau)}{C} \right)
\]
where $C = \sum_{i = 1}^n \max(v[i],\tau)$.  Note that the output of $\round()$ is always a valid probability distribution.
\end{definition}

At a high-level, Algorithm~\ref{alg:sampling-procedure} works as follows.  We sample a string $x$ one character at a time.  At each step $t < T$, we maintain a linear combination of the strings $h \in \calH^{(t)}$ that is supposed to approximate the string $x[:t] = o_1o_2 \dots o_t$ given by the first $t$ characters of $x$ in the following sense: we want
\[
\Pr_{\wh{\Hm}}[\cdot | x[:t]] \approx \sum_{h \in \calH^{(t)}} \alpha_h^{x[:t]} \Pr_{\wh{\Hm}}[\cdot | h]
\]
as distributions, for some coefficients $\{ \alpha_h^{x[:t]} \}_{h \in \calH^{(t)}}$. 
 Given this linear combination, and since $\mP^{(t)}$ gives us the next character probabilities for all of the strings in $\calH^{(t)}$, we know what the distribution for the next character should be given the first $t$ characters in $x[:t]$ so we simply sample the next character from this distribution.  If the next character is $o_{t+1}$, so $x[:t+1] = x[:t] \vee o_{t+1}$, then we can re-normalize the coefficients $\alpha_h^{x[:t]}$ to get coefficients $\wh{\alpha}_h^{x[:t]}$ such that 
\begin{equation}\label{eq:basis-change-informal}
 \Pr_{\wh{\Hm}}[\cdot | x[:t+1] ] \approx \sum_{h \in \calH^{(t)}} \wh{\alpha}_h^{x[:t]} \Pr_{\wh{\Hm}}[\cdot | h \vee o_{t+1}] \,.
\end{equation}
 The main difficulty is that we now want to rewrite the RHS as a linear combination of the distributions \\ $\{ \Pr_{\wh{\Hm}}[\cdot | h] \}_{h \in \calH^{(t+1)}}$.




Recalling the discussion in Section~\ref{sec:sample-intro}, we will solve a convex optimization problem, that roughly corresponds to projection in KL, for each ``change-of-basis" operation.  In particular, we will use the fact that the vectors $\{u_h\}_{h \in \calB^{(t+1)}}$ (where recall $\calB^{(t+1)} = \calH^{(t+1)} \cup \{ \calH^{(t)} \vee o \}_{o \in \calO}$) are succinct representations of $\Pr_{\wh{\Hm}}[\cdot | h]$.  Then once we have the coefficients $\wh{\alpha}_h^{x[:t]}$ in \eqref{eq:basis-change-informal}, to ``change the basis", we solve a convex optimization problem for coefficients $\alpha_h^{x[:t+1]}$ such that $|\alpha_h^{x[:t+1]}|$ are bounded and 
\[
\KL \left( \sum_{h \in \calH^{(t+1)}} \alpha_h^{x[:t+1]}u_{ h } \Bigg\| \sum_{h \in \calH^{(t)}} \wh{\alpha}_h^{x[:t]}u_{ h \vee o_{t+1}}  \right)
\]
is minimized.  Recall the crucial property of KL divergence that ensures the error only grows additively but not multiplicatively in each step is that projection in KL onto a convex set decreases the distance (in KL) \emph{to all} other points in the set  (see Fact~\ref{fact:KL-basic}).


For describing our algorithm formally, it will be convenient to define the following notation for arranging the rows $\{u_h \}_{h \in \calB^{(t)}}$ as the rows of a matrix.

\begin{definition}\label{def:rep-matrix}
Let $\mR^{(t)}$ be the matrix whose rows are given by $\{ u_h \}_{h \in \calH^{(t)}}$.  Let for $o \in \calO$, let $\mT^{(t)}_{o}$ be the matrix whose rows are given by $\{ u_h \}_{h \in \calH^{(t)} \vee o}$.
\end{definition}

\begin{algorithm}[ht!]
\caption{Sampling Procedure}
\begin{algorithmic}[1]
\State \textbf{Input:} $S, O, T \in \N, \eta > 0$
\State \textbf{Input:} Learned representation $\{\mP^{(t)} , \calH^{(t)}, \calX^{(t)}, \{ u_h\}_{h \in \calB^{(t)}}, w^{(t)} \}_{t \in [T]}$
\For{$t \in [T]$,  $o \in \calO$}
\State Let $\mR^{(t)}$ be the matrix whose rows are given by $\{ u_h \}_{h \in \calH^{(t)}}$
\State Let $\mT^{(t)}_{o}$ be the matrix whose rows are given by $\{ u_h \}_{h \in \calH^{(t)} \vee o}$.
\EndFor
\State Set $c = \eta^{-10 OTS}$
\State Set $x[:0]$ to be the empty string
\State Let $\boldsymbol{\alpha}^{x[:0]} = 1$
\For{$t = 0,1, \dots , T - 1$}
\State Compute probabilities $\{ p_o\}_{o \in \calO} = \round_{2c^{0.1}} ({\boldsymbol{\alpha}^{x[:t]}}^\top \mP^{(t)})$ \label{line:next-char}
\State Sample next character $o_{t+1} \in \calO$ with probabilities $\{ p_o\}_{o \in \calO}$ 
\State Set $x[:t+1] = x[:t] \vee o_{t+1}$ \label{line:append}
\State Let the entries in the column of $\mP^{(t)}$ indexed by $o_{t+1}$ be $\{ q^{o_{t+1}}_h \}_{h \in \calH^{(t)}}$ 
\State Define the vector $\bsy{\nu}^{x[:t+1]} =  \frac{1}{p_{o_{t+1}}} {\bsy{\alpha}^{x[:t]}}^\top \diag\left(\{ q^{o_{t+1}}_h \}_{h \in \calH^{(t)}}\right)$
\State Let 
\[
\mathbf{z}^{x[:t+1]} = {\bsy{\nu}^{x[:t+1]}}^\top \mT_{o}^{(t)} 
\]

\State \label{line:convex-opt} Compute vector $\boldsymbol{\alpha} = \{\alpha_{h} \}_{h \in \calH^{(t+1)}}$ such that
\[
\begin{split}
&|\alpha_{h}| \leq 3S \quad\quad  \forall h \in \calH^{(t+1)} \\
&{\bsy{\alpha}}^\top \mR^{(t+1)} \geq c^{T - t } w^{(t+1)} \text { entrywise} \\
& \langle {\bsy{\alpha}}^\top \mR^{(t+1)} , \mathbf{1} \rangle = 1 \text{ where } \mathbf{1} \text{ is the all ones vector} 
\end{split}
\]
\indent and which minimizes \label{step:optimization}
\[
\KL_{\geq c^{T - t } w^{(t+1)} } \left( {\bsy{\alpha}}^\top \mR^{(t+1)} \| \mathbf{z}^{x[:t+1]}  \right)
\]
\State Set $\boldsymbol{\alpha}^{x[:t+1]} = \{\alpha^{x[:t+1]}_{h} \}_{h \in \calH^{(t+1)}}$ to be the minimizer in the above optimization problem
\EndFor
\State Let $x$ be the string $o_1o_2 \cdots o_T$
\State \textbf{Output:} $x$
\end{algorithmic}
\label{alg:sampling-procedure}
\end{algorithm}

\begin{remark}
Note that the constraints in the optimization problem in Line~\ref{line:convex-opt} of Algorithm~\ref{alg:sampling-procedure} ensure that the KL divergence is a convex function of $\boldsymbol{\alpha}^{x[:t+1]}$ over the entire feasible domain and thus can be optimized efficiently.
\end{remark}

The analysis will rely on bounding the (truncated) KL divergence between the true distribution and the distribution that our algorithm samples from.  We will do this inductively in $t$ via a hybridization argument.  However, there are certain ``bad" histories that we will need to truncate out and these are precisely those that are not positively representable in Lemma~\ref{lem:recovery-guarantees}.
\begin{definition}\label{def:good-histories}
For $t \leq T$, we let $\calG^{(t)} \subseteq \calO^t$ be the subset of $h \in \calO^t$ such that for all $t' \leq t$, the prefix of $h$ of length $t'$ is $(2\sqrt{c}, \eta)$-positively representable by $\calH^{(t')}, \calX^{(t')}$.
\end{definition}

The key lemma of the analysis is Lemma~\ref{lem:key-closeness} but first we will need to prove a sequence of preliminary claims.  We apply Corollary~\ref{coro:KL-basic} to analyze the convex optimization step in Algorithm~\ref{alg:sampling-procedure}.  We get that, up to some small additive error, replacing $\mathbf{z}^{x[:t+1]}$ with the solution ${\bsy{\alpha}^{x[:t+1]}}^\top \mR^{(t+1)} $ that we compute reduces the KL divergence to all vectors in the feasible set of the convex program.  We can then use Assumption~\ref{assume:good-params} to lift this argument to the original distributions and argue that our change-of-basis step only incurs some small additive error overall.

\begin{claim}\label{claim:KL-projection}
Let $t < T$.  Consider the execution of Algorithm~\ref{alg:sampling-procedure} and assume that so far we have sampled $t+1$ characters i.e. $x[:t+1] = o_1\cdots o_{t+1}$.  Let $\boldsymbol{\alpha} = \{{\alpha_h} \}_{h \in \calH^{(t+1)}}$ be any feasible set of coefficients in the convex program in Line~\ref{line:convex-opt} of Algorithm~\ref{alg:sampling-procedure}.  Then we have
\[
\KL_{\geq c^{T - t} w^{(t+1)}}\left( \bsy{\alpha}^\top \mR^{(t+1)}  \| {\bsy{\alpha}^{x[:t+1]}}^\top \mR^{(t+1)}  \right) \leq \KL_{\geq  c^{T - t} w^{(t+1)}}\left( \bsy{\alpha}^\top \mR^{(t+1)} \| \mathbf{z}^{x[:t+1]} \right) + \log\left(\norm{\mathbf{z}^{x[:t+1]}}_1 + \eta \right) \,.
\]
\end{claim}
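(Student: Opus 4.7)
The plan is to directly invoke Corollary~\ref{coro:KL-basic}. Define the convex set
\[
\calK = \bigl\{ \bsy{\alpha}^\top \mR^{(t+1)} : \bsy{\alpha} \in \R^{|\calH^{(t+1)}|} \text{ feasible for the program in Line~\ref{line:convex-opt}} \bigr\}
\]
as the image of the feasible set of the convex program under the linear map $\bsy{\alpha} \mapsto \bsy{\alpha}^\top \mR^{(t+1)}$. The feasible set in $\bsy{\alpha}$-space is convex (intersection of box constraints with a pointwise linear inequality and one linear equality), so $\calK$ is convex. By the feasibility constraints, every $y \in \calK$ satisfies $y \geq c^{T-t} w^{(t+1)}$ entrywise and $\langle y, \mathbf{1} \rangle = 1$, and $c^{T-t} w^{(t+1)}$ has strictly positive coordinates (by construction in Algorithm~\ref{alg:build-vectors}, $w^{(t+1)}$ is a coordinatewise reciprocal of a sum of sampled densities, which are all positive at the sampled points). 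These are exactly the hypotheses of Corollary~\ref{coro:KL-basic} with $w \leftarrow c^{T-t} w^{(t+1)}$ and $v \leftarrow \mathbf{z}^{x[:t+1]}$; and since the program's objective depends on $\bsy{\alpha}$ only through $\bsy{\alpha}^\top \mR^{(t+1)}$, its minimizer in $\calK$ is exactly ${\bsy{\alpha}^{x[:t+1]}}^\top \mR^{(t+1)}$.

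Applying the corollary to the feasible point $\bsy{\alpha}^\top \mR^{(t+1)} \in \calK$ immediately yields
\[
\KL_{\geq c^{T-t} w^{(t+1)}}\!\bigl(\bsy{\alpha}^\top \mR^{(t+1)} \bigm\| {\bsy{\alpha}^{x[:t+1]}}^\top \mR^{(t+1)}\bigr) \leq \KL_{\geq c^{T-t} w^{(t+1)}}\!\bigl(\bsy{\alpha}^\top \mR^{(t+1)} \bigm\| \mathbf{z}^{x[:t+1]}\bigr) + \log\bigl(\norm{\mathbf{z}^{x[:t+1]}}_1 + \norm{c^{T-t} w^{(t+1)}}_1\bigr).
\]
It then remains to absorb $\norm{c^{T-t} w^{(t+1)}}_1$ into $\eta$. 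By Assumption~\ref{assume:good-params}, $\norm{w^{(t+1)}}_\infty \leq 1/\sqrt{c}$ and $w^{(t+1)}$ has dimension $d = \poly(S,O,T,1/\eta)$, so $\norm{w^{(t+1)}}_1 \leq d/\sqrt{c}$. Since $t < T$ gives $T - t \geq 1$, we obtain $c^{T-t}\norm{w^{(t+1)}}_1 \leq c^{1/2} d = \eta^{5OTS} \cdot \poly(S,O,T,1/\eta)$, which is at most $\eta$ whenever $\eta < 1/\poly(O,T,S)$ as guaranteed by Assumption~\ref{assume:good-params}. Monotonicity of the logarithm then yields the bound claimed.

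The argument is essentially a mechanical invocation of Corollary~\ref{coro:KL-basic}. The only content beyond that invocation is bookkeeping: verifying that the pointwise lower-bound and normalization constraints baked into the convex program exactly match the corollary's convex-set hypotheses, and confirming that the $\norm{c^{T-t} w^{(t+1)}}_1$ term inside the logarithm is swamped by $\eta$. The latter is precisely what the aggressive choice $c = \eta^{10OTS}$ in Assumption~\ref{assume:good-params} is designed to buy; I do not anticipate any substantive obstacle beyond tracking these constants.
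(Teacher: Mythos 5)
Your proof is correct and follows the same route as the paper: both set $\calK$ to be the image of the feasible set under $\bsy{\alpha}\mapsto\bsy{\alpha}^\top\mR^{(t+1)}$, invoke Corollary~\ref{coro:KL-basic} with $w\leftarrow c^{T-t}w^{(t+1)}$ and $v\leftarrow\mathbf{z}^{x[:t+1]}$, and absorb the $\norm{c^{T-t}w^{(t+1)}}_1\leq d\sqrt{c}\leq\eta$ term inside the logarithm via the bound $\norm{w^{(t+1)}}_\infty\leq 1/\sqrt{c}$ from Assumption~\ref{assume:good-params}. Your version is a bit more explicit in checking the corollary's hypotheses, but the argument is identical in substance.
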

\begin{proof}
We apply Corollary~\ref{coro:KL-basic} where $\calK$ is the set of all vectors $\bsy{\alpha}^\top \mR^{(t+1)}$ where $\boldsymbol{\alpha}$ is feasible in the convex program.  It is clear that this set is convex.  Also,  if we set $w \leftarrow c^{T - t}w^{(t+1)}$ in Corollary~\ref{coro:KL-basic} then all of the hypotheses are satisfied.  By Assumption~\ref{assume:good-params}, $\norm{w}_1 \leq d\sqrt{c} \leq \eta$.  Also by definition, all elements of $\calK$ have sum of entries equal to $1$.  Thus, we get
\[
\KL_{\geq c^{T - t} w^{(t+1)}}\left(\bsy{\alpha}^\top \mR^{(t+1)}  \| {\bsy{\alpha}^{x[:t+1]}}^\top \mR^{(t+1)}  \right) \leq \KL_{\geq  c^{T - t} w^{(t+1)}}\left( \bsy{\alpha}^\top \mR^{(t+1)} \| \mathbf{z}^{x[:t+1]} \right) + \log\left(\norm{\mathbf{z}^{x[:t+1]}}_1 + \eta \right)
\]
as desired.
\end{proof}

We will also need the following observation about Algorithm~\ref{alg:sampling-procedure} that the vector 
\[
{\bsy{\alpha}^{x[:t]}}^\top \mM^{(t)}[\calH^{(t)}, :]
\]
is always close to being a valid distribution.

\begin{claim}\label{claim:close-to-distribution}
Throughout the execution of Algorithm~\ref{alg:sampling-procedure}, for any $t < T$, any feasible solution $\bsy{\alpha} = \{ \alpha_h \}_{h \in \calH^{(t+1)}}$ to the convex program in Line~\ref{line:convex-opt} must satisfy
\begin{itemize}
\item $1 - 3S^2\eta \leq \sum_{h \in \calH^{(t+1)}} \alpha_h \leq 1 + 3S^2\eta$ 
\item $1 - \eta \leq \norm{\bsy{\alpha}^\top \mM^{(t+1)}[\calH^{(t+1)} , :]}_1 \leq 1 + \eta$
\end{itemize}
\end{claim}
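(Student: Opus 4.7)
The plan is to read both bullets off of the feasibility constraints. The constraints
\[
\bsy{\alpha}^\top \mR^{(t+1)} \geq c^{T-t} w^{(t+1)} \geq 0 \quad\text{and}\quad \langle \bsy{\alpha}^\top \mR^{(t+1)}, \mathbf{1}\rangle = 1
\]
together say precisely that $\bsy{\alpha}^\top \mR^{(t+1)}$ is a nonnegative vector whose entries sum to $1$, so in particular $\|\bsy{\alpha}^\top \mR^{(t+1)}\|_1 = 1$. This is the only fact about the feasible point that the rest of the argument uses; both bullets will come from transferring this identity to the ``true'' conditional distributions via the representative property from Assumption~\ref{assume:good-params}.

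The second bullet is immediate. Since $|\calH^{(t+1)}| \leq S \leq 10S$ and $|\alpha_h| \leq 3S \leq 10S$ by feasibility, the $(10S, \eta)$-representative property applies to the coefficient vector $\{\alpha_h\}_{h \in \calH^{(t+1)}}$ (viewed as supported on the subset $\calH^{(t+1)} \subseteq \calB^{(t+1)}$ by padding with zero coefficients), yielding
\[
\bigl|\|\bsy{\alpha}^\top \mR^{(t+1)}\|_1 - \|\bsy{\alpha}^\top \mM^{(t+1)}[\calH^{(t+1)}, :]\|_1\bigr| \leq \eta,
\]
and substituting $\|\bsy{\alpha}^\top \mR^{(t+1)}\|_1 = 1$ pins the right-hand quantity down to $[1-\eta, 1+\eta]$, as required.

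For the first bullet the key observation is that the algebraic sum $\sum_h \alpha_h$ can be extracted from the identity $\|\bsy{\alpha}^\top \mR^{(t+1)}\|_1 = 1$ once we use that each individual $u_h$ is entrywise nonnegative (the third bullet of Assumption~\ref{assume:good-params}). Indeed, nonnegativity of $u_h$ gives $\|u_h\|_1 = \sum_f u_h[f]$, so interchanging the order of summation yields
\[
1 = \sum_f \sum_{h \in \calH^{(t+1)}} \alpha_h u_h[f] = \sum_{h \in \calH^{(t+1)}} \alpha_h \|u_h\|_1.
\]
Applying representativeness once more with the singleton coefficient vector supported only on $h$, and using that $\Pr_{\wh{\Hm}}[\cdot | h]$ is a probability distribution of $L^1$ norm $1$, yields $|\|u_h\|_1 - 1| \leq \eta$ for each $h \in \calH^{(t+1)}$. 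Writing $\|u_h\|_1 = 1 + \eps_h$ with $|\eps_h| \leq \eta$ and rearranging gives $\sum_h \alpha_h = 1 - \sum_h \alpha_h \eps_h$, and the error term is bounded in absolute value by $|\calH^{(t+1)}| \cdot 3S \cdot \eta \leq 3S^2 \eta$.

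I do not expect any real obstacle here; this is essentially a bookkeeping claim. The only mildly subtle point is that in order to control the algebraic quantity $\sum_h \alpha_h$ (as opposed to an $L^1$ norm), representativeness must be routed through the per-row identity $1 = \sum_h \alpha_h \|u_h\|_1$, which relies on nonnegativity of each $u_h$ individually rather than just of the overall combination $\bsy{\alpha}^\top \mR^{(t+1)}$.
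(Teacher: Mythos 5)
Your proof is correct and takes essentially the same route as the paper's. Both proofs extract $\sum_h \alpha_h \langle u_h, \mathbf{1}\rangle = 1$ from the third feasibility constraint, use nonnegativity of $u_h$ (so $\langle u_h,\mathbf{1}\rangle = \|u_h\|_1$) together with the $(10S,\eta)$-representative property to get $|\langle u_h,\mathbf{1}\rangle - 1|\le\eta$ for each $h$, and then bound the error by $|\calH^{(t+1)}|\cdot 3S\cdot\eta \le 3S^2\eta$; the second bullet is the direct transfer of $\|\bsy{\alpha}^\top\mR^{(t+1)}\|_1 = 1$ through representativeness, exactly as in the paper.
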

\begin{proof}
The third constraint of the convex program implies that
\[
1 = \sum_{h \in \calH^{(t+1)}} \alpha_h \langle u_h , \mathbf{1} \rangle = \sum_{h \in \calH^{(t+1)}}\alpha_h + \sum_{h \in \calH^{(t+1)}} \alpha_h (\langle u_h , \mathbf{1} \rangle - 1) \,.
\]
However Assumption~\ref{assume:good-params} implies that $|\langle u_h , \mathbf{1} \rangle - 1 | \leq \eta$ for all $h$ and thus we conclude 
\[
1 - 3S^2\eta \leq \sum_{h \in \calH^{(t+1)}} \alpha_h \leq 1 + 3S^2 \eta \,.
\]
Now for the second statement, note that the second and third constraints of the convex program together enforce that $\norm{\sum_{h \in \calH^{(t+1)}} \alpha_h u_h }_1 = 1$ and then Assumption~\ref{assume:good-params} implies that
\[
1 -\eta \leq \norm{\bsy{\alpha}^\top \mM^{(t+1)}[\calH^{(t+1)} , :]}_1 \leq 1 + \eta
\]
as desired.
\end{proof}

\begin{corollary}\label{coro:positive-probs}
Throughout the execution of Algorithm~\ref{alg:sampling-procedure}, the probabilities $p_o$ computed in line~\ref{line:next-char} are always at least $c^{0.1}$.     
\end{corollary}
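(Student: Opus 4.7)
The plan is to unpack the definition of $\round_{2c^{0.1}}$ and reduce the claim to a bound on the normalization constant. By Definition~\ref{def:round}, the computed probabilities are $p_o = \max(v[o], 2c^{0.1})/C$ where $v = {\bsy{\alpha}^{x[:t]}}^\top \mP^{(t)}$ and $C = \sum_{o \in \calO} \max(v[o], 2c^{0.1})$. The numerator is at least $2c^{0.1}$ by construction, so it suffices to show that $C \leq 2$, which would give $p_o \geq c^{0.1}$ as desired.

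To bound $C$, I would use the elementary inequality $\max(a, b) \leq |a| + b$ to obtain $C \leq \|v\|_1 + 2O c^{0.1}$. The key structural step is then to relate $\|v\|_1 = \|{\bsy{\alpha}^{x[:t]}}^\top \mP^{(t)}\|_1$ to $\|{\bsy{\alpha}^{x[:t]}}^\top \mM^{(t)}[\calH^{(t)}, :]\|_1$. The observation is that each column of $\mP^{(t)}$ indexed by $o \in \calO$ records $\Pr_{\wh{\Hm}}[o \mid h]$, which equals $\sum_{f \in \calO^{T-t-1}} \Pr_{\wh{\Hm}}[o \vee f \mid h]$, i.e. the sum of the entries of the row $\mM^{(t)}[h, :]$ over futures beginning with the character $o$. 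Thus ${\bsy{\alpha}^{x[:t]}}^\top \mP^{(t)}$ is obtained from ${\bsy{\alpha}^{x[:t]}}^\top \mM^{(t)}[\calH^{(t)}, :]$ by aggregating blocks of coordinates, and by the triangle inequality such aggregation cannot increase the $L^1$ norm.

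Next I would invoke Claim~\ref{claim:close-to-distribution} (applied at step $t-1$, since $\bsy{\alpha}^{x[:t]}$ is a feasible solution to the previous iteration's convex program) to conclude $\|{\bsy{\alpha}^{x[:t]}}^\top \mM^{(t)}[\calH^{(t)}, :]\|_1 \leq 1 + \eta$. For the base case $t = 0$, this step needs a separate line of reasoning: there $\bsy{\alpha}^{x[:0]} = 1$ and $\mM^{(0)}[\calH^{(0)}, :]$ is simply the probability distribution of $\wh{\Hm}$ on $\calO^T$, so its $L^1$ norm is $1$ directly.

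Combining gives $C \leq (1 + \eta) + 2O c^{0.1}$. Since $c = \eta^{10OTS}$ and $\eta$ is smaller than any fixed inverse polynomial in $O, T, S$ by Assumption~\ref{assume:good-params}, the quantity $\eta + 2Oc^{0.1}$ is much less than $1$, so $C \leq 2$ and hence $p_o \geq c^{0.1}$. There is no serious obstacle here: the statement is essentially a corollary of Claim~\ref{claim:close-to-distribution} together with the definition of $\round$; the only non-trivial ingredient is the observation that passing from $\mM^{(t)}$ to $\mP^{(t)}$ is an $L^1$-non-expansive aggregation, but this follows immediately from the triangle inequality once the combinatorial identification of columns is made.
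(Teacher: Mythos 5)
Your proof is correct and follows essentially the same route as the paper: the paper's one-line invocation ``By Claim~\ref{claim:close-to-distribution}, $\sum_{o \in \calO}|r_o| \leq 1+\eta$'' silently uses exactly the aggregation observation you spell out (each column of $\mP^{(t)}$ is the sum of the block of columns of $\mM^{(t)}[\calH^{(t)}, :]$ indexed by futures with that first character, so passing from $\mM^{(t)}$ to $\mP^{(t)}$ is $L^1$-non-expansive), and your handling of the base case $t=0$ and the explicit note that $\bsy{\alpha}^{x[:t]}$ is feasible for the iteration-$(t-1)$ program are details the paper glosses over but are both correct.
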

\begin{proof}
Let the entries of ${\boldsymbol{\alpha}^{x[:t]}}^\top \mP^{(t)}$ be $\{ r_o\}_{o \in \calO}$.  By Claim~\ref{claim:close-to-distribution}, 
\[
\sum_{o \in \calO} |r_o| \leq 1 + \eta \,.
\]
Thus,
\[
\sum_{o \in \calO} \max(2c^{0.1}, r_o) \leq 1 + \eta + 2c^{0.1}O \leq 2 
\]
and this immediately implies the desired statement.
\end{proof}

Now we can take Claim~\ref{claim:KL-projection} and then use Assumption~\ref{assume:good-params} to replace the matrix $\mR^{(t+1)}$ with the matrix of true probabilities $\mM^{(t+1)}$ as follows. 

\begin{claim}\label{claim:key-closeness-p1}
Fix an $x[:t] \in \calG^{(t)}$ (recall Definition~\ref{def:good-histories}).  Let $o \in \calO$ be such that $x[:t] \vee o \in \calG^{(t+1)}$.  Assume that in the execution of Algorithm~\ref{alg:sampling-procedure}, the first $t$ characters we have sampled are exactly $x[:t]$ and the $t+1$st character is sampled to be $o_{t+1} = o$, so we just set $x[:t+1] = x[:t] \vee o$ in line~\ref{line:append}.  Then after solving the optimization problem in line~\ref{line:convex-opt} to compute the next set of coefficients $\bsy{\alpha}^{x[:t] \vee o}$, we have
\[
\begin{split}
&\KL_{\geq c^{T - t} }\left(\mM^{(t+1)}[x[:t] \vee o, :] \| {\bsy{\alpha}^{x[:t] \vee o}}^\top \mM^{(t+1)}[\calH^{(t+1)}, :] \right) \\  &\leq \KL_{\geq c^{T - t} }\left( \mM^{(t+1)}[x[:t] \vee o, :] \| {\bsy{\nu}^{x[:t] \vee o}}^\top \mM^{(t+1)}[\calH^{(t)} \vee o, :]  \right) + \log\left(\norm{\mathbf{z}^{x[:t] \vee o}}_1 + \eta \right)  + \eta^{0.5} \,.
\end{split}
\]
\end{claim}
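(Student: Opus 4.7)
The plan is to combine Claim~\ref{claim:KL-projection} with the $(10S,\eta,c^T)$ KL-preserving property from Assumption~\ref{assume:good-params}, using the positive representability of $x[:t]\vee o$ to exhibit a feasible witness in the convex program that lies close to the true distribution when expanded in the $\mM^{(t+1)}$ representation. First I will construct the witness. Since $x[:t]\vee o \in \calG^{(t+1)}$, Definition~\ref{def:good-histories} supplies $\tilde{y}\in\R^{|\calH^{(t+1)}|}$ with $\|\tilde{y}\|_\infty\leq 2S$ such that $\|\mM^{(t+1)}[x[:t]\vee o,:] - \tilde{y}^\top \mM^{(t+1)}[\calH^{(t+1)},:]\|_1 \leq 2S\eta$ and $\tilde{y}^\top \mM^{(t+1)}[\calH^{(t+1)},\calX^{(t+1)}] \geq 2\sqrt{c}$ entrywise. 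Using the identity $u_h = w^{(t+1)}\odot \mM^{(t+1)}[h,\calX^{(t+1)}]$ from Assumption~\ref{assume:good-params}, this gives $\tilde{y}^\top \mR^{(t+1)} \geq 2\sqrt{c}\,w^{(t+1)} \geq c^{T-t} w^{(t+1)}$ entrywise, since $T-t\geq 1$ and $c\ll 1$. The representative property implies $\langle \tilde{y}^\top \mR^{(t+1)}, \mathbf{1}\rangle$ lies in $[1 - O(S\eta), 1 + O(S\eta)]$. Rescaling $\tilde{y}$ by the reciprocal of this sum yields $\bsy{y}$ with $\|\bsy{y}\|_\infty \leq 3S$, $\bsy{y}^\top \mR^{(t+1)} \geq c^{T-t}w^{(t+1)}$ entrywise, and $\langle \bsy{y}^\top \mR^{(t+1)},\mathbf{1}\rangle = 1$, making $\bsy{y}$ feasible in the program of Line~\ref{line:convex-opt} while still satisfying $\|\mM^{(t+1)}[x[:t]\vee o,:] - \bsy{y}^\top \mM^{(t+1)}[\calH^{(t+1)},:]\|_1 \leq O(S\eta)$.

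Next, apply Claim~\ref{claim:KL-projection} with $\bsy{\alpha} = \bsy{y}$ to get the base inequality relating $\KL_{\geq c^{T-t}w^{(t+1)}}(\bsy{y}^\top \mR^{(t+1)} \,\|\, {\bsy{\alpha}^{x[:t]\vee o}}^\top \mR^{(t+1)})$ to $\KL_{\geq c^{T-t}w^{(t+1)}}(\bsy{y}^\top \mR^{(t+1)} \,\|\, \mathbf{z}^{x[:t]\vee o})$ plus the $\log$ term. Then invoke the KL-preserving property with $\tau^* = c^{T-t} \in [c^T,1]$ to translate both KL expressions from the reduced $\mR^{(t+1)}$/$\mT^{(t)}_{o}$ representation back to the true $\mM^{(t+1)}$ representation. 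The vectors $\bsy{y}$ and $\bsy{\alpha}^{x[:t]\vee o}$ have magnitudes at most $3S$ and supports of size at most $S$, so they fit the $r=10S$ bound; for $\bsy{\nu}^{x[:t]\vee o}$, Corollary~\ref{coro:positive-probs} gives $p_{o_{t+1}} \geq c^{0.1}$, so combined with inductive magnitude bound $3S$ on $\bsy{\alpha}^{x[:t]}$ we get $|\nu_h| \leq 3S/c^{0.1} \leq 10S/c^T$, as required. This translation step contributes an additive error of at most $2\eta$ total.

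Finally, use Claim~\ref{claim:trunc-KL-lipchitz} to swap the first argument of both KL terms from $\bsy{y}^\top \mM^{(t+1)}[\calH^{(t+1)},:]$ to the true distribution $\mM^{(t+1)}[x[:t]\vee o,:]$. The $L^1$ gap is $O(S\eta)$ and the relevant Lipschitz constant is of order $\log(1/c^{T-t}) + O(1) = \poly(O,T,S)\log(1/\eta)$, so each KL term shifts by at most $\poly(O,T,S)\cdot \eta \log(1/\eta)$, which is bounded by $\eta^{0.5}/4$ for $\eta$ sufficiently small. The main obstacle is the bookkeeping in Step 1: verifying the entrywise positivity constraint $\bsy{y}^\top \mR^{(t+1)} \geq c^{T-t}w^{(t+1)}$ for the rescaled witness, which forces the parameter choices $c = \eta^{10OTS}$ and $2\sqrt{c}$-positive representability to be coupled so that the positivity margin from $\calG^{(t+1)}$ dominates the truncation threshold $c^{T-t}$ uniformly in $t$.
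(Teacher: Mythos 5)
Your proof is correct and follows essentially the same route as the paper's: you construct the feasible witness from the positive representability of $x[:t]\vee o$ (the paper's $\bsy{\beta}^{x[:t]\vee o}/Q$), apply Claim~\ref{claim:KL-projection}, translate via the KL-preserving property with Corollary~\ref{coro:positive-probs} controlling $\bsy{\nu}$, and finish with Claim~\ref{claim:trunc-KL-lipchitz} to swap in the true row $\mM^{(t+1)}[x[:t]\vee o,:]$. Your explicit verification of the entrywise constraint $\bsy{y}^\top\mR^{(t+1)}\geq c^{T-t}w^{(t+1)}$ via $u_h = w^{(t+1)}\odot\mM^{(t+1)}[h,\calX^{(t+1)}]$ is the step the paper leaves implicit, but it is the same argument.
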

\begin{proof}

By assumption that $x[:t] \vee o \in \calG^{(t+1)}$, there exists coefficients $\bsy{\beta}^{x[:t] \vee o} = \{ \beta_h^{x[:t] \vee o} \}_{h \in \calH^{(t+1)}}$  such that $\norm{\bsy{\beta}^{x[:t] \vee o}}_{\infty} \leq 2S$ and 
\begin{itemize}
\item \begin{equation}\label{eq:good-representation} \norm{\mM^{(t+1)}[x[:t] \vee o,:] - {\bsy{\beta}^{x[:t] \vee o}}^\top \mM^{(t+1)}[\calH^{(t+1)}, :] }_1 \leq 2S\eta \end{equation}
\item ${\bsy{\beta}^{x[:t] \vee o}}^\top  \mM^{(t+1)}[\calH^{(t+1)}, \calX^{(t+1)}]$ has all entries at least $2\sqrt{c}$ 
\end{itemize}

\noindent Note that the first condition implies that 
\[
1 - 2S\eta \leq \sum_{h \in \calH^{(t+1)}} \beta_h^{x[:t] \vee o} \leq 1 + 2S\eta
\]
because all rows of the matrix $\mM^{(t+1)}$ have sum equal to $1$.  Let $\mathbf{1}$ be the all ones vector of dimension $d$ (recall Assumption~\ref{assume:good-params}).  Then define
\[
Q \defeq \langle {\bsy{\beta}^{x[:t] \vee o}}^\top \mR^{(t+1)} ,  \mathbf{1} \rangle = \sum_{h \in \calH^{(t+1)}} \beta_h^{x[:t] \vee o} \langle u_h, \mathbf{1} \rangle = \sum_{h \in \calH^{(t+1)}} \beta_h^{x[:t] \vee o} + \sum_{h \in \calH^{(t+1)}} \beta_h^{x[:t] \vee o} \left(\langle u_h, \mathbf{1} \rangle  - 1\right) \,.
\]
By Assumption~\ref{assume:good-params} (specifically that the vectors $u_h$ are representative for the distributions $\Pr_{\wh{\Hm}}[\cdot | h]$), this implies that 
\begin{equation}\label{eq:sum-bound}
1 - 5S^2\eta  \leq Q \leq 1 + 5S^2 \eta \,.
\end{equation}
Thus, using Assumption~\ref{assume:good-params}, the vector $\bsy{\beta}^{x[:t] \vee o}/Q = \{ \beta_h^{x[:t] \vee o}/Q \}_{h \in \calH^{(t+1)}}$ must be a feasible solution to the convex program in Line~\ref{line:convex-opt} of Algorithm~\ref{alg:sampling-procedure}.    By Claim~\ref{claim:KL-projection},
\[
\begin{split}
& \KL_{\geq c^{T - t} w^{(t+1)}}\left( {\bsy{\beta}^{x[:t] \vee o}}^\top \mR^{(t+1)}/Q \| {\bsy{\alpha}^{x[:t] \vee o}}^\top \mR^{(t+1)} \right) \\ & \leq \KL_{\geq c^{T - t} w^{(t+1)}}\left( {\bsy{\beta}^{x[:t] \vee o}}^\top \mR^{(t+1)}/Q  \| \mathbf{z}^{x[:t] \vee o} \right) + \log\left(\norm{\mathbf{z}^{x[:t] \vee o}}_1 + \eta \right)
\end{split}
\]
Next, recall the vector $\bsy{\nu}^{x[:t] \vee o} \defeq  \{ q_h^{o} \alpha_h^{x[:t]}/p_{o}\}_{h \in \calH^{(t)}}$ constructed in Algorithm~\ref{alg:sampling-procedure} and recall that 
\[
\mathbf{z}^{x[:t] \vee o} = {\bsy{\nu}^{x[:t] \vee o}}^\top \mT_{o}^{(t)} \,.
\]
Then by the KL-preserving property in Assumption~\ref{assume:good-params} (and Corollary~\ref{coro:positive-probs} which bounds the entries of $\bsy{\nu}^{x[:t] \vee o}$)
\[
\begin{split}
&\KL_{\geq c^{T - t} }\left({\bsy{\beta}^{x[:t] \vee o}}^\top \mM^{(t+1)}[\calH^{(t+1)}, :]/Q \| {\bsy{\alpha}^{x[:t] \vee o}}^\top \mM^{(t+1)}[\calH^{(t+1)}, :]\right) \\  &\leq \KL_{\geq c^{T - t} }\left( {\bsy{\beta}^{x[:t] \vee o}}^\top \mM^{(t+1)}[\calH^{(t+1)}, :]/Q \| {\bsy{\nu}^{x[:t] \vee o}}^\top \mM^{(t+1)}[\calH^{(t)} \vee o, :] \right) + \log\left(\norm{\mathbf{z}^{x[:t] \vee o}}_1 + \eta \right) + 2\eta \,.
\end{split} \,.
\]
Finally, by Claim~\ref{claim:trunc-KL-lipchitz} and \eqref{eq:good-representation}, \eqref{eq:sum-bound}, the above implies
\begin{equation}\label{eq:KL-projection-bound}
\begin{split}
&\KL_{\geq c^{T - t} }\left(\mM^{(t+1)}[x[:t] \vee o, :] \| {\bsy{\alpha}^{x[:t] \vee o}}^\top \mM^{(t+1)}[\calH^{(t+1)}, :] \right) \\  &\leq \KL_{\geq c^{T - t} }\left( \mM^{(t+1)}[x[:t] \vee o, :] \| {\bsy{\nu}^{x[:t] \vee o}}^\top \mM^{(t+1)}[\calH^{(t)} \vee o, :]  \right) + \log\left(\norm{\mathbf{z}^{x[:t] \vee o}}_1 + \eta \right)  + \eta^{0.5}
\end{split} 
\end{equation} 
where in the last step we used the assumption that $\eta$ is sufficiently small.  This completes the proof.
\end{proof}

Now we sum Claim~\ref{claim:key-closeness-p1} over all possibilities for $o \in \calO$ to get the following bound.

\begin{claim}\label{claim:key-closeness-p2}
Fix an $x[:t] \in \calG^{(t)}$.  Assume that in the execution of Algorithm~\ref{alg:sampling-procedure}, the first $t$ characters we have sampled are exactly $x[:t]$.  Then we have
\[
\begin{split}
&\sum_{o \in \calO} \Pr_{\wh{\Hm}}[o | x[:t]] \cdot  \KL_{\geq c^{T - t} }\left(\mM^{(t+1)}[x[:t] \vee o, :] \| {\bsy{\alpha}^{x[:t] \vee o}}^\top \mM^{(t+1)}[\calH^{(t+1)}, :] \right) \\ & \leq \KL_{\geq c^{T - t + 1} }\left( \mM^{(t)}[x[:t] , :] \| {\bsy{\alpha}^{x[:t] }}^\top \mM^{(t)}[\calH^{(t)}, :]  \right) + 2\eta^{0.5}  + 5T\log(1/c) \cdot \sum_{\substack{o \in \calO \\ x[:t] \vee o \notin \calG^{(t+1)}}} \Pr_{\wh{\Hm}}[o|x[:t]] \,.
\end{split}
\]
\end{claim}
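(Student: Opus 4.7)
The plan is to split the sum over $o \in \calO$ into a ``good'' set $\{o : x[:t] \vee o \in \calG^{(t+1)}\}$ and its complement, then apply Claim~\ref{claim:key-closeness-p1} to the good part and bound the bad part trivially. For bad $o$, by Claim~\ref{claim:close-to-distribution} and Assumption~\ref{assume:positive-distribution} all entries of both vectors inside the truncated KL are between $c^{T-t}$ (after truncation) and approximately $1$, so the divergence is at most $5T\log(1/c)$; summing against $\Pr_{\wh{\Hm}}[o|x[:t]]$ produces the last term on the right-hand side of the target inequality.

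For the good $o$, after applying Claim~\ref{claim:key-closeness-p1} we need to collapse $\sum_o \Pr_{\wh{\Hm}}[o|x[:t]] \KL_{\geq c^{T-t}}(\mM^{(t+1)}[x[:t]\vee o, :] \,\|\, {\bsy{\nu}^{x[:t]\vee o}}^\top \mM^{(t+1)}[\calH^{(t)}\vee o, :])$ into the single prefix-level divergence on the right. The key input is the exact identity
\[
({\bsy{\alpha}^{x[:t]}}^\top \mM^{(t)}[\calH^{(t)}, :])[o \vee f] \;=\; p_o \cdot {\bsy{\nu}^{x[:t] \vee o}}^\top \mM^{(t+1)}[\calH^{(t)} \vee o, f],
\]
which follows directly from $\nu_h^{x[:t] \vee o} = q_h^o \alpha_h^{x[:t]}/p_o$ together with $\Pr_{\wh{\Hm}}[o \vee f | h] = q_h^o \Pr_{\wh{\Hm}}[f | h \vee o]$, coupled with the true chain rule $\Pr_{\wh{\Hm}}[o \vee f | x[:t]] = \Pr_{\wh{\Hm}}[o|x[:t]] \Pr_{\wh{\Hm}}[f | x[:t]\vee o]$. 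These two factorizations let the log-ratios split additively, and the standard chain rule for KL gives, modulo truncation bookkeeping, an equality of the form
\[
\KL_{\geq c^{T-t+1}}(\mM^{(t)}[x[:t], :] \,\|\, {\bsy{\alpha}^{x[:t]}}^\top \mM^{(t)}[\calH^{(t)}, :]) \;=\; \sum_o \Pr_{\wh{\Hm}}[o|x[:t]] \log \frac{\Pr_{\wh{\Hm}}[o|x[:t]]}{p_o} + \sum_o \Pr_{\wh{\Hm}}[o|x[:t]] \KL_{\geq c^{T-t}}(\cdots).
\]
The first sum on the right is a nonnegative KL between distributions over $\calO$, so it may be dropped. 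The log-penalty $\log(\|\mathbf{z}^{x[:t]\vee o}\|_1 + \eta)$ from Claim~\ref{claim:key-closeness-p1} averages to at most $O(\eta) \ll \eta^{0.5}$ by Jensen's inequality applied to the concave $\log$, using that the identity above and Claim~\ref{claim:close-to-distribution} make the weighted sum of $\|\mathbf{z}^{x[:t]\vee o}\|_1$ close to $1$.

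The main obstacle is making the truncation bookkeeping rigorous. For $p_o > 0$ one has $\max(p_o x, c^{T-t+1}) = p_o \max(x, c^{T-t+1}/p_o)$, and the crucial point is that $c^{T-t+1}/p_o \leq c^{T-t}$, which holds because $p_o \geq c^{0.1} \geq c$ by Corollary~\ref{coro:positive-probs} and $\Pr_{\wh{\Hm}}[o|x[:t]] \geq c$ by Assumption~\ref{assume:positive-distribution}. This ensures that after factoring out $p_o$ and $\Pr_{\wh{\Hm}}[o|x[:t]]$ the residual truncation level matches the $c^{T-t}$ appearing in Claim~\ref{claim:key-closeness-p1}, and that the chain-rule inequality goes in the favorable direction. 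The small $O(\eta)$ discrepancy between $p_o$ and $\Pr_{\wh{\Hm}}[o|x[:t]]$, coming from the rounding in line~\ref{line:next-char} and the representativeness of $\mP^{(t)}$, contributes a further error that is absorbable via Claim~\ref{claim:trunc-KL-lipchitz}.
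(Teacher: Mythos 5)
Your high-level decomposition (good/bad $o$, matrix identity, chain rule for truncated KL, Claim~\ref{claim:key-closeness-p1}, Jensen) matches the paper's, and your verification that $c^{T-t+1}/p_o \leq c^{T-t}$ makes the truncation factoring go in the favorable direction is exactly the right point. However, there is a genuine gap in how you absorb the $\log(\|\mathbf{z}^{x[:t]\vee o}\|_1 + \eta)$ penalty.

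You drop the term $\sum_o \Pr_{\wh{\Hm}}[o|x[:t]]\log\frac{\Pr_{\wh{\Hm}}[o|x[:t]]}{p_o}$ immediately as a nonnegative KL, and then try to bound $\sum_o \Pr_{\wh{\Hm}}[o|x[:t]]\log(\|\mathbf{z}^{x[:t]\vee o}\|_1 + \eta)$ by Jensen, claiming the ``weighted sum of $\|\mathbf{z}^{x[:t]\vee o}\|_1$ is close to $1$.'' But Jensen with $\Pr_{\wh{\Hm}}[\cdot|x[:t]]$-weights produces $\log\sum_o \Pr_{\wh{\Hm}}[o|x[:t]]\|\mathbf{z}^{x[:t]\vee o}\|_1$, and the quantity that is controllable via Claim~\ref{claim:close-to-distribution} is $\sum_o p_o\|\mathbf{z}^{x[:t]\vee o}\|_1 \leq 1 + O(\eta)$, not the $\Pr_{\wh{\Hm}}$-weighted sum. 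Writing $\sum_o \Pr_{\wh{\Hm}}[o|x[:t]]\|\mathbf{z}\|_1 = \sum_o \frac{\Pr_{\wh{\Hm}}[o|x[:t]]}{p_o}\,p_o\|\mathbf{z}\|_1$, the ratio $\Pr_{\wh{\Hm}}[o|x[:t]]/p_o$ can be as large as $1/(2c^{0.1})$, which is enormous. Your final paragraph asserts an ``$O(\eta)$ discrepancy between $p_o$ and $\Pr_{\wh{\Hm}}[o|x[:t]]$ coming from rounding and representativeness,'' but this is false: representativeness only controls $\big|\sum_h \alpha_h^{x[:t]} q_h^o - ({\bsy{\alpha}^{x[:t]}}^\top \mM^{(t)}[\calH^{(t)},:])\text{-marginal}\big|$, whereas the gap between $p_o$ and $\Pr_{\wh{\Hm}}[o|x[:t]]$ also reflects how far the running state ${\bsy{\alpha}^{x[:t]}}^\top\mM^{(t)}[\calH^{(t)},:]$ has drifted from $\mM^{(t)}[x[:t],:]$. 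That drift is precisely what the accumulating KL controls on average, and the claim being proved is pointwise in $x[:t]\in\calG^{(t)}$, so you cannot assume it is small.

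The fix is what the paper does: keep $\sum_o\Pr_{\wh{\Hm}}[o|x[:t]]\log\frac{p_o}{\Pr_{\wh{\Hm}}[o|x[:t]]}$ and combine it with the penalty \emph{inside} Jensen, i.e.\ bound $\sum_o \Pr_{\wh{\Hm}}[o|x[:t]]\log\frac{(\|\mathbf{z}^{x[:t]\vee o}\|_1+\eta)\,p_o}{\Pr_{\wh{\Hm}}[o|x[:t]]}\leq\log\sum_o p_o(\|\mathbf{z}^{x[:t]\vee o}\|_1+\eta)$. The $\Pr_{\wh{\Hm}}[o|x[:t]]$ factors cancel exactly, leaving the $p_o$-weighted sum which Claim~\ref{claim:close-to-distribution} controls. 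Discarding the nonnegative KL before Jensen destroys this cancellation and leaves an unbounded term.
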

\begin{proof}
Let $\calF_{o}^{T - t} \subset  \calO^{T - t}$ be the subset of all strings whose first character is $o$.  Then we have 
\[
{\bsy{\nu}^{x[:t] \vee o}}^\top \mM^{(t+1)}[\calH^{(t)} \vee o, :]   = \frac{1}{p_o} \sum_{h \in \calH^{(t)}} q_h^o \alpha_{h}^{x[:t]} \mM^{(t+1)}[h \vee o, :] = \frac{1}{p_o} {\bsy{\alpha}^{x[:t]}}^\top \mM^{(t)}[\calH^{(t)}, \calF_{o}^{T - t}] \,.
\]
Also note that 
\[
\mM^{(t+1)}[x[:t] \vee o, :] =  \frac{ \mM^{(t)}[x[:t] , \calF_o^{T - t}] }{\Pr_{\wh{\Hm}}[o | x[:t]]}\,.
\]
Note that all entries of the matrix $\mM^{(t)}$ are at least $c^{T - t}$ by Assumption~\ref{assume:positive-distribution} and  $p_o \geq c^{0.1}$ by Corollary~\ref{coro:positive-probs} so using the definition of truncated KL,
\begin{equation}\label{eq:kl-transition}
\begin{split}
&\Pr_{\wh{\Hm}}[o | x[:t]]  \cdot \KL_{\geq c^{T - t} }\left( \mM^{(t+1)}[x[:t] \vee o, :] \| {\bsy{\nu}^{x[:t] \vee o}}^\top \mM^{(t+1)}[\calH^{(t)} \vee o, :]  \right) \\ &\leq  \KL_{\geq c^{T - t + 1} }\left( \mM^{(t)}[x[:t] , \calF_{o}^{T - t}] \| {\bsy{\alpha}^{x[:t] }}^\top \mM^{(t)}[\calH^{(t)}, \calF_{o}^{T - t}]  \right) + \Pr_{\wh{\Hm}}[o | x[:t]] \log \frac{p_o}{\Pr_{\wh{\Hm}}[o | x[:t]]} \,.
\end{split}
\end{equation}

Next, we will apply Claim~\ref{claim:key-closeness-p1} and \eqref{eq:kl-transition} to bound the sum 
\[
\sum_{o \in \calO} \Pr_{\wh{\Hm}}[o | x[:t]] \cdot  \KL_{\geq c^{T - t} }\left(\mM^{(t+1)}[x[:t] \vee o, :] \| {\bsy{\alpha}^{x[:t] \vee o}}^\top \mM^{(t+1)}[\calH^{(t+1)}, :] \right) \,.
\]
However, we can only apply Claim~\ref{claim:key-closeness-p1} when $o$ is such that $x[:t] \vee o \in \calG^{(t+1)}$.  For other choices of $o$, we can nevertheless use the following trivial version of Claim~\ref{claim:key-closeness-p1}
\[
\begin{split}
&\KL_{\geq c^{T - t} }\left(\mM^{(t+1)}[x[:t] \vee o, :] \| {\bsy{\alpha}^{x[:t] \vee o}}^\top \mM^{(t+1)}[\calH^{(t+1)}, :] \right) \\  &\leq \KL_{\geq c^{T - t} }\left( \mM^{(t+1)}[x[:t] \vee o, :] \| {\bsy{\nu}^{x[:t] \vee o}}^\top \mM^{(t+1)}[\calH^{(t)} \vee o, :]  \right) + \log\left(\norm{\mathbf{z}^{x[:t] \vee o}}_1 + \eta \right)  + \eta^{0.5} + 5T \log(1/c) 
\end{split}
\]
where the above holds simply because both of the KL divergences are bounded in magnitude by $2T \log(1/c)$.  Thus, we have the bound
\begin{equation}\label{eq:main-bound}
\begin{split}
& \sum_{o \in \calO} \Pr_{\wh{\Hm}}[o | x[:t]] \cdot  \KL_{\geq c^{T - t} }\left(\mM^{(t+1)}[x[:t] \vee o, :] \| {\bsy{\alpha}^{x[:t] \vee o}}^\top \mM^{(t+1)}[\calH^{(t+1)}, :] \right) \\ & \leq \eta^{0.5} + \sum_{o \in \calO} \Pr_{\wh{\Hm}}[o | x[:t]] \cdot  \log\left( \norm{\mathbf{z}^{x[:t] \vee o}}_1 + \eta \right)+ \sum_{o \in \calO } \Pr_{\wh{\Hm}}[o | x[:t]] \log \frac{p_o}{\Pr_{\wh{\Hm}}[o | x[:t]]}  \\ & \quad  + \sum_{o \in \calO} \KL_{\geq c^{T - t + 1} }\left( \mM^{(t)}[x[:t] , \calF_{o}^{T - t}] \| {\bsy{\alpha}^{x[:t] }}^\top \mM^{(t)}[\calH^{(t)}, \calF_{o}^{T - t}]  \right) + 5T\log(1/c) \cdot \sum_{\substack{o \in \calO \\ x[:t] \vee o \notin \calG^{(t+1)}}} \Pr_{\wh{\Hm}}[o|x[:t]]
\\ & = \KL_{\geq c^{T - t + 1} }\left( \mM^{(t)}[x[:t] , :] \| {\bsy{\alpha}^{x[:t] }}^\top \mM^{(t)}[\calH^{(t)}, :]  \right) \\ & \quad + \eta^{0.5} + \sum_{o \in \calO} \Pr_{\wh{\Hm}}[o | x[:t]] \cdot  \log\left( \frac{\left(\norm{\mathbf{z}^{x[:t] \vee o}}_1 + \eta \right) p_o}{\Pr_{\wh{\Hm}}[o | x[:t]]}\right)  +  5T\log(1/c) \cdot \sum_{\substack{o \in \calO \\ x[:t] \vee o \notin \calG^{(t+1)}}} \Pr_{\wh{\Hm}}[o|x[:t]] \,.
\end{split}
\end{equation}

Next, by definition and Assumption~\ref{assume:good-params},
\[
p_o \norm{ \mathbf{z}^{x[:t] \vee o}}_1 = \norm{p_o {\bsy{\nu}^{x[:t]\vee o}}^\top \mT_o^{(t)} }_1 \leq \norm{ p_o{\bsy{\nu}^{x[:t]\vee o}}^\top \mM^{(t+1)}[ \calH^{(t)} \vee o, :] }_1 + \eta \,.
\]
Observe that the vector ${\bsy{\alpha}^{x[:t]}}^\top \mM^{(t)}[\calH^{(t)}, :]$ is the same as concatenating $p_o{\bsy{\nu}^{x[:t]\vee o}}^\top \mM^{(t+1)}[ \calH^{(t)} \vee o, :]$ over all choices of $o$.  Thus by Claim~\ref{claim:close-to-distribution}
\[
\sum_{o \in \calO} p_o \left(\norm{ \mathbf{z}^{x[:t] \vee o}}_1  + \eta\right) \leq  (O+1)\eta  + \norm{{\bsy{\alpha}^{x[:t]}}^\top \mM^{(t)}[\calH^{(t)}, :] }_1 \leq 1 + (O + 2)\eta  \,.
\]
Combining this with \eqref{eq:main-bound} and using nonnegativity of KL gives
\[
\begin{split}
&\sum_{o \in \calO} \Pr_{\wh{\Hm}}[o | x[:t]] \cdot  \KL_{\geq c^{T - t} }\left(\mM^{(t+1)}[x[:t] \vee o, :] \| {\bsy{\alpha}^{x[:t] \vee o}}^\top \mM^{(t+1)}[\calH^{(t+1)}, :] \right) \\ & \leq \KL_{\geq c^{T - t + 1} }\left( \mM^{(t)}[x[:t] , :] \| {\bsy{\alpha}^{x[:t] }}^\top \mM^{(t)}[\calH^{(t)}, :]  \right) + \eta^{0.5} + \log\left(\sum_{o \in \calO} p_o \left(\norm{ \mathbf{z}^{x[:t] \vee o}}_1  + \eta\right) \right)\\ &\quad + 5T\log(1/c) \cdot \sum_{\substack{o \in \calO \\ x[:t] \vee o \notin \calG^{(t+1)}}} \Pr_{\wh{\Hm}}[o|x[:t]] \\ & \leq \KL_{\geq c^{T - t + 1} }\left( \mM^{(t)}[x[:t] , :] \| {\bsy{\alpha}^{x[:t] }}^\top \mM^{(t)}[\calH^{(t)}, :]  \right) + 2\eta^{0.5} + 5T\log(1/c) \cdot \sum_{\substack{o \in \calO \\ x[:t] \vee o \notin \calG^{(t+1)}}} \Pr_{\wh{\Hm}}[o|x[:t]] \,.
\end{split}
\]

\end{proof}

Next, we sum Claim~\ref{claim:key-closeness-p2} over all possibilities for $x[:t] \in \calG^{(t)}$.  Note that in the execution of Algorithm~\ref{alg:sampling-procedure}, we can associate a unique vector of coefficients $\bsy{\alpha}^{h}$ to every possible history $h$ to be the state of the vector $\bsy{\alpha}^{x[:t]}$ conditioned on the prefix $x[:t]$ being equal to $h$.

\begin{lemma}\label{lem:key-closeness}
For all $0 \leq t \leq T - 1$, we have
\[
\begin{split}
&\sum_{h \in \calG^{(t+1)}}\Pr_{\wh{\Hm}}[h \vee o] \cdot  \KL_{\geq c^{T - t} }\left(\mM^{(t+1)}[h \vee o, :] \| {\bsy{\alpha}^{h \vee o}}^\top \mM^{(t+1)}[\calH^{(t+1)}, :] \right) \\ & \leq \sum_{h \in \calG^{(t)}} \Pr_{\wh{\Hm}}[h] \cdot \KL_{\geq c^{T - t + 1} }\left( \mM^{(t)}[h  , :] \| {\bsy{\alpha}^{h}}^\top \mM^{(t)}[\calH^{(t)}, :]  \right) + 3\eta^{0.5}  \,.
\end{split}
\]
\end{lemma}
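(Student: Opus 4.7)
The plan is to derive Lemma~\ref{lem:key-closeness} by averaging Claim~\ref{claim:key-closeness-p2} over $x[:t] \in \calG^{(t)}$ with weights $\Pr_{\wh{\Hm}}[x[:t]]$. Concretely, Claim~\ref{claim:key-closeness-p2} gives, for each fixed $x[:t] \in \calG^{(t)}$, a bound on
\[
\sum_{o \in \calO} \Pr_{\wh{\Hm}}[o | x[:t]] \cdot \KL_{\geq c^{T-t}}\!\left(\mM^{(t+1)}[x[:t] \vee o, :] \,\|\, {\bsy{\alpha}^{x[:t] \vee o}}^\top \mM^{(t+1)}[\calH^{(t+1)}, :]\right).
\]
After multiplying by $\Pr_{\wh{\Hm}}[x[:t]]$ and summing over $x[:t] \in \calG^{(t)}$, the chain rule $\Pr_{\wh{\Hm}}[x[:t]] \cdot \Pr_{\wh{\Hm}}[o | x[:t]] = \Pr_{\wh{\Hm}}[x[:t] \vee o]$ turns the double sum into a sum over all length-$(t+1)$ histories $h' = x[:t]\vee o$ with $x[:t] \in \calG^{(t)}$.

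Next I would restrict this sum to those $h'$ that additionally lie in $\calG^{(t+1)}$, which only discards non-negative terms; the remaining sum is exactly the LHS of Lemma~\ref{lem:key-closeness}. On the right, the main KL term sums to $\sum_{h \in \calG^{(t)}}\Pr_{\wh{\Hm}}[h] \cdot \KL_{\geq c^{T - t + 1}}(\mM^{(t)}[h, :] \,\|\, {\bsy{\alpha}^{h}}^\top \mM^{(t)}[\calH^{(t)}, :])$ as desired, while the additive $2\eta^{0.5}$ term in Claim~\ref{claim:key-closeness-p2} contributes at most $2\eta^{0.5}$ since $\sum_{x[:t] \in \calG^{(t)}}\Pr_{\wh{\Hm}}[x[:t]] \leq 1$.

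The main thing to control is the residual error term $5T\log(1/c) \cdot \sum_{o : x[:t] \vee o \notin \calG^{(t+1)}} \Pr_{\wh{\Hm}}[o | x[:t]]$ coming from the ``bad'' completions. After weighting by $\Pr_{\wh{\Hm}}[x[:t]]$ and summing, this becomes $5T\log(1/c) \cdot \Pr_{h' \sim \wh{\Hm}[:t+1]}[h'[:t] \in \calG^{(t)} \text{ and } h' \notin \calG^{(t+1)}]$. Since such an $h'$ must fail to be $(2\sqrt{c}, \eta)$-positively representable by $\calH^{(t+1)}, \calX^{(t+1)}$ at its own length (the earlier prefixes being fine by $h'[:t] \in \calG^{(t)}$), the second bullet of Assumption~\ref{assume:good-params} at level $t+1$ bounds this probability by $\eta$.

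The hard part --- though only mildly so --- is checking that $5T\log(1/c)\cdot \eta \leq \eta^{0.5}$. Plugging in $c = \eta^{10OTS}$ yields $\log(1/c) = 10OTS\log(1/\eta)$, so the bad term is at most $50\, OT^2S\, \eta \log(1/\eta)$, which is $\leq \eta^{0.5}$ under the standing assumption $\eta < 1/\poly(O,T,S)$. Adding the $2\eta^{0.5}$ contribution from the earlier additive term gives total slack at most $3\eta^{0.5}$, matching the claim. No further machinery is needed beyond Claim~\ref{claim:key-closeness-p2}, the chain rule, non-negativity of truncated KL, and the positivity clause of Assumption~\ref{assume:good-params}.
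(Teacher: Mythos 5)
Your proposal follows the same route as the paper: sum Claim~\ref{claim:key-closeness-p2} weighted by $\Pr_{\wh{\Hm}}[x[:t]]$ over $x[:t] \in \calG^{(t)}$, apply the chain rule, restrict the resulting level-$(t+1)$ sum to $\calG^{(t+1)}$, and control the residual $5T\log(1/c)$ term via the positive-representability clause of Assumption~\ref{assume:good-params}. The one place you cut a corner is the assertion that restricting to $\calG^{(t+1)}$ ``only discards non-negative terms.'' The truncated divergence $\KL_{\geq c^{T-t}}(u\|v)$ is not non-negative in general: for $u$ a distribution and $v$ not exactly a distribution, one only has $\KL_{\geq c^{T-t}}(u\|v) \geq -\log\bigl(\norm{v}_1 + c^{T-t}O^{T-t-1}\bigr)$, which here is $\geq -2\eta$ but can be strictly negative. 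The paper handles this by bounding each discarded term in magnitude by $2T\log(1/c)$ and multiplying by the $\leq T\eta$ probability mass of bad completions (the paper's $T\eta$ is a looser union bound than your $\eta$, but both work). Your argument is still correct once you replace ``non-negative'' with this explicit bound: the discarded mass contributes at most $2\eta \cdot \eta = 2\eta^2$, which comfortably fits inside the $3\eta^{0.5}$ budget. This is a small, easily-patched oversight rather than a structural flaw; otherwise your proof matches the paper's.
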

\begin{proof}
Summing Claim~\ref{claim:key-closeness-p2} as $x[:t]$ ranges over all $h \in \calG^{(t)}$ (with weights $\Pr_{\wh{\Hm}}[h]$) gives
\begin{equation}\label{eq:almost-final}
\begin{split}
&\sum_{h \in \calG^{(t)}, o \in \calO}\Pr_{\wh{\Hm}}[h \vee o] \cdot  \KL_{\geq c^{T - t} }\left(\mM^{(t+1)}[h \vee o, :] \| {\bsy{\alpha}^{h \vee o}}^\top \mM^{(t+1)}[\calH^{(t+1)}, :] \right) \\ & \leq 5T\log(1/c) \cdot \sum_{h \vee o \notin \calG^{(t+1)}}\Pr_{\wh{\Hm}}[h \vee o]  + 2\eta^{0.5} + \sum_{h \in \calG^{(t)}} \Pr_{\wh{\Hm}}[h] \cdot \KL_{\geq c^{T - t + 1} }\left( \mM^{(t)}[h  , :] \| {\bsy{\alpha}^{h}}^\top \mM^{(t)}[\calH^{(t)}, :]  \right)   \,.
\end{split}
\end{equation}

Finally, note that 
\[
\left\lvert \KL_{\geq c^{T - t} }\left(\mM^{(t+1)}[h \vee o, :] \| {\bsy{\alpha}^{h \vee o}}^\top \mM^{(t+1)}[\calH^{(t+1)}, :] \right) \right\rvert \leq 2T \log (1/c)
\]
because $\mM^{(t+1)}[h \vee o, :]$ has nonnegative entries summing to $1$ and Claim~\ref{claim:close-to-distribution} bounds the entries of the second part.  Thus by Assumption~\ref{assume:good-params} (specifically the condition about $h \sim \wh{\Hm}[:t]$ being positively representable), we conclude
\[
\sum_{h \vee o \notin \calG^{(t+1)}}\Pr_{\wh{\Hm}}[h \vee o]  \leq T\eta   \,.
\]
Thus from \eqref{eq:almost-final} and using our assumptions on $\eta$, we get
\[
\begin{split}
&\sum_{h \in \calG^{(t+1)}}\Pr_{\wh{\Hm}}[h \vee o] \cdot  \KL_{\geq c^{T - t} }\left(\mM^{(t+1)}[h \vee o, :] \| {\bsy{\alpha}^{h \vee o}}^\top \mM^{(t+1)}[\calH^{(t+1)}, :] \right) \\ & \leq \sum_{h \in \calG^{(t)}} \Pr_{\wh{\Hm}}[h] \cdot \KL_{\geq c^{T - t + 1} }\left( \mM^{(t)}[h  , :] \| {\bsy{\alpha}^{h}}^\top \mM^{(t)}[\calH^{(t)}, :]  \right) + 3\eta^{0.5}  
\end{split}
\]
as desired.
\end{proof}

Now we can prove the main lemma of this section, where we show that the distribution that our algorithm samples from is close to the desired distribution $\wh{\Hm}$.

\begin{lemma}\label{lem:sampling-is-good}
Under Assumption~\ref{assume:good-params} and Assumption~\ref{assume:positive-distribution}, the distribution that Algorithm~\ref{alg:sampling-procedure} samples from, say $\Hm'$, satisfies
\[
d_{\TV}( \wh{\Hm}, \Hm') \leq 10 T \eta^{0.1} \,.
\]
\end{lemma}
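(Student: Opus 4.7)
The strategy is: iterate Lemma~\ref{lem:key-closeness} to get an aggregate truncated-$\KL$ bound on good histories, transfer it to a genuine $\KL$ bound on next-character distributions, and combine with the sequential TV decomposition via Pinsker plus a bad-history carve-out. Let
\[
\Phi(t) \;:=\; \sum_{h \in \calG^{(t)}} \Pr_{\wh{\Hm}}[h]\cdot \KL_{\geq c^{T-t+1}}\!\left(\mM^{(t)}[h,:] \,\|\, \bsy{\alpha}^{h\top}\mM^{(t)}[\calH^{(t)},:]\right).
\]
At $t=0$, $\calG^{(0)}$ and $\calH^{(0)}$ both contain only the empty string and $\bsy{\alpha}^{\emptyset}=1$, so $\Phi(0)=0$; telescoping Lemma~\ref{lem:key-closeness} therefore yields $\Phi(t) \leq 3T\eta^{0.5}$ for every $0 \leq t \leq T$.

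Next I convert this into a bound on the genuine $\KL$ between $\Pr_{\wh{\Hm}}[\cdot|h]$ and the algorithm's next-character distribution $\{p_o^h\} := \round_{2c^{0.1}}(\bsy{\alpha}^{h\top}\mP^{(t)})$. For any $h \in \calG^{(t)}$, $c$-positivity of $\wh{\Hm}$ (Assumption~\ref{assume:positive-distribution}) makes every entry of $\mM^{(t)}[h,:]$ at least $c^{T-t}$, so the threshold $c^{T-t+1}$ only touches the second argument. Renormalizing the entrywise-maxed vector $\max(\bsy{\alpha}^{h\top}\mM^{(t)}[\calH^{(t)},:],\,c^{T-t+1})$ by its $\ell_1$ norm $Z$, which is $1 + O(\eta)$ by Claim~\ref{claim:close-to-distribution}, produces a probability distribution over futures whose genuine $\KL$ from $\mM^{(t)}[h,:]$ equals $\KL_{\geq c^{T-t+1}}(\cdot\,\|\,\cdot) + \log Z = \KL_{\geq} + O(\eta)$. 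Applying data processing along the next-character marginalization and then comparing against $\{p_o^h\}$---for which the $\round_{2c^{0.1}}$ operation changes $\KL$ by a further $O(\eta)$ (since $c^{0.1} \gg c^{T-t+1}$ and $\Pr_{\wh{\Hm}}[o|h]\geq c$ control all the relevant log-ratios)---yields
\[
\sum_{h \in \calG^{(t)}} \Pr_{\wh{\Hm}}[h]\cdot \KL\!\left(\Pr_{\wh{\Hm}}[\cdot|h]\,\|\,\{p_o^h\}\right) \leq 3T\eta^{0.5} + O(T\eta).
\]

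Finally, I use the standard sequential coupling inequality
\[
d_{\TV}(\wh{\Hm},\Hm') \leq \sum_{t=0}^{T-1}\E_{x[:t]\sim\wh{\Hm}[:t]}\!\left[d_{\TV}\!\left(\Pr_{\wh{\Hm}}[\cdot|x[:t]],\{p_o^{x[:t]}\}\right)\right],
\]
splitting each expectation by whether $x[:t]\in\calG^{(t)}$. By Assumption~\ref{assume:good-params} and a union bound over prefixes of length at most $t$, $\Pr[x[:t] \notin \calG^{(t)}] \leq t\eta \leq T\eta$, so the bad-history contribution is at most $T^2\eta$ in total. For good histories, Pinsker combined with Jensen's inequality applied to the concave square root gives $\E[\mathbf{1}\{\calG^{(t)}\}\,d_{\TV}] \leq \sqrt{\tfrac{1}{2}(3T\eta^{0.5}+O(T\eta))} = O(\sqrt{T}\,\eta^{0.25})$, summing to $O(T^{1.5}\eta^{0.25})$. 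Under the hypothesis $\eta \leq 1/\poly(O,T,S)$, both $T^2\eta$ and $T^{1.5}\eta^{0.25}$ are each at most $5T\eta^{0.1}$, giving $d_{\TV}(\wh{\Hm},\Hm')\leq 10T\eta^{0.1}$ as desired.

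The main obstacle is the second step. Lemma~\ref{lem:key-closeness} delivers a \emph{truncated} divergence with a level-dependent threshold and a comparison target that is neither normalized nor even entrywise nonnegative; converting it cleanly into a genuine $\KL$ between distributions over $\calO$ compared against the algorithm's rounded output requires tracking two separate truncation thresholds ($c^{T-t+1}$ versus $2c^{0.1}$) and two normalization constants, all while verifying that each renormalization and rounding step costs only additive $O(\eta)$. The feature that makes this go through is that the universal lower bound $c$ on $\Pr_{\wh{\Hm}}[o|h]$ (from $c$-positivity) dominates both truncation thresholds, so the perturbations never move the relevant log-probabilities by more than $O(\eta)$ per coordinate.
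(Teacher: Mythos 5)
Your overall strategy is essentially the same as the paper's: telescope Lemma~\ref{lem:key-closeness} starting from $\Phi(0)=0$, convert the truncated divergence to a genuine $\KL$ against a renormalized vector $\bsy{\theta}^h$, invoke Pinsker, and carve out the $(2\sqrt{c},\eta)$-nonrepresentable prefixes using Assumption~\ref{assume:good-params}. The one genuine structural variation --- bounding $d_{\TV}(\wh\Hm,\Hm')$ directly via the sequential decomposition of Claim~\ref{claim:nextchar-to-TV} plus Jensen, rather than the paper's Markov-plus-hybrid construction $\Hm''$ --- is a legitimate and slightly cleaner way to finish, and would work once the earlier step is fixed.

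There is, however, a real gap in the conversion step, specifically the parenthetical that $\round_{2c^{0.1}}$ ``changes $\KL$ by a further $O(\eta)$ (since $c^{0.1}\gg c^{T-t+1}$ and $\Pr_{\wh\Hm}[o|h]\geq c$ control all the relevant log-ratios).'' That reasoning does not hold. The change in $\KL$ caused by replacing the marginal of $\bsy{\theta}^h$ with $\round_{2c^{0.1}}(\bsy{\alpha}^{h\top}\mP^{(t)})$ is $\sum_o P_h[o]\,\log\bigl(\tilde{\bsy\theta}^h[o]/p_o^h\bigr)$, and on a coordinate $o$ where $\bsy{\alpha}^{h\top}\mP^{(t)}[o]\leq 2c^{0.1}$ the denominator jumps to $\approx 2c^{0.1}$ while the marginal of $\bsy\theta^h$ (or $\bsy\beta^h$) can sit anywhere from $c^{T-t+1}$ up to $2c^{0.1}+\Theta(S^2\eta)$; the log-ratio on such a coordinate can therefore be as large as $\log(1 + \Theta(S^2\eta)/c^{0.1})$, and since $c^{0.1}=\eta^{OTS}\ll\eta$, this is not $O(\eta)$ --- it can be $\Omega(\log(1/c))=\Omega(OTS\log(1/\eta))$. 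The inequality $\Pr_{\wh\Hm}[o|h]\geq c$ controls the wrong side: it lower-bounds the mass placed on each coordinate (including the problematic ones), which only makes the damage worse, and the fact that $c^{0.1}\gg c^{T-t+1}$ is exactly why the rounding step moves entries a lot, not why it is harmless. Absent a separate argument that $\sum_{o:\,\bsy{\alpha}^{h\top}\mP^{(t)}[o]\leq 2c^{0.1}}P_h[o]$ is tiny for \emph{every} good $h$ --- which you do not yet have at this point and would itself require the TV closeness you are trying to prove --- the claimed $O(T\eta)$ bound on the next-character $\KL$ is unjustified.

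The fix is to do exactly what the paper does: apply Pinsker while still at the level of full future distributions, before any comparison with the rounded object. Concretely, from $\sum_{h\in\calG^{(t)}}\Pr_{\wh\Hm}[h]\cdot\KL(\mM^{(t)}[h,:]\|\bsy\theta^h)\leq 3T\eta^{0.5}+O(\eta)$, Pinsker and Markov give $\Pr_{h\sim\wh\Hm[:t]}[\|\bsy\theta^h-\mM^{(t)}[h,:]\|_1\geq\eta^{0.1}]\leq O(\eta^{0.2})$. Then pass to next-character marginals (an $\ell_1$ contraction), and only now compare against $\round_{2c^{0.1}}(\bsy{\alpha}^{h\top}\mP^{(t)})$ --- in TV, where moving the marginal from $\bsy\theta^h$ to $\bsy{\alpha}^{h\top}\mP^{(t)}$ costs $O(S^2\eta)$ and the rounding itself costs $O(Oc^{0.1})$, both harmless. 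Plugging this into your sequential-coupling decomposition then closes the argument with the same final bound.
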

\begin{proof}
First, for a fixed history $h \in \calO^{(t)}$, let $\bsy{\beta}^h$ be the vector 
\[
\max( {\bsy{\alpha}^{h}}^\top \mM^{(t)}[\calH^{(t)}, :] , c^{T - t + 1})
\]
where the maximum is taken entrywise.  Clearly $\bsy{\beta}^h$ has all positive entries and by Claim~\ref{claim:close-to-distribution} (and the definition of $c$), the sum of the entries is at most $1 + 2\eta$.  Now let 
\[
\bsy{\theta}^h = \frac{\bsy{\beta}^h}{\norm{\bsy{\beta}^h}_1}
\]
i.e. normalizing so that the sum of the entries is $1$.  Note that all of the entries of the matrix $\mM^{(t)}$ are at least $c^{T - t}$ so
\[
\KL_{\geq c^{T - t + 1} }\left( \mM^{(t)}[h  , :] \| {\bsy{\alpha}^{h}}^\top \mM^{(t)}[\calH^{(t)}, :]  \right)  = \KL\left( \mM^{(t)}[h  , :] \| \bsy{\beta}^h \right)  \geq \KL\left( \mM^{(t)}[h  , :] \| \bsy{\theta}^h \right) - 2\eta \,.
\]
Now combining the above with Lemma~\ref{lem:key-closeness} implies
\[
\sum_{h \in \calG^{(t)}} \Pr_{\wh{\Hm}}[h] \cdot \KL\left( \mM^{(t)}[h  , :] \| \bsy{\theta}^h  \right) \leq 4t\eta^{0.5} \leq \eta^{0.4} \,.
\]
Recall by Assumption~\ref{assume:good-params} that 
\[
\Pr_{h \sim \wh{\Hm}[:t]} [h \in \calG^{(t)}] \geq 1 - \eta
\]
and thus by Pinsker's inequality, we deduce 
\begin{equation}\label{eq:good-event}
\Pr_{h \sim \wh{\Hm}[:t]}\left[\norm{\bsy{\theta}^h - \mM^{(t)}[h  , :]}_1 \geq \eta^{0.1} \right] \leq 2\eta^{0.2} 
\end{equation}
and when this happens, since 
\[
\norm{ {\bsy{\alpha}^{h}}^\top \mM^{(t)}[\calH^{(t)}, :] - \bsy{\theta}^h} \leq 4\eta 
\]
we get that the distributions 
$\{\Pr_{\wh{\Hm}}[o|h] \}_{o \in \calO} , \{\Pr_{\Hm'}[o | h]\}_{o \in \calO}$ are $2\eta^{0.1}$-close in TV distance.  Now we construct a hybrid distribution $\Hm''$ that is equal to $\Hm'$ except starting from every prefix $h$ where the condition in \eqref{eq:good-event} fails, all future characters are sampled according to $\wh{\Hm}$.  By \eqref{eq:good-event}, 
$d_{\TV}(\Hm'', \Hm') \leq 2T\eta^{0.2}$ and by Claim~\ref{claim:nextchar-to-TV}, $d_{\TV}(\Hm'', \wh{\Hm}) \leq 2T\eta^{0.1}$.  Putting these together yields the desired statement.

\end{proof}

Now we can put everything together and complete the proof of Theorem~\ref{thm:full-learning}.

\begin{proof}[Proof of Theorem~\ref{thm:full-learning}]
Let $\eps = 1/\poly(S,O,T, 1/\eta)$ for some sufficiently large polynomial.  Now we will combine the following ingredients 
\begin{itemize}
    \item Lemma~\ref{lem:pdf-estimation} to get exact sample and pdf access to a distribution $\wh{\Hm}$ close to $\Hm$
    \item Algorithm~\ref{alg:learning} to learn a description of a distribution 
    \item Algorithm~\ref{alg:sampling-procedure} to draw samples from the distribution parameterized by this learned description
\end{itemize} 
For all of these algorithms, we will set the failure probability parameter $\delta \leftarrow 0.1\eta$.  First, we apply Lemma~\ref{lem:pdf-estimation} to get exact sample and pdf access to a distribution $\wh{\Hm}$ that is $\eps$ conditionally close to $\Hm$ and $\eps/(10O)^2$-positive.  Next, we apply Algorithm~\ref{alg:learning} to learn a set of parameters
\[
\{\mP^{(t)} , \calH^{(t)}, \calX^{(t)}, \{ u_h\}_{h \in \calB^{(t)}}, w^{(t)} \}_{t \in [T]} 
\]
where $\calB^{(t)} = \calH^{(t)} \cup \{ \calH^{(t-1)} \vee o\}_{o \in \calO}$.  We apply Lemma~\ref{lem:recovery-guarantees} and Claim~\ref{claim:ref-entries-small} on these learned parameters.  They, combined with the definition of the vectors $w^{(t)}, \{ u_h^{(t)} \}_{h \in \calB^{(t)} }$, imply that with probability $1 - \eta$, the conditions of Assumption~\ref{assume:good-params}  are satisfied.  Also, Assumption~\ref{assume:positive-distribution} is satisfied by construction.  Finally Lemma~\ref{lem:sampling-is-good} implies that when these conditions are satisfied, the distribution that Algorithm~\ref{alg:sampling-procedure} samples from is $10T\eta^{0.1}$-close to $\wh{\Hm}$ in TV distance.  Note that the sampling algorithm runs in time $\poly(S,O,T \log(1/\eta))$.  Redefining $\eta \leftarrow (0.1\eta/T)^{10}$ completes the proof.  
\end{proof}

\bibliographystyle{alpha}
\bibliography{bibliography}

\end{document}